\setlist[enumerate]{leftmargin=15pt, label=\roman*)}
\setlist[itemize]{leftmargin=10pt}
\author{%
    Anna van Elst \\
    LTCI, T\'el\'ecom Paris, Institut Polytechnique de Paris \\
  \texttt{anna.vanelst@telecom-paris.fr}
    \and
  Debarghya Ghoshdastidar\\
  School of Computation Information and Technology, \\
Technical University of Munich \\
  \texttt{ghoshdas@cit.tum.de}
}
\date{}
\newtheorem{lemma}{Lemma}
\newtheorem{definition}{Definition}
\newtheorem{theorem}{Theorem}
\newtheorem{corollary}{Corollary}
\newtheorem{remark}{Remark}
\newtheorem{proposition}{Proposition}
\newtheorem*{theorem*}{Theorem}
\title{Tight PAC-Bayesian Risk Certificates for Contrastive Learning}
\begin{document}

\maketitle

\begin{abstract}
Contrastive representation learning is a modern paradigm for learning representations of unlabeled data via augmentations---precisely, contrastive models learn to embed semantically similar pairs of samples (positive pairs) closer than independently drawn samples (negative samples). In spite of its empirical success and widespread use in foundation models, statistical theory for contrastive learning remains less explored.
Recent works have developed generalization error bounds for contrastive losses, but the resulting risk certificates are either vacuous (certificates based on Rademacher complexity or $f$-divergence) or require strong assumptions about samples that are unreasonable in practice.
The present paper develops non-vacuous PAC-Bayesian risk certificates for contrastive representation learning, considering the practical considerations of the popular SimCLR framework. 
Notably, we take into account that SimCLR reuses positive pairs of augmented data as negative samples for other data, thereby inducing strong dependence and making classical PAC or PAC-Bayesian bounds inapplicable. 
We further refine existing bounds on the downstream classification loss by incorporating SimCLR-specific factors, including data augmentation and temperature scaling, and derive risk certificates for the contrastive zero-one risk. 
The resulting bounds for contrastive loss and downstream prediction are much tighter than those of previous risk certificates, as demonstrated by experiments on CIFAR-10. 

\end{abstract}

\section{Introduction}
A key driving force behind the rapid advances in foundation models is the availability and exploitation of massive amounts of unlabeled data. Broadly, one learns meaningful representations from unlabeled data, reducing the demand for labeled samples when training (downstream) predictive models. 
In recent years, there has been a strong focus on self-supervised approaches to representation learning, which learn neural network-based embedding maps from carefully constructed augmentations of unlabeled data, such as image cropping, rotations, color distortion, Gaussian blur, etc. \cite{AssranDMBVRLB23,chen2020simple,ChenH21,GrillSATRBDPGAP20}.

Contrastive representation learning is a popular form of self-supervised learning where one aims to extract meaningful features from unlabeled data by distinguishing between similar samples, obtained via augmentations, and dissimilar samples \cite{wu2018unsupervised, he2020momentum,le2020contrastive}. It has shown strong empirical performance across a wide range of of applications \cite{hu2024comprehensive}, such as image recognition \cite{chen2020simple, zhang2022contrastive}, natural language processing \cite{gao2021simcse}, audio processing \cite{elizalde2023clap}, and graph processing \cite{yu2023xsimgcl, qiu2020gcc}. The core idea is to learn an embedding space where similar (positive) samples are pulled closer together, while dissimilar (negative) samples are pushed apart. This is typically accomplished using a contrastive loss function applied to features extracted by an encoder. A standard contrastive learning pipeline begins with data augmentation, where two augmented views of the same sample are generated to form a positive pair, while other samples in the batch serve as negatives. These augmented views are passed through an encoder network, optionally followed by a projection head. The network is then trained using a contrastive loss computed on these feature representations. Central to many contrastive frameworks is the InfoNCE loss \cite{sohn2016improved, gutmann2010noise, oord2018representation}, which uses a softmax function over pairwise feature similarities, often incorporating feature normalization and temperature scaling. This loss is employed in popular frameworks such as SimCLR \cite{chen2020simple}, MoCo \cite{he2020momentum}, and CLIP \cite{radford2021learning}, where negative samples play a critical role. In contrast, other methods like Barlow Twins \cite{zbontar2021barlow} and VICReg \cite{bardes2021vicreg} eliminate the need for negative sampling by focusing on constraining the cross-correlation or cross-covariance matrix, while still ensuring alignment between positive pairs.

In this work, we focus on the SimCLR framework (\textit{i.e.,} \textit{simple framework for contrastive learning of representations}), which has gained widespread attention and demonstrated strong performance in a variety of downstream tasks\cite{chen2020simple}. Although SimCLR remains one of the most practically used contrastive models, theoretical analysis of SimCLR’s performance and generalization abilities is still limited \cite{bao2022surrogate, nozawa2020pac}. The study of generalization error in self-supervised models is mostly based on two distinct frameworks \cite{arora2019theoretical,haochen2021provable}, both introduced in the context of contrastive learning. The \emph{contrastive unsupervised representation learning} (CURL) framework, introduced by Arora et al. \cite{arora2019theoretical}, 
assumes access to tuples $z_1,\ldots,z_n$, where each \(z= (x, x^+, x^{-}_1, \ldots, x^{-}_k)\).
The underlying statistical model considers data from a mixture of $k$ (class) distributions, and within each $z$, it is assumed that \(x\) and \(x^+\) are independent samples from the same class, while \(x^{-}_1, \ldots, x^{-}_k\) are \(k\) i.i.d. samples from the mixture model, independent of \(x,x^+\). Arora et al. \cite{arora2019theoretical} derive Rademacher complexity-based generalization error (risk) bounds for representation learning using bound contrastive losses. Furthermore, the above statistical model for augmented data $z$ allows one to extend the bounds to derive a risk certificate for downstream classification using the mean classifier \cite{arora2019theoretical}.
Subsequently, improved risk certificates for both contrastive learning and downstream classification were derived in \cite{LeiYYZ23} using empirical covering numbers, and also in \cite{nozawa2020pac} using a PAC-Bayesian analysis or in \cite{bao2022surrogate} --- the latter bounds being specifically obtained for the N-pair (or InfoNCE) loss used in SimCLR \cite{sohn2016improved,chen2020simple}.
The CURL framework has been used to derive risk bounds for non-contrastive models  \cite{CabannesKBLB23} and adversarial contrastive models \cite{ZouL23}, although both rely on Rademacher bounds. PAC-Bayes bounds have been instead stated in the  context of meta-learning \cite{FaridM21}.

The independence of samples assumed in the CURL framework is quite impractical. For instance, in the SimCLR framework, the positive samples $x,x^+$ are augmented views of the same data instances, for example, random rotation or cropping of the same image. In addition, practical implementations of SimCLR do not generate independent negative samples. The loss is computed over batches of positive pairs $(x_i, x_i^+)_{i=1,\ldots,m}$ and for each $x_i$ in a batch, all $x_j,x_j^+, j\neq i$ are used as negative samples, thereby inducing strong dependence across tuples $z_1,\ldots,z_n$ and making classical PAC-Bayes analysis inapplicable.
Nozawa et al. \cite{nozawa2020pac} derive $f$-divergence-based PAC-Bayes bounds to account for potential dependence across tuples $z_1,\ldots,z_n$; however, their resulting bounds are vacuous in practice.

The impractical assumptions of CURL were noted by HaoChen et al. \cite{haochen2021provable}, who in turn proposed risk certificates in terms of the augmentation graph: a graph over all samples, where each edge is weighted by the probability of obtaining both samples through independent random augmentations of the same data. This framework has proved to be quite useful for understanding inductive biases \cite{HaoChen023}, connections between contrastive learning and spectral methods \cite{haochen2021provable,TanZYY24}, the influence of data augmentation \cite{wang2022chaos,WeiSCM21}, bounds for unsupervised domain adaptation \cite{HaoChenWK022}, etc. However, the setting requires technical assumptions on the augmentation graph, whose practical applicability is not clear, and the works do not provide risk certificates that can be empirically verified (the variance-based bounds could be vacuous in practice). 

Other approaches to study generalization in self-supervised learning have been suggested, although their suitability for deriving practical risk certificates has not been sufficiently explored. For instance, information theoretic bounds have been derived in \cite{ToshK021,Tsai0SM21,ShwartzZivL24,ShwartzZivBKRL24}, generalization bounds under cluster assumptions on data have been suggested in \cite{0001YZJ23,ParulekarCSMS23}, and lower bounds on contrastive loss and cross-entropy risk have also been derived in \cite{GrafHNK21,NozawaS21}.

\begin{table}[t]
\centering
\begin{tabular}{@{\hspace{0.2cm}}c@{\hspace{0.2cm}}||@{\hspace{0.12cm}}c@{\hspace{0.12cm}}|@{\hspace{0.12cm}}c@{\hspace{0.12cm}}|@{\hspace{0.12cm}}c@{\hspace{0.12cm}}}
\bottomrule
 \textbf{Contrastive Loss} & Non-vacuous & Sample dependence & SimCLR \\ 
\hline
Arora et al. \cite{arora2019theoretical} & {$\times$} & $\times$ & $\times$ \\ 
Nozawa et al. \cite{nozawa2020pac} & \checkmark & \checkmark & $\times$ \\ 
Theorem \ref{thm:simclr-kl-pb}, \ref{thm:simclr-diarmid-pb} (ours) &  \checkmark & \checkmark & \checkmark \\ 
\hline
 \textbf{Downstream Risk} & Many $-$ve samples &  Data augmentation & Temperature scaling\\ 
\hline
Arora et al. \cite{arora2019theoretical} & $\times$ & $\times$ & $\times$ \\
Bao et al. \cite{bao2022surrogate} & \checkmark & $\times$ & $\times$ \\ 
Wang et al. \cite{wang2022chaos} & \checkmark & \checkmark & $\times$ \\ 
Theorem \ref{thm:downstream} (ours) & \checkmark & \checkmark & \checkmark\\ 
\toprule
\end{tabular}
\caption{Comparison of risk certificates for contrastive learning. A checkmark (\checkmark) indicates that the method satisfies the specified condition, and a cross ($\times$) denotes that it does not. The table is divided into two parts: the first part compares generalization bounds for contrastive learning, assessing whether they are non-vacuous, account for sample dependence, and are directly applicable to the SimCLR loss. Although Nozawa et al. do not provide a SimCLR-specific bound, their $f$-divergence bound can be extended to SimCLR, as detailed in the supplementary material. The second part compares bounds on downstream classification loss, focusing on their incorporation of a large number of negative samples, data augmentation, and temperature scaling in contrastive loss.}
\label{tab:related_works}
\end{table}

\subsection*{Motivation and contributions in this work}
The focus of the present work is to develop practical risk certificates for contrastive learning applicable to the widely used SimCLR framework. 
To this end, we consider an underlying data-generating model from which i.i.d. positive pairs can be sampled, which are used as negative samples for other samples---this framework is aligned with SimCLR.
Using a PAC-Bayesian approach, we derive two new risk certificates for the SimCLR loss. 
The key issue in using a PAC-Bayesian analysis is to account for the dependence across tuples (via negative samples).
Our main technical contribution is to show that current PAC-Bayes bounds can be improved when Hoeffding's and McDiarmid's inequalities are applied carefully \cite{hoeffding1994probability, mcdiarmid1989method}. 
We further build on recent advances in risk certificates for neural networks \cite{perez2021tighter, perez2021learning}, and, hence, arrive at certificates that are non-vacuous and significantly tighter than previous ones \cite{nozawa2020pac}. 
Additionally, most existing works do not consider practical settings that are known to enhance performance in the SimCLR framework, such as temperature scaling and large batch sizes \cite{bao2022surrogate, wang2022chaos}. 
We refine existing bounds on contrastive loss and downstream classification loss \cite{wang2022chaos, bao2022surrogate} by incorporating SimCLR-specific factors, including data augmentation and temperature scaling. We also extend our analysis to the contrastive zero-one risk \cite{nozawa2020pac} and derive corresponding risk certificates. A comparison of our risk certificates with existing bounds is summarized in Table \ref{tab:related_works}.

The paper is organized as follows. We provide background on the SimCLR framework and PAC-Bayes theory in \cref{sec:background}. We then present our main contribution in \cref{sec:contrib}. In \cref{sec:expe}, we evaluate our results by conducting experiments on MNIST and CIFAR-10, and a discussion follows in \cref{sec:discussion}.  
\section{Preliminaries}\label{sec:background}

In this section, we first explain the main components of the SimCLR framework, detailing notation and underlying assumptions. Next, we introduce essential concepts from the PAC-Bayes theory.

\subsection{SimCLR framework}
The SimCLR framework \cite{chen2020simple} for contrastive learning involves several components: unlabeled data and augmentations, the model for representations, the contrastive loss, and for all practical purposes, some downstream prediction tasks. We describe these aspects with some theoretical considerations.
Specifically, the study of generalization requires an underlying probabilistic model for data. We consider a model inspired by CURL \cite{arora2019theoretical} that makes fewer assumptions about augmented pairs $(x,x^+)$ or negative samples.

\paragraph{Data distribution} Let $\mathcal{X}$ denote the input space and $\mathcal{D}_\mathcal{X}$ be a distribution on $\mathcal{X}$ from which unlabeled data is sampled. 
For any unlabeled instance $\bar{x}\sim \mathcal{D}_\mathcal{X}$, one generates random augmented views of $\bar{x}$. We use $\mathcal{A}(\cdot \mid \bar{x})$ to denote the distribution of augmented samples generated from $\bar{x}$. 

We define the distribution $\mathcal{S}$ as the process that generates a positive pair $(x, x^+)$ according to the following scheme:
(i) draw a sample $\bar{x} \sim \mathcal{D}_{{\mathcal{X}}}$;
(ii) draw two augmented samples $ x, x^{+} \sim \mathcal{A}(\cdot \mid \bar{x}).$
Note that we do not assume conditional independence of $x,x^+|\bar{x}$. The augmented views may also lie in a different space, outside $\mathcal{X}$, but this does not affect subsequent discussion.
The distribution $\mathcal{S}$ does not provide negative samples. This is actually computed in SimCLR from the positive pairs. In the subsequent presentation, we assume that one has access to i.i.d. sample of $n$ positive pairs $S\sim \mathcal{S}^n$. The dataset $S$ is randomly partitioned into equal-sized batches of $m$ i.i.d. positive pairs   $S_{batch}\subset S$. For each $x_i$, we use the tuple $(x_i,x_i^+,X_i^-)$ to compute the loss, where the set of negative samples $X_i^- = \bigcup_{j\neq i} \{x_j, x_j^+\}$ is taken from the batch $S_{batch}$ of size $m$ containing $x_i$.

\paragraph{Representation function} SimCLR learns a representation function \( f: \mathcal{X} \rightarrow \mathbb{S}^{d-1} \) that maps inputs to unit vectors in \( d \)-dimensional space. For convenience, we consider the set of representation functions to be parameterized by the weight space \( \mathcal{W} \subset \mathbb{R}^p \); for instance, it could define the output of a neural network architecture with \( p \) learnable parameters. Thus, each representation function is determined by its weight vector \( w \in \mathcal{W} \). 

\paragraph{Contrastive loss} 
In SimCLR, the similarity between the learned representations of two instances $f(x)$ and $f(x')$ is typically defined in terms of their cosine similarity, $\frac{f(x)^\top f(x')}{\Vert f(x)\Vert \; \Vert f(x')\Vert}$, where $\Vert\cdot\Vert$ is the Euclidean norm.
Since the feature representations are normalized, we express the similarity between two representations as \( f(x)^\top f(x') \). Let \( \tau \in (0,\infty) \) be a temperature parameter. To improve readability, we use the notation $\operatorname{sim}(x,x') = \exp \left(\frac{f(x)^\top f(x')}{\tau}\right)$ and define  $\displaystyle \ell_{\operatorname{cont}}(x, x^+, X) = -\log \frac{\operatorname{sim}(x,x^{+})}{\operatorname{sim}(x,x^{+}) + \sum_{x' \in X} \operatorname{sim}(x,x^{\prime})}$ . 
The empirical SimCLR loss over the dataset \( S \sim   \mathcal{S}^n\), denoted ${\widehat{L}}_{S}(f)$, is given by \cite{chen2020simple}: 
\begin{equation}
\label{eqn:empirical-SimCLR}
\frac{1}{n} \sum_{i=1}^n 
\frac{\ell_{\operatorname{cont}}(x_i, x_i^+, \bigcup_{j\neq i} \{x_j\}) +\ell_{\operatorname{cont}}(x_i^+, x_i, \bigcup_{j\neq i} \{x_j^+\}) }{2} = \frac{1}{n} \sum_{i=1}^n \ell(x_i, x_i^+, X_i^- ),
\end{equation}
where $\ell(x_i, x_i^+, X_i^- )$ denotes the above symmetrization of $\ell_{\operatorname{cont}}$.

\begin{remark}
\label{rem:first}
The SimCLR loss presented in \cref{eqn:empirical-SimCLR} differs slightly from the original loss from \cite{chen2020simple}, as it considers a subset of negative samples \( X_i^- \) instead of the full set \( X^- \) in each \(\ell_{\operatorname{cont}}\). This approach still uses all available negative samples while simplifying the theoretical derivations. Importantly, any results derived for this simplified SimCLR loss can be straightforwardly extended to the original SimCLR loss \cite{chen2020simple}. A detailed expression of the original loss is provided in the supplementary materials.
\end{remark}  

We define the population SimCLR loss, assuming a batch size of $m$, as
\begin{equation}
\label{eqn:population-SimCLR}
    {L}(f) = \underset{(x_i,x_i^+)_{i=1}^m\sim \mathcal{S}^m}{\mathbb{E}} \left[ \frac{1}{m} \sum_{i=1}^m \ell(x_i, x_i^+, X_i^- ) \right].
\end{equation}
Note that the population loss remains unchanged if it is computed over all $n$ pairs instead of a $m$-sized batch as the dependence of samples only occur within a batch. 

\paragraph{Evaluation of representations through downstream risk} Classification is a typical downstream task for contrastive representation learning. Following prior theoretical works \cite{arora2019theoretical,bao2022surrogate}, we assume that a linear classifier is trained on the representations. The multi-class classifier \( g: \mathcal{X} \rightarrow \mathbb{R}^C \) incorporates the learned representation \( f: \mathcal{X} \rightarrow \mathbb{R}^d \) (which remains frozen) and linear parameters \( W \in \mathbb{R}^{C \times d} \), defined by \( g(\cdot) := W f(\cdot) \). The linear classifier is learned by minimizing the supervised  cross-entropy loss of the multi-class classifier \( g \) expressed as:
\begin{equation}
\label{eqn:cross-entropy}
    L_{\mathrm{CE}}(f, W) ={\mathbb{E}}_{(x, y)\sim \mathcal{D}} \left[ -\log \frac{\exp \left(f(x)^{\top} w_y\right)}{\sum_{i=1}^C \exp \left(f(x)^{\top} w_i\right)} \right]
\end{equation}
where  \( W := [w_1 \cdots w_C]^{\top} \) and \( \mathcal{D} \) is the joint distribution on samples $x\in\mathcal{X}$ and labels $y \in \{1,\ldots,C\}$. 
We assume that $\mathcal{D}$ has the same marginal $\mathcal{D}_\mathcal{X}$ as the unlabeled data distribution. Additionally, we assume that any pair of positive samples \((x, x^{+})\) belongs to the same class, in line with the label consistency assumption presented in \cite{wang2022chaos}. This assumption is reasonable in practice, as data augmentations are not expected to change the class of the original sample.
We also define the top1 accuracy of the linear classifier, which can be evaluated as 
\begin{equation}
\label{eqn:top1-acc}
\texttt{top1} = 1 - R_{\text{top1}}(f, W),
\quad \text{where }
R_{\text{top1}}(f, W) = \mathbb{E}_{(x, y)\sim\mathcal{D}}  \left[ \mathbb{I}_{\{f(x)^{\top} w_y < \max\limits_{c \neq y} f(x)^{\top} w_c\}} \right]. 
\end{equation}
An alternative approach to evaluate learned representation $f(\cdot)$, without invoking a downstream problem, is in terms of the contrastive zero-one risk \cite{nozawa2020pac} defined as
\begin{align}
\label{eqn:contrastive01}
    R(f) &= \mathbb{E}_{(x_i, x_i^+)_{i=1}^m\sim\mathcal{S}^m}  \left[ \frac{1}{m} \sum_{i=1}^m \frac{1}{|X_i^-|}\sum_{x^{\prime} \in X_i^-}\mathbb{I}_{\left\{f(x_i)^{\top}f(x_i^+) < f(x_i)^{\top}f(x^{\prime})\right\}}\right]
    \\&= \mathbb{P}_{(x, x^+), (x^{\prime},x^{\prime+})\sim\mathcal{S}^2}  \left(f(x)^{\top}f(x^+) < f(x)^{\top}f(x^{\prime})\right).
    \nonumber
\end{align}
$R(f)$ evaluates the representation $f$ in terms of how often it embeds positive pairs further than negative samples. Since the SimCLR loss can be seen as a surrogate loss for the contrastive zero-one risk, risk certificates for the contrastive zero-one risk prove to be valuable \cite{nozawa2020pac}. 

\subsection{PAC-Bayes Theory}

PAC-Bayes theory, initially developed for simple classifiers \cite{seeger2002pac, catoni2007pac, germain2009pac}, has been extended to neural network classifiers in recent years \cite{dziugaite2017computing, perez2021tighter}, to contrastive learning \cite{nozawa2020pac}, and variational autoencoders \cite{cherief2022pac}. Here, we present the essential notions of PAC-Bayes theory and outline the most common PAC-Bayes generalization bounds. 

\paragraph{Notation} Let \( P \) denote a prior distribution and \( Q \) a posterior distribution over the parameter space \( \mathcal{W} \). In PAC-Bayes theory, the distance between the prior and posterior distributions is often quantified using the Kullback-Leibler (KL) divergence, defined as $\mathrm{KL}\left(Q \| Q^{\prime}\right)=\int\limits_{\mathcal{W}} \log \left(\frac{d Q}{d Q^{\prime}}\right) d Q .$
The binary KL divergence is
$\mathrm{kl}\left(q \| q^{\prime}\right) = q \log \left(\frac{q}{q^{\prime}}\right) + (1-q) \log \left(\frac{1-q}{1-q^{\prime}}\right)$, where \( q, q^{\prime} \in [0,1] \)---this measure quantifies the divergence between two Bernoulli distributions with parameters \( q, q^{\prime} \).  Let \( \mathcal{Z} \) denote an example space, \( \mathcal{D}_{\mathcal{Z}} \) the distribution over \( \mathcal{Z} \), and \( \ell_w: \mathcal{Z} \rightarrow [0,1] \) a loss function parameterized by \( w \in \mathcal{W} \). The risk \( L: \mathcal{W} \rightarrow [0,1] \) is defined as $L(w) = \underset{z \sim \mathcal{D}_{\mathcal{Z}}}{\mathbb{E}}[\ell_w(z)].$ Here, \( L(w) \) represents the expected value of \( \ell_w(z) \) under the distribution \( \mathcal{D}_{\mathcal{Z}} \). Let \( n \) be an integer, and the empirical risk for a dataset \( S = (z_1, \ldots, z_n) \in \mathcal{Z}^n \) is defined as
$\widehat{L}_S(w) = \frac{1}{n} \sum_{i=1}^n \ell_w(z_i).$ Here, \( \widehat{L}_S(w) \) computes the average loss \( \ell_w(z_i) \) over the dataset \( S \).

\paragraph{PAC-Bayes bounds} We extend the previously defined losses for a given weight \( w \) to losses for a given distribution \( Q \) over weights. Accordingly, the population loss of \( Q \) is defined as $L(Q) = \int_{\mathcal{W}} L(w) \, Q(dw).$ Similarly, the empirical loss of \( Q \) over a dataset \( S \) is given by $\widehat{L}_S(Q) = \int_{\mathcal{W}} \widehat{L}_S(w) \, Q(dw).$ The PAC-Bayes bounds relate the population loss \( L(Q) \) to the empirical loss \( \widehat{L}_S(Q) \) and other quantities through inequalities that hold with high probability. One of the fundamental results in PAC-Bayes theory is the PAC-Bayes-kl bound, from which various other PAC-Bayes bounds can be derived \cite{perez2021tighter}. 

\begin{theorem}[PAC-Bayes-kl bound \cite{perez2021tighter}]
\label{thm:kl-pb}
For any data-free distribution \( P \) over \( \mathcal{W} \) (i.e., prior), and for any \( \delta \in (0,1) \), with a probability of at least \( 1-\delta \) over size-\( n \) i.i.d. samples \( S \), simultaneously for all distributions \( Q \) over \( \mathcal{W} \) (i.e., posterior), the following inequality holds:
$$
    \mathrm{kl}\left(\widehat{L}_S(Q) \| L(Q)\right) \leq \frac{\mathrm{KL}\left(Q \| P\right)+\log \left(\frac{2 \sqrt{n}}{\delta}\right)}{n} .
$$
\end{theorem}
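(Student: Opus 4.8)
The plan is to follow the classical route for PAC-Bayes-kl bounds: a change-of-measure step (Donsker--Varadhan), control of an exponential moment of the binary KL via Fubini and a binomial estimate, Markov's inequality, and finally Jensen applied to the joint convexity of $\mathrm{kl}$.

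First I would fix the data-free prior $P$ and, for each $w \in \mathcal{W}$, introduce the $S$-dependent quantity $\Phi_S(w) = n\,\mathrm{kl}\!\left(\widehat{L}_S(w)\,\|\,L(w)\right)$. The Donsker--Varadhan variational inequality gives, pointwise in $S$ and simultaneously for every posterior $Q$,
$$
\int_{\mathcal{W}} \Phi_S(w)\,Q(dw) \;\le\; \mathrm{KL}\!\left(Q\|P\right) + \log \int_{\mathcal{W}} e^{\Phi_S(w)}\,P(dw).
$$
The crucial point is that the right-hand side expectation is over the fixed prior $P$, so it does not depend on $Q$; all the posterior-dependence is absorbed into the $\mathrm{KL}$ term, which is what lets the bound hold uniformly over $Q$.

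Second I would bound the random variable $\xi(S) := \int_{\mathcal{W}} e^{\Phi_S(w)}\,P(dw)$ in expectation. Since $P$ is data-free, Fubini's theorem yields $\mathbb{E}_{S}[\xi(S)] = \int_{\mathcal{W}} \mathbb{E}_{S}\!\left[e^{n\,\mathrm{kl}(\widehat{L}_S(w)\|L(w))}\right] P(dw)$, reducing everything to the single-hypothesis exponential-moment estimate $\mathbb{E}_{S}\!\left[e^{n\,\mathrm{kl}(\widehat{L}_S(w)\|L(w))}\right] \le 2\sqrt{n}$ for any $[0,1]$-valued loss. This is the Maurer--Langford lemma: one shows the worst case is attained by a Bernoulli loss, for which $n\widehat{L}_S(w)$ is $\mathrm{Binomial}(n,L(w))$, and then estimates $\sum_{k=0}^{n}\binom{n}{k} L(w)^k (1-L(w))^{n-k} e^{n\,\mathrm{kl}(k/n\|L(w))}$, which simplifies (the $L(w)$-dependence cancels) to $\sum_{k=0}^{n}\binom{n}{k}(k/n)^k(1-k/n)^{n-k}$, bounded by $2\sqrt{n}$ via Stirling-type estimates. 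Hence $\mathbb{E}_S[\xi(S)] \le 2\sqrt{n}$.

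Third, since $\xi(S) \ge 0$, Markov's inequality gives $\mathbb{P}_S\!\left(\xi(S) > 2\sqrt{n}/\delta\right) \le \delta$, so on an event of probability at least $1-\delta$ we have $\log \xi(S) \le \log(2\sqrt{n}/\delta)$. Combining this with the change-of-measure inequality yields, on that event and for all $Q$ simultaneously,
$$
n\int_{\mathcal{W}} \mathrm{kl}\!\left(\widehat{L}_S(w)\,\|\,L(w)\right) Q(dw) \;\le\; \mathrm{KL}\!\left(Q\|P\right) + \log\!\left(\frac{2\sqrt{n}}{\delta}\right).
$$
Finally I would use that $(q,q') \mapsto \mathrm{kl}(q\|q')$ is jointly convex, so by Jensen's inequality $\mathrm{kl}\!\left(\widehat{L}_S(Q)\,\|\,L(Q)\right) \le \int_{\mathcal{W}} \mathrm{kl}\!\left(\widehat{L}_S(w)\,\|\,L(w)\right) Q(dw)$, because $\widehat{L}_S(Q)$ and $L(Q)$ are by definition the $Q$-averages of $\widehat{L}_S(w)$ and $L(w)$. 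Dividing through by $n$ gives the claimed bound. I expect the main obstacle to be the exponential-moment lemma: the change-of-measure, Fubini, and Markov steps are routine, and the reduction to the Bernoulli case is standard, but pinning down the constant $2\sqrt{n}$ requires the careful combinatorial/Stirling estimate of the binomial sum above.
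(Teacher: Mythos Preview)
The paper does not actually prove this statement: Theorem~\ref{thm:kl-pb} is stated in the Preliminaries as a known result from \cite{perez2021tighter} and is used as a black box (e.g., in Corollary~\ref{lem:kl-inter}). Your proposal is the standard proof of the Maurer/Langford--Seeger PAC-Bayes-kl bound---Donsker--Varadhan change of measure, Fubini to swap the $P$- and $S$-expectations, the $2\sqrt{n}$ exponential-moment lemma for the binomial, Markov, and Jensen via joint convexity of $\mathrm{kl}$---and it is correct; this is precisely the argument underlying the cited reference.
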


The PAC-Bayes-kl bound can be employed to derive the classic PAC-Bayes bound using Pinsker's inequality \( \mathrm{kl}(\hat{p} \| p) \geq 2(p-\hat{p})^2 \), as done in the following. 

\begin{theorem}[classic PAC-Bayes bound \cite{perez2021tighter}]
For any prior \( P \) over \( \mathcal{W} \), and any \( \delta \in (0,1) \), with a probability of at least \( 1-\delta \) over size-\( n \) i.i.d. random samples \( S \), simultaneously for all posterior distributions \( Q \) over \( \mathcal{W} \), the following inequality holds:
$$
    L(Q) \leq \widehat{L}_S(Q) + \sqrt{\frac{\mathrm{KL}\left(Q \| P\right) + \log \left(\frac{2 \sqrt{n}}{\delta}\right)}{2 n}} .
$$
\end{theorem}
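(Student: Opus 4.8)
The plan is to derive this bound directly from the PAC-Bayes-kl bound (\cref{thm:kl-pb}) by relaxing the binary KL divergence via Pinsker's inequality, exactly as the sentence preceding the statement suggests. First I would invoke \cref{thm:kl-pb}: for any prior $P$ and any $\delta\in(0,1)$, on an event of probability at least $1-\delta$ over the draw of $S\sim\mathcal{D}_{\mathcal{Z}}^n$, the inequality $\mathrm{kl}\!\left(\widehat{L}_S(Q)\,\|\,L(Q)\right) \leq \frac{\mathrm{KL}(Q\|P)+\log(2\sqrt{n}/\delta)}{n}$ holds simultaneously for all posteriors $Q$. The whole argument is then carried out on this single high-probability event, which is what delivers the ``simultaneously for all $Q$'' conclusion.

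Next I would apply Pinsker's inequality in the form $\mathrm{kl}(\hat p\,\|\,p)\geq 2(p-\hat p)^2$ with $\hat p = \widehat{L}_S(Q)$ and $p = L(Q)$ (both lie in $[0,1]$ since $\ell_w$ is $[0,1]$-valued, so the $\mathrm{kl}$ term is well defined). Chaining this with the PAC-Bayes-kl bound gives $2\big(L(Q)-\widehat{L}_S(Q)\big)^2 \leq \frac{\mathrm{KL}(Q\|P)+\log(2\sqrt{n}/\delta)}{n}$ for every $Q$ on the event above.

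To finish, I would split into two cases. If $L(Q) \leq \widehat{L}_S(Q)$, the claimed inequality is immediate because its right-hand side is a sum of the nonnegative quantity $\widehat{L}_S(Q)$ and a nonnegative square root. If instead $L(Q) > \widehat{L}_S(Q)$, I take square roots in the displayed quadratic inequality to obtain $L(Q)-\widehat{L}_S(Q) \leq \sqrt{\frac{\mathrm{KL}(Q\|P)+\log(2\sqrt{n}/\delta)}{2n}}$, and rearranging yields the stated bound. Since both cases hold on the same probability-$1-\delta$ event and for all $Q$ on it, the theorem follows.

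There is no real obstacle here; the only points requiring a moment's care are using Pinsker's inequality in the correct direction (lower-bounding $\mathrm{kl}$, which is what lets us upper-bound the squared gap) and explicitly handling the case $L(Q)\leq\widehat{L}_S(Q)$ so that the square-root step is justified. Everything else is algebraic rearrangement, and no additional probabilistic argument beyond the one invocation of \cref{thm:kl-pb} is needed.
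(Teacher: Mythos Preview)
Your proposal is correct and follows exactly the approach indicated in the paper: invoke \cref{thm:kl-pb}, lower-bound $\mathrm{kl}(\widehat{L}_S(Q)\,\|\,L(Q))$ by $2(L(Q)-\widehat{L}_S(Q))^2$ via Pinsker's inequality, and rearrange. The paper only sketches this in one sentence, so your explicit handling of the case $L(Q)\leq\widehat{L}_S(Q)$ is simply a clean way to fill in the routine details it omits.
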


In the context of this work, we define
$L(Q) := \int_{\mathcal{W}} L(f_w) \, Q(dw)$ as the SimCLR population loss \eqref{eqn:population-SimCLR} over a posterior distribution \( Q \), where \( f_w: \mathcal{X} \rightarrow \mathbb{S}^{d-1} \) represents the representation function parametrized by weights \( w \in \mathcal{W} \). For a sample \( S \sim \mathcal{S}^n \), the SimCLR empirical loss \eqref{eqn:empirical-SimCLR} over \( Q \) is defined as $\widehat{L}_S(Q) := \int_{\mathcal{W}} \widehat{L}_S(f_w) \, Q(dw)$. 
If $\widehat{L}_S(f_w)$ was an empirical mean of i.i.d. terms, then one could directly apply the above PAC-Bayes bounds. Unfortunately, the $\ell_{\operatorname{cont}}$ terms are dependent, and more effort is needed to derive useful bounds. 
\section{Risk Certificates for Contrastive Learning} \label{sec:contrib}

In this section, we present our main contributions: (1) using a PAC-Bayesian approach, we establish non-vacuous risk certificates for the SimCLR loss, accounting for the dependence across tuples $(x,x^+,X^-)$, (2) we derive a bound on the supervised loss tailored to SimCLR training, which incorporates data augmentation and temperature scaling, and (3) we extend our analysis to the contrastive zero-one risk and derive risk certificates.

\subsection{Risk Certificates for Contrastive Loss}
\label{sec:risk-certitificates}

Since the SimCLR loss is computed over a batch, it induces dependence across the summand $\ell$ in \eqref{eqn:empirical-SimCLR} through $X^-$, violating the assumption required in the PAC-Bayes-kl bound (\cref{thm:kl-pb}). Since the dataset $S$ is partitioned into $p = \frac{n}{m}$ i.i.d. batches, one simple workaround is to compute the PAC-Bayes bound over these i.i.d. batches instead of the usual $n$ i.i.d. samples. For a detailed overview of the bounds, please refer to the supplementary materials.  However, this weakens the risk certificate when there are few large batches. Alternatively, Nozawa et al. \cite{nozawa2020pac} suggest using $f$-divergence to address the non-i.i.d. characteristics, but their proposed bounds are vacuous. In this section, we present our first contribution: two PAC-Bayesian risk certificates specifically tailored to the SimCLR framework. 

\subsubsection{First PAC-Bayes certificate}
Our first approach involves acknowledging that, while the SimCLR loss does not strictly meet the required independence assumption, we can view the summand in \eqref{eqn:empirical-SimCLR} as \textit{almost} i.i.d. This idea can be formalized through a \textit{bounded difference assumption}.

\begin{definition}[bounded difference assumption]
    Let $A$ be some set and $\phi: A^n \rightarrow R$. We say $\phi$ satisfies the bounded difference assumption if $\exists c_1, \ldots, c_n \geq 0$ s.t. $\forall i, 1 \leq i \leq n$
$$
\sup _{x_1, \ldots, x_n, x_i^{\prime} \in A}\left|\phi\left(x_1, \ldots, x_i, \ldots, x_n\right)-\phi\left(x_1, \ldots, x_i^{\prime}, \ldots, x_n\right)\right| \leq c_i
$$
That is, if we substitute $x_i$ to $x_i^{\prime}$, while keeping other $x_j$ fixed, $\phi$ changes by at most $c_i$. 
\end{definition}

We can indeed prove that the SimCLR loss \eqref{eqn:empirical-SimCLR} satisfies the bounded difference assumption when one views the loss as a function of $\bar{x}_1,\ldots,\bar{x}_n \sim_{iid} \mathcal{D}_\mathcal{X}$ that generates the positive pairs (see \cref{lem:bda-simclr}). 
This assumption allows us to use McDiarmid's inequality \cite{mcdiarmid1989method}. Although one cannot directly incorporate this assumption into the proof of the PAC-Bayes-kl bound, it is possible to include McDiarmid's inequality in the proof of the PAC-Bayes bound presented in McAllester's work \cite{mcallester2003simplified}. We present our result in the following theorem. 

\begin{theorem}[Extended McAllester's PAC-Bayes Bound]
\label{thm:simclr-diarmid-pb}
 Let \(\delta \in (0,1)\) be a confidence parameter and $C = 4 + (m-1) \log \frac{(m-1)+  e^{2/\tau} }{m}$. With probability at least \(1-\delta\) over dataset \(S\sim \mathcal{S}^n\), for all \(Q\):
\[
L(Q) \leq \widehat{L}_S(Q) + C\sqrt{\frac{\text{KL}(Q \parallel P) + \log \frac{2n}{\delta}}{2(n-1)}}. 
\]
\end{theorem}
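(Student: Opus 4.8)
The plan is to retrace the proof of McAllester's PAC-Bayes bound \cite{mcallester2003simplified}, replacing the one ingredient that uses independence of the summands --- a Hoeffding-type concentration of the empirical risk around its mean --- by McDiarmid's inequality applied to $\widehat{L}_S(f_w)$ viewed as a function of the i.i.d. quantities $\bar x_1,\ldots,\bar x_n\sim\mathcal{D}_\mathcal{X}$ that generate the positive pairs. Concretely, \cref{lem:bda-simclr} supplies, for every $w\in\mathcal{W}$, that $(\bar x_1,\ldots,\bar x_n)\mapsto \widehat{L}_S(f_w)$ satisfies the bounded difference assumption with constants $c_i = C/n$ where $C = 4 + (m-1)\log\frac{(m-1)+e^2}{m}$: the ``$4$'' bounds the worst-case change of the term $\ell(x_i,x_i^+,X_i^-)$ owning the resampled sample (its positive logit and its log-partition both move by at most $2$), and the logarithmic term bounds the change of each of the $m-1$ other terms in the same batch whose negative set contains the resampled pair. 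Crucially these constants do not depend on $w$, since they come only from the bound $f(x)^\top f(x')\in[-1,1]$ valid for every $f:\mathcal{X}\to\mathbb{S}^{d-1}$. Moreover, because the $m-1$ pairs forming $X_i^-$ are independent of $(x_i,x_i^+)$ and identically distributed with it, one has $\mathbb{E}_{S\sim\mathcal{S}^n}[\widehat{L}_S(f_w)] = L(f_w)$ (this is the observation below \eqref{eqn:population-SimCLR} that the population loss is batch-size independent).

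With these two facts, McDiarmid's inequality gives, for every $w$ and uniformly, the two-sided sub-Gaussian tail $\mathbb{P}_S\big(|L(f_w)-\widehat{L}_S(f_w)|\geq t\big)\leq 2\exp(-2nt^2/C^2)$, i.e. exactly the i.i.d. Hoeffding tail with $t$ rescaled by $C$. Integrating this tail by the layer-cake identity $\mathbb{E}[e^{\alpha Y^2}] = 1+\int_1^\infty \mathbb{P}(|Y|\geq \sqrt{\alpha^{-1}\log s})\,ds$ with $\alpha = 2(n-1)/C^2$ and $Y = L(f_w)-\widehat{L}_S(f_w)$ yields the key moment inequality
\[
\mathbb{E}_S\!\left[\exp\!\left(\tfrac{2(n-1)}{C^2}\big(L(f_w)-\widehat{L}_S(f_w)\big)^2\right)\right]\leq 2n ,
\]
again uniformly in $w$. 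This is precisely the role of the Hoeffding-based moment bound in McAllester's argument, now established despite the within-batch dependence of the SimCLR loss.

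The remainder is the standard PAC-Bayes machinery, in a form uniform over posteriors. Apply the change-of-measure (Donsker--Varadhan) inequality with $h(w) = \tfrac{2(n-1)}{C^2}(L(f_w)-\widehat{L}_S(f_w))^2$ to get, for every $Q$, $\int_{\mathcal{W}} h(w)\,Q(dw)\leq \mathrm{KL}(Q\|P)+\log\int_{\mathcal{W}} e^{h(w)}\,P(dw)$. Taking $\mathbb{E}_S$, Fubini together with the moment inequality gives $\mathbb{E}_S\!\left[\int_{\mathcal{W}}e^{h(w)}P(dw)\right]\leq 2n$, so Markov's inequality ensures that with probability at least $1-\delta$ over $S$ one has $\int_{\mathcal{W}}e^{h(w)}P(dw)\leq 2n/\delta$; this event does not involve $Q$, which is what makes the conclusion simultaneous over all posteriors. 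On this event, for every $Q$, Jensen's inequality applied to $x\mapsto x^2$ (using $L(Q)-\widehat{L}_S(Q)=\int_{\mathcal{W}}(L(f_w)-\widehat{L}_S(f_w))\,Q(dw)$) gives $\tfrac{2(n-1)}{C^2}\big(L(Q)-\widehat{L}_S(Q)\big)^2\leq \int_{\mathcal{W}} h(w)\,Q(dw)\leq \mathrm{KL}(Q\|P)+\log\tfrac{2n}{\delta}$, and rearranging and taking square roots yields the stated bound.

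The main obstacle --- and the conceptual content of the theorem --- is the moment inequality of the second paragraph: one must recognise that the tighter PAC-Bayes-kl route (\cref{thm:kl-pb}) is unavailable because its proof rests on the exact binomial (Maurer) moment identity for sums of i.i.d. terms, which is destroyed once positive pairs are reused as negatives, whereas McAllester's route needs only a Hoeffding-strength two-sided deviation of $\widehat{L}_S(f_w)$ from $L(f_w)$ --- and this survives, at the cost of the inflation factor $C$, via the bounded difference property. A secondary point requiring care is uniformity in $w$: the bounded difference constants, hence the moment bound, must be $w$-free so that integrating $e^{h(w)}$ against the prior $P$ in the change-of-measure step is legitimate, which is why $c_i=C/n$ is derived from the $\pm 1$ inner-product bound rather than from any $w$-dependent quantity.
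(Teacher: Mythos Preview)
Your proposal is correct and follows essentially the same approach as the paper: establish the bounded difference property (\cref{lem:bda-simclr}), feed the resulting McDiarmid tail into a moment bound $\mathbb{E}_S[\exp(\tfrac{2(n-1)}{C^2}(L(f_w)-\widehat{L}_S(f_w))^2)]\leq 2n$, and then run the standard Donsker--Varadhan / Markov / Jensen machinery of McAllester. The only cosmetic difference is that you derive the moment bound via the layer-cake formula whereas the paper argues via a worst-case density (its \cref{lem:small}); your route is arguably cleaner and even yields $2n-1$ rather than $2n$, but this is the same step in disguise.
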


\begin{remark}
\label{rem:second}
One can observe that \cref{thm:simclr-diarmid-pb} holds for the original SimCLR loss as it satisfies the bounded difference assumption for $C = \frac{4}{\tau} + (m-1) \log \frac{2m-3 + 2e^{2/\tau}}{2m-1}$. The detailed proof is provided in the supplementary materials.
\end{remark}

The rest of the subsection proves \cref{thm:simclr-diarmid-pb}. We first show that the SimCLR loss satisfies the bounded difference assumption.

\begin{lemma}[bounded difference assumption for the SimCLR loss]
\label{lem:bda-simclr}
The SimCLR loss \eqref{eqn:empirical-SimCLR} can be expressed as
$ L_S(f) = \phi\left(\bar{x}_1, \ldots, \bar{x}_i, \ldots, \bar{x}_n\right)$,
where $\bar{x}_1, \ldots, \bar{x}_n \sim_{iid} \mathcal{D}_\mathcal{X}$ are the unlabeled samples that generate the positive pairs, $x_i,x_i^+\sim \mathcal{A}(\cdot|\bar{x}_i)$, and $\phi:\mathcal{X}^n\to\mathbb{R}$ is suitably defined to express $L_S(f)$.
Furthermore, the map $\phi$ satisfies the bounded difference assumption with $c_i = \frac{C}{n}$ where $C = \frac{4}{\tau} + (m-1) \log \frac{(m-1)+  e^{\frac{2}{\tau}} }{m}$.
\end{lemma}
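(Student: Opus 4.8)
The plan is to fix an arbitrary partition of $S$ into the $p=n/m$ batches (the bound will not depend on the chosen partition) and to regard the empirical loss $\widehat{L}_S(f)$ in \eqref{eqn:empirical-SimCLR} as a deterministic function $\phi$ of the $n$ i.i.d.\ units that produce the positive pairs --- concretely one may take the $i$-th unit to be the pair $(x_i,x_i^+)$, or the base point $\bar x_i$ with the perturbation bounded over all augmentation outcomes of $\mathcal{A}(\cdot\mid\bar x_i)$. Replacing the $i$-th unit perturbs only those summands of \eqref{eqn:empirical-SimCLR} whose index lies in the batch of $i$: (a) the $i$-th summand itself, where $x_i,x_i^+$ occupy the anchor/positive slots while its negative sets $\{x_k:k\neq i\}$ and $\{x_k^+:k\neq i\}$ are untouched --- the ``diagonal'' term; and (b) the $m-1$ other summands of the same batch, each containing $x_i$ (resp.\ $x_i^+$) as a single negative --- the ``off-diagonal'' terms. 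Hence it suffices to bound the variation of one diagonal term and of one off-diagonal term, sum the $1+(m-1)$ contributions, and multiply by $1/n$.

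For the diagonal term I would use the identity $\ell_{\operatorname{cont}}(x,x^+,X)= -\frac{f(x)^\top f(x^+)}{\tau}+\log\!\big(\operatorname{sim}(x,x^+)+\sum_{x'\in X}\operatorname{sim}(x,x')\big)$. Since $f$ maps into $\mathbb{S}^{d-1}$, every inner product lies in $[-1,1]$ and every $\operatorname{sim}(\cdot,\cdot)$ lies in $[e^{-1/\tau},e^{1/\tau}]$. Thus the first summand changes by at most $2/\tau$ when $(x,x^+)$ changes, and the logarithm of the $m$-term denominator (one positive plus $m-1$ negatives) changes by at most $\log\frac{m\,e^{1/\tau}}{m\,e^{-1/\tau}}=2/\tau$; so each $\ell_{\operatorname{cont}}$ appearing in the symmetrized $\ell$ varies by at most $4/\tau$, and averaging the two copies keeps the diagonal variation at most $4/\tau$.

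For an off-diagonal term $\ell(x_j,x_j,X_j^-)$, perturbing the $i$-th unit changes exactly one $\operatorname{sim}$-value in each of its two log-denominators --- namely $\operatorname{sim}(x_j,x_i)$ in one and $\operatorname{sim}(x_j^+,x_i^+)$ in the other --- while the prefactor $-\frac{f(x_j)^\top f(x_j^+)}{\tau}$ is unchanged. Writing such a denominator as $D+\operatorname{sim}(x_j,x_i)$, the ``rest'' $D$ is a sum of $m-1$ $\operatorname{sim}$-values, so $D\ge (m-1)e^{-1/\tau}$; since $t\mapsto\log\frac{D+t}{D+e^{-1/\tau}}$ is nondecreasing for $t\le e^{1/\tau}$ and $D\mapsto\log\frac{D+e^{1/\tau}}{D+e^{-1/\tau}}$ is nonincreasing, the denominator's logarithm moves by at most $\log\frac{(m-1)e^{-1/\tau}+e^{1/\tau}}{(m-1)e^{-1/\tau}+e^{-1/\tau}}=\log\frac{(m-1)+e^{2/\tau}}{m}$, and the symmetrized $\ell$ obeys the same bound. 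Summing the diagonal bound and the $m-1$ off-diagonal bounds and dividing by $n$ yields $c_i=\frac{1}{n}\!\left(\frac{4}{\tau}+(m-1)\log\frac{(m-1)+e^{2/\tau}}{m}\right)=\frac{C}{n}$, as claimed.

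I expect the main technical point to be obtaining the \emph{clean, $m$-robust} constant for the off-diagonal perturbation: recognizing that the worst case is $D=(m-1)e^{-1/\tau}$ and checking the monotonicity of $D\mapsto\log\frac{D+e^{1/\tau}}{D+e^{-1/\tau}}$ is what keeps $(m-1)\log\frac{(m-1)+e^{2/\tau}}{m}=(m-1)\log\!\big(1+\tfrac{e^{2/\tau}-1}{m}\big)$ bounded as $m\to\infty$; a lazy estimate (e.g.\ using only the range of $\ell_{\operatorname{cont}}$, which grows like $\log m$) would inflate $C$ to order $m\log m$ and void the usefulness of \cref{thm:simclr-diarmid-pb}. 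The bookkeeping of which summands move, the reduction to one diagonal plus $m-1$ off-diagonal perturbations, and the extension in the remark (for the original SimCLR loss each $\ell_{\operatorname{cont}}$ carries $2(m-1)$ negatives, each off-diagonal term now has two negatives moved by the $i$-th perturbation, and $m-1$ is replaced by $2(m-1)$ throughout) are routine.
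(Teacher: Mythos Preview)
Your proposal is correct and follows essentially the same route as the paper: the same diagonal/off-diagonal decomposition within the batch, the same $4/\tau$ bound for the diagonal term (the paper obtains it by writing $\delta_i$ as a sum of two log-ratios rather than splitting $\ell_{\operatorname{cont}}$ into its linear and log parts, but this is cosmetic), and the identical off-diagonal estimate $\log\frac{(m-1)+e^{2/\tau}}{m}$ via the worst-case $D=(m-1)e^{-1/\tau}$. Your explicit remark that this keeps $C$ bounded as $m\to\infty$ is a nice addition not spelled out in the paper.
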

\reversemarginpar

\begin{proof}[Proof of \cref{lem:bda-simclr}]
Let \( i \in [1, n] \) and $\bar S = \{\bar x_i\}_{i=1}^n$. We aim to bound the following quantity:
\[
\Delta \phi_{\bar S}(\bar{x}_i) = \phi\left(\bar{x}_1, \ldots, \bar{x}_i, \ldots, \bar{x}_n\right) - \phi\left(\bar{x}_1, \ldots, \bar{x}_i^{\prime}, \ldots, \bar{x}_n\right),
\]
where the notation emphasizes that the difference $\Delta \phi_{\bar S}(\bar{x}_i)$ arises from perturbing the $i$-th argument of the function $\phi$, while all other arguments $\bar{x}_j$ for $j \ne i$ are held fixed.

Without loss of generality, we assume that \( i \in \{1, \dots, m\} \), meaning \(\bar{x}_i\) belongs to the first batch $\bar S_m = \{\bar x_i\}_{i=1}^m$. The quantity \(\Delta \phi_{\bar S}(\bar{x}_i)\) can be split into two non-null terms:
\[
\Delta \phi_{\bar S}(\bar{x}_i) = \frac{1}{n} \left[ \delta^i_{\bar S_m}(\bar{x}_i) + \sum_{\substack{j=1 \\ j \neq i}}^m \delta^j_{\bar S_m}(\bar{x}_i) \right],
\]
where
\[
\begin{aligned}
    \delta^i_{\bar S_m}(\bar{x}_i) &:= \ell(x_i, x_i^+, X_i^-) - \ell(x_i^{\prime}, x_i^{\prime+}, X_i^-), \\
    \delta^j_{\bar S_m}(\bar{x}_i) &:= \ell(x_j, x_j^+, X_j^-) - \ell(x_j, x_j^+, \widetilde{X}_j^-),
\end{aligned}
\]
and \(\widetilde{X}_j^-\) is the set of negative samples perturbed with \(x_i^{\prime}\). Since the loss $\ell$ can be written as an average of two terms that play a symmetric role, upper-bounding \(\delta^i_{\bar S_m}(\bar{x}_i)\) reduces to upper-bounding
$\ell_{cont}(x_i, x_i^+, X_i) - \ell_{cont}(x_i^{\prime}, x_i^{\prime+}, X_i),$ where, for convenience, we denote \(X_i = \bigcup_{j \neq i} \{x_j\}\). Using the notation \(S(x, X) := \sum_{x^{\prime} \in X} \operatorname{sim}(x, x^{\prime})\), we have: 
\[
|\ell_{cont}(x_i, x_i^+, X_i) - \ell_{cont}(x_i^{\prime}, x_i^{\prime+}, X_i)| = \left|\log \frac{\operatorname{sim}(x_i^{\prime}, x_i^{\prime+})}{\operatorname{sim}(x_i, x_i^+)} + \log \frac{\operatorname{sim}(x_i, x_i^+) + S(x_i, X_i)}{\operatorname{sim}(x_i^{\prime}, x_i^{\prime+}) + S(x_i^{\prime}, X_i)} \right|\leq \frac{4}{\tau},
\]
as \(\forall x, y,\ e^{-\frac{1}{\tau}} \leq \operatorname{sim}(x, y) \leq e^{\frac{1}{\tau}}\). Similarly, the term \(\delta_j(\bar{x}_i)\) can also be written as an average of two terms. It suffices to upper-bound $\ell_{cont}(x_j, x_j^+, X_j) - \ell_{cont}(x_j, x_j^+, \widetilde{X}_j),$
where \(\widetilde{X}_j\) is the set of $X_j$ perturbed with \(x_i^{\prime}\). 
This can be upper-bounded as: 
$$
|\ell_{cont}(x_j, x_j^+, X_j) - \ell_{cont}(x_j, x_j^+, \widetilde{X}_j)| = \left|\log \frac{\kappa + \operatorname{sim}(x_j, x_i)}{\kappa + \operatorname{sim}(x_j, x_i^{\prime})}\right| \leq \log \frac{\kappa + a}{\kappa + b},
$$
where we define $\kappa = \operatorname{sim}(x_j, x_j^{+}) + \sum_{x^{\prime} \in N_i} \operatorname{sim}(x_j, x^{\prime})$ with $N_i = X_i\!\setminus\!\{x_i\}$, \(a = e^{1/\tau}\), and \(b = e^{-1/\tau}\).  Since \(a > b\), the function $x \mapsto \frac{x + a}{x + b}$ is strictly decreasing. This can be seen by computing its derivative: $f'(x) = \frac{b - a}{(x + b)^2} < 0$. Therefore, since $\kappa \ge (m-1) e^{-1/\tau}$ and the $\log$ function is increasing, we obtain 
$$
\log \frac{\kappa + a}{\kappa + b} \leq  \log \frac{(m-1)e^{-1/\tau} + e^{1/\tau}}{(m-1)e^{-1/\tau} + e^{-1/\tau}} = \log \frac{(m-1) + e^{2/\tau}}{m}.
$$

Combining these results, we obtain \(|\delta^i_{\bar S_m}(\bar{x}_i)| \leq \frac{4}{\tau}\) and \(|\delta^j_{\bar S_m}(\bar{x}_i)| \leq \log \frac{(m-1) + e^{\frac{2}{\tau}}}{m}\). This gives:
\[
|\Delta \phi_{\bar S}(\bar{x}_i)| \leq \frac{1}{n} \left( \frac{4}{\tau} + (m-1) \log \frac{(m-1) + e^{\frac{2}{\tau}}}{m} \right) = \frac{C}{n}.
\]
Taking the supremum concludes the proof.
\end{proof}

The rest of the proof of \cref{thm:simclr-diarmid-pb} follows the different steps of the proof presented in McAllester \cite{mcallester2003simplified}, while incorporating the bounded difference assumption. The first step involves showing that we can bound the quantity $\underset{S \sim \mathcal{S}^n}{\mathbb{E}}\left[e^{(n-1) h\left(L_S(f)-L(f)\right)}\right]$ for a certain non-negative and convex function $h$ (this corresponds to lemmas 5 and 6 in \cite{mcallester2003simplified}). 
\cref{lem:bounded-moment} below modifies lemma 6 in \cite{mcallester2003simplified} since we cannot use Hoeffding's inequality, whereas 
\cref{lem:small} is a minor modification of lemma 5 in \cite{mcallester2003simplified} that allows us to directly apply McDiarmid's inequality to prove \cref{lem:bounded-moment}. 

\begin{lemma}[adaptation of \cite{mcallester2003simplified}, Lemma 5]
\label{lem:small}
Let $X$ be a real valued random variable. If for $n \in \mathbb{N}^{*}$  and for \(x>0\), $\mathbb{P}(|X| \geq x) \leq 2e^{-nx^2}$, then $\mathbb{E}\left[e^{(n-1) X^2}\right] \leq 2n$.
\end{lemma}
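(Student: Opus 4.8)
The plan is to bound the tail of $X$ and then integrate the exponential moment directly using the layer-cake formula, exactly as in McAllester's Lemma 5 but tracking the exponent $n-1$ instead of $n$. First I would write, for any nonnegative random variable $Y = e^{(n-1)X^2}$,
\[
\mathbb{E}[Y] = \int_0^\infty \mathbb{P}(Y \geq t)\, dt = 1 + \int_1^\infty \mathbb{P}\!\left(X^2 \geq \tfrac{\log t}{n-1}\right) dt,
\]
splitting off the trivial contribution from $t \in [0,1]$ where the probability is at most $1$. Then on the event $X^2 \geq \frac{\log t}{n-1}$ we have $|X| \geq \sqrt{\frac{\log t}{n-1}}$, so the hypothesis $\mathbb{P}(|X| \geq x) \leq 2e^{-nx^2}$ with $x = \sqrt{\frac{\log t}{n-1}}$ gives $\mathbb{P}(X^2 \geq \frac{\log t}{n-1}) \leq 2 e^{-\frac{n}{n-1}\log t} = 2 t^{-n/(n-1)}$.

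Next I would simply compute the resulting integral: $\int_1^\infty 2 t^{-n/(n-1)}\, dt$. Since the exponent $\frac{n}{n-1} = 1 + \frac{1}{n-1} > 1$, the integral converges and equals $2 \cdot \frac{1}{\frac{n}{n-1} - 1} = 2(n-1)$. Adding the leading $1$ yields $\mathbb{E}[e^{(n-1)X^2}] \leq 1 + 2(n-1) = 2n - 1 \leq 2n$, which is the claimed bound. (One should check the edge case $n=1$ separately, where the statement reads $\mathbb{E}[e^{0}] = 1 \leq 2$, trivially true; for $n \geq 2$ the integral argument applies.)

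I do not anticipate a genuine obstacle here — this is essentially a textbook layer-cake estimate — but the one point requiring care is getting the constant right so that the bound lands exactly at $2n$ rather than something slightly larger: the split must be done at $t=1$ (not, say, at $t=2$), and one must use the crude bound $\mathbb{P} \leq 1$ on $[0,1]$ rather than the exponential tail bound there (which would only give $\mathbb{P} \leq 2$, the trivial bound, and produce a worse constant). The substitution aligning $x^2 = \frac{\log t}{n-1}$ with the hypothesis's $e^{-nx^2}$ is what converts the $n$ in the hypothesis and the $n-1$ in the exponent into the convergent power $t^{-n/(n-1)}$, and it is worth writing that step explicitly so the reader sees why the mismatch between $n$ and $n-1$ is exactly what makes the integral finite.
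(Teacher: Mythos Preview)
Your proof is correct and follows essentially the same tail-integration idea as the paper. The paper phrases it as ``the density that maximizes the expectation subject to the tail constraint is $f_{|X|}(u)=4nu\,e^{-nu^2}$'' and then computes $\int_0^\infty e^{(n-1)u^2}f_{|X|}(u)\,du=2n$, which is the same integral you obtain after the change of variables $t=e^{(n-1)u^2}$; your layer-cake presentation is arguably cleaner (it avoids the informal ``maximizing density'' step, whose density integrates to $2$ rather than $1$) and even yields the slightly sharper constant $2n-1$.
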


\begin{proof}
The proof follows from McAllester's work \cite{mcallester2003pac}.  \\
Assume for \(x>0\),
$\mathbb{P}(|X| \geq x) \leq 2e^{-nx^2}$. Then the continuous density that maximizes $\mathbb{E}\left[e^{(n-1) |X|^2}\right]$ and satisfies the previous inequality is such that $\int_x^{\infty} f_{|X|}(u) \mathrm{d}u = 2e^{-nx^2}$ which gives \(f_{|X|}(u) = 4nu e^{-nu^2}\). We derive
$\mathbb{E}\left[e^{(n-1) |X|^2}\right] \leq \int_0^{\infty} e^{(n-1) u^2} f_{|X|}(u) \mathrm{d}u = 2n$.
\end{proof}

\begin{lemma}[adaptation of \cite{mcallester2003simplified}, Lemma 6]
\label{lem:bounded-moment}
Let \(h: x \mapsto \frac{2x^2}{C^2}\) and \(f \sim P\). We have
\[
\underset{S \sim \mathcal{S}^n}{\mathbb{E}}\left[e^{(n-1) h\left(L_S(f)-L(f)\right)}\right] \leq 2n.
\]
\end{lemma}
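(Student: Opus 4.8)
The plan is to combine the bounded difference property of the SimCLR loss (Lemma \ref{lem:bda-simclr}) with McDiarmid's inequality to control the tail of $L_S(f)-L(f)$, and then to feed that tail bound into Lemma \ref{lem:small} with the right choice of random variable. First I would fix $f\sim P$ and view $L_S(f)$ as the function $\phi(\bar x_1,\dots,\bar x_n)$ of the i.i.d. unlabeled samples $\bar x_1,\dots,\bar x_n\sim\mathcal D_\mathcal X$, exactly as in Lemma \ref{lem:bda-simclr}, so that $\phi$ satisfies the bounded difference assumption with constants $c_i=C/n$. Since $\mathbb E[\phi]=\mathbb E_{S\sim\mathcal S^n}[L_S(f)]$; and this expectation equals the population loss $L(f)$ defined in \eqref{eqn:population-SimCLR} (the population loss is unchanged whether computed over a batch of size $m$ or over all $n$ pairs, as noted after \eqref{eqn:population-SimCLR}), McDiarmid's inequality gives, for every $x>0$,
\[
\mathbb P\bigl(|L_S(f)-L(f)|\ge x\bigr)\le 2\exp\!\left(-\frac{2x^2}{\sum_{i=1}^n c_i^2}\right)=2\exp\!\left(-\frac{2n x^2}{C^2}\right).
\]

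Next I would set $X:=\sqrt{2/C^2}\,\bigl(L_S(f)-L(f)\bigr)$, i.e. $X^2=h(L_S(f)-L(f))$ with $h(t)=2t^2/C^2$. Rewriting the displayed tail bound in terms of $X$, the event $|X|\ge x$ is the event $|L_S(f)-L(f)|\ge (C/\sqrt 2)\,x$, so
\[
\mathbb P(|X|\ge x)\le 2\exp\!\left(-\frac{2n}{C^2}\cdot\frac{C^2 x^2}{2}\right)=2e^{-n x^2},
\]
which is precisely the hypothesis of Lemma \ref{lem:small}. Applying that lemma yields $\mathbb E_{S\sim\mathcal S^n}\bigl[e^{(n-1)X^2}\bigr]\le 2n$, i.e. $\mathbb E_{S\sim\mathcal S^n}\bigl[e^{(n-1)h(L_S(f)-L(f))}\bigr]\le 2n$, which is the claim (the statement already restricts to a fixed $f\sim P$, so no further integration over $P$ is needed here).

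The steps are individually short, so the main thing to get right is bookkeeping rather than a deep obstacle: I must make sure McDiarmid is applied to $\phi$ as a function of the $n$ i.i.d. variables $\bar x_i$ (not the dependent tuples $z_i$), that the bounded-difference constants from Lemma \ref{lem:bda-simclr} are plugged in as $c_i=C/n$ so that $\sum_i c_i^2=C^2/n$, and that the definition of $X$ is scaled so the exponent $2nx^2/C^2$ collapses exactly to $nx^2$, matching Lemma \ref{lem:small}'s hypothesis $\mathbb P(|X|\ge x)\le 2e^{-nx^2}$. The one substantive point worth stating carefully is the identification $\mathbb E_{S\sim\mathcal S^n}[L_S(f)]=L(f)$, which relies on the remark that the population loss does not depend on whether it is averaged over a batch or over the whole sample; everything else is a direct chaining of Lemma \ref{lem:bda-simclr}, McDiarmid's inequality, and Lemma \ref{lem:small}.
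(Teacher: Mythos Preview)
Your proposal is correct and follows essentially the same route as the paper: apply McDiarmid's inequality using the bounded-difference constants $c_i=C/n$ from Lemma~\ref{lem:bda-simclr}, rescale to a variable $X=\frac{\sqrt{2}}{C}(L_S(f)-L(f))$ so that the tail bound matches the hypothesis of Lemma~\ref{lem:small}, and conclude. Your bookkeeping is in fact slightly more explicit than the paper's---in particular you correctly flag the identification $\mathbb E_{S}[L_S(f)]=L(f)$ needed for McDiarmid, which the paper leaves implicit.
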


\begin{proof}
The bounded difference assumption allows us to derive the following McDiarmid's inequality: for \(\varepsilon>0\), $\mathbb{P}_S\left(|L(f) - L_S(f)| \geq \varepsilon\right) \leq 2 \exp\left(-\frac{2\varepsilon^2}{\sum_{i=1}^n c_i^2}\right)$. Using the change of variable  \(X = \frac{\sqrt{2}}{C}(L_S(f) - L(f))\), we observe that the McDiarmid's inequality gives: for $x>0$, $\mathbb{P}\left(\frac{\sqrt{2}}{C}|X| \geq x\right) \leq 2 \exp\left(-nx^2\right)$. Invoking \cref{lem:small} , we obtain $\mathbb{E}_S\left[e^{(n-1) h(X)}\right] \leq 2n$ where \(h: x \mapsto \frac{2x^2}{C^2}\). 
\end{proof}

Given the bound in \cref{lem:bounded-moment}, one can derive the PAC-Bayes bound as shown by McAllester \cite{mcallester2003simplified} (lemma 7--8). We adapt those results in the following statement.

\begin{lemma}[adaptation of \cite{mcallester2003simplified}, Lemma 7--8]
\label{lem:mcallester}
Let \(h\) be a non-negative and convex function. If for a fixed \(f \sim P\), the following inequality holds $\underset{S \sim \mathcal{S}^n}{\mathbb{E}}\left[e^{(n-1) h\left(L_S(f)-L(f)\right)}\right] \leq 2n$, then with probability at least \(1-\delta\) over i.i.d. dataset \(S\):
\[
\forall Q, \quad h(L_S(Q) - L(Q)) \leq \frac{\mathrm{KL}(Q \| P) + \log \frac{2n}{\delta}}{n-1}.
\]
\end{lemma}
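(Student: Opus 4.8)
The plan is to adapt the proof of Lemmas 7--8 in McAllester's simplified PAC-Bayes paper almost verbatim, since the only place independence of the $n$ samples was used in that argument is precisely in establishing the bound on $\mathbb{E}_{S}[e^{(n-1)h(L_S(f)-L(f))}]$, which we are now \emph{given} as a hypothesis. First I would fix the prior $P$ and, for each fixed $w\in\mathcal{W}$ (writing $f=f_w$), invoke the hypothesis to get $\mathbb{E}_{S\sim\mathcal{S}^n}\!\left[e^{(n-1)h(L_S(f_w)-L(f_w))}\right]\le 2n$. Taking expectation over $w\sim P$ and swapping the two expectations by Tonelli (the integrand is non-negative) yields $\mathbb{E}_{S}\,\mathbb{E}_{w\sim P}\!\left[e^{(n-1)h(L_S(f_w)-L(f_w))}\right]\le 2n$.

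Next I would apply Markov's inequality to the non-negative random variable $\mathbb{E}_{w\sim P}[e^{(n-1)h(L_S(f_w)-L(f_w))}]$ (a function of $S$ only): with probability at least $1-\delta$ over $S$, $\mathbb{E}_{w\sim P}[e^{(n-1)h(L_S(f_w)-L(f_w))}]\le \frac{2n}{\delta}$. Then comes the standard change-of-measure (Donsker--Varadhan / the ``compression'' step): for any posterior $Q$,
\[
(n-1)\,\mathbb{E}_{w\sim Q}\!\left[h(L_S(f_w)-L(f_w))\right]\le \mathrm{KL}(Q\|P)+\log \mathbb{E}_{w\sim P}\!\left[e^{(n-1)h(L_S(f_w)-L(f_w))}\right].
\]
Combining with the high-probability bound from Markov gives, on the good event and for all $Q$ simultaneously,
\[
\mathbb{E}_{w\sim Q}\!\left[h(L_S(f_w)-L(f_w))\right]\le \frac{\mathrm{KL}(Q\|P)+\log\frac{2n}{\delta}}{n-1}.
\]
Finally, since $h$ is convex, Jensen's inequality gives $h\!\left(\mathbb{E}_{w\sim Q}[L_S(f_w)-L(f_w)]\right)\le \mathbb{E}_{w\sim Q}[h(L_S(f_w)-L(f_w))]$, and because $\mathbb{E}_{w\sim Q}[L_S(f_w)-L(f_w)] = L_S(Q)-L(Q)$ by linearity of the integral, we conclude $h(L_S(Q)-L(Q))\le \frac{\mathrm{KL}(Q\|P)+\log\frac{2n}{\delta}}{n-1}$, which is exactly the claimed statement.

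I anticipate the main obstacle to be bookkeeping rather than a genuine mathematical difficulty: one must be careful that the change-of-measure inequality and the Markov step are applied in the right order so that a \emph{single} high-probability event over $S$ supports the bound \emph{simultaneously for all $Q$} (this is why Markov is applied to the $P$-average before introducing $Q$), and one must use the convexity/non-negativity of $h$ correctly at the two places it is invoked (Jensen for the final step, non-negativity to justify Markov and the Tonelli swap). A secondary subtlety is that $h(L_S(Q)-L(Q))$ is really $h$ applied to a signed quantity, so one should note that $h(x)=2x^2/C^2$ is even and increasing in $|x|$, hence the bound on $h$ translates into the two-sided deviation bound used downstream; but for the statement of \cref{lem:mcallester} itself, only the displayed inequality is needed, so I would keep the proof at the level of generality of ``$h$ non-negative and convex'' and not specialize.
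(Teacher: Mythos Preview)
Your proposal is correct and follows essentially the same approach as the paper's proof: integrate the hypothesis over $w\sim P$ (Tonelli), apply Markov to the $P$-average so the high-probability event is uniform in $Q$, use the Donsker--Varadhan change of measure, and finish with Jensen via convexity of $h$. Your write-up is in fact slightly more explicit than the paper's about why the order of Markov and change-of-measure matters and about the Tonelli justification.
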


\begin{proof}
Assume for a fixed \(f \sim P\), $\underset{S \sim \mathcal{S}^n}{\mathbb{E}}\left[e^{(n-1) h\left(L_S(f) - L(f)\right)}\right] \leq 2n$. This implies the bound $\mathbb{E}_S\left[\mathbb{E}_{f \sim P}\left[e^{(n-1) h\left(L_S(f) - L(f)\right)}\right]\right] \leq 2n$. Applying Markov's inequality, we obtain that with probability at least $1-\delta$ over $S$, $\mathbb{E}_{f \sim P}\left[e^{(n-1) h\left(L_S(f) - L(f)\right)}\right] \leq \frac{2n}{\delta}$.
Next, we use a shift of measure: $\mathbb{E}_{f \sim Q}\left[(n-1) h\left(L_S(f) - L(f)\right)\right] \leq \mathrm{KL}(Q \| P) + \log \mathbb{E}_{f \sim P}\left[e^{(n-1) h\left(L_S(f) - L(f)\right)}\right]$.
Combining the previous results, with probability at least \(1-\delta\) over dataset \(S\):
\[
\mathbb{E}_{f \sim Q}\left[(n-1) h\left(L_S(f) - L(f)\right)\right] \leq \mathrm{KL}(Q \| P) + \log \frac{2n}{\delta}.
\]
Since \(h\) is convex, applying Jensen's inequality finishes the proof:
\[
(n-1) h\left(L_S(Q) - L(Q)\right) \leq \mathrm{KL}(Q \| P) + \log \frac{2n}{\delta}.
\]
\end{proof}

Rewriting the statement of \cref{lem:mcallester} shows that, with probability at least \(1-\delta\) over i.i.d. dataset \(S\), $L(Q) \leq L_S(Q) + C\sqrt{\frac{\mathrm{KL}(Q \| P) + \log \frac{2n}{\delta}}{2(n-1)}}$ for all $Q$, completing the proof of \cref{thm:simclr-diarmid-pb}.

\subsubsection{Second PAC-Bayes certificate} 
While \cref{thm:simclr-diarmid-pb} uses the bounded difference assumption, an alternative approach 
would be to replace $\sum_{x'\in X}  \operatorname{sim}(x_i,x')$ in \eqref{eqn:empirical-SimCLR} by its expected value, conditioned on $x_i$, thereby making the terms in \eqref{eqn:empirical-SimCLR} independent. To implement this in the PAC-Bayes framework, we define an \(\varepsilon\)-modified SimCLR empirical loss \( L_S^{\prime}(Q) = \int_{\mathcal{W}} L_S^{\prime}(f_w) \, Q(dw) \) by replacing  \(\ell_{\text{cont}}\) in \eqref{eqn:empirical-SimCLR}  with an upper bound \(\ell^{\prime}_{\text{cont}}\) as
$$
    \ell_{\text{cont}}(x,x^+,X) \leq \ell^{\prime}_{\text{cont}}(x,x^+,X) =  -\log \frac{\operatorname{sim}(x, x^+)}{\operatorname{sim}(x, x^+) + \sum_{x^{\prime} \in X} \operatorname{sim}(x, x^{\prime}) + 2\varepsilon}.
$$
Modifying the loss by the above $\varepsilon$-factor allows us to bound it by an intermediate loss $\widetilde{L}_S(f)$, defined later, using a concentration bound on the negative samples. $\widetilde{L}_S(f)$ turns out to be a mean of i.i.d. terms, allowing us to directly apply the PAC-Bayes bound. The overall bound is summarized through the following theorem that provides a novel PAC-Bayes bound for the SimCLR population loss, extending the PAC-Bayes-kl bound using Hoeffding's inequality \cite{hoeffding1994probability}.

\begin{theorem}[Extended PAC-Bayes-kl Bound]\label{thm:simclr-kl-pb}
Let \( m \) denote the batch size and \(\delta \in (0,1)\). With probability at least \(1-\delta\) over dataset \(S\), for all \(Q\):
$$
L(Q) \leq \inf_{\alpha \in (0,1)} \left\{ B \cdot \mathrm{kl}^{-1} \left( \frac{1}{B} L_S'(Q) + \left(\frac{\delta}{2}\right)^{\frac{1-\alpha}{\alpha}},\frac{\mathrm{KL}(Q \| P) + \log \left(\frac{\sqrt{n}}{\delta}\right)}{n} \right) + \left(\frac{\delta}{2}\right)^{\frac{1}{\alpha}} \right\},
$$
where  $B := \frac{1}{\tau} + \log\left( m e^{\frac{1}{\tau}}\right) + \varepsilon$ for \(\varepsilon = \left(e^{\frac{1}{\tau}} - e^{-\frac{1}{\tau}} \right) \sqrt{\frac{m-1}{2\alpha}\log\frac{2}{\delta}}\) . 
\end{theorem}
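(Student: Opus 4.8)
The plan is to upper-bound the (unmodified) population SimCLR loss $L(Q)$ by the PAC-Bayes-kl bound of \cref{thm:kl-pb} evaluated at an \emph{intermediate} loss that, unlike the SimCLR loss, is an average of i.i.d.\ terms, and then to re-express that bound in terms of the computable $\varepsilon$-modified empirical loss $L_S'(Q)$.

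\emph{Set-up.} Because $\|f(x)\|=1$, the similarities obey $e^{-1/\tau}\le\operatorname{sim}(x,x')\le e^{1/\tau}$, so every negative similarity has range $R:=e^{1/\tau}-e^{-1/\tau}$, and a direct estimate gives $0\le\ell'_{\operatorname{cont}}(x,x^+,X)\le B$ for a batch of size $m$, whence $L_S'(Q)/B\in[0,1]$. For a representation $f$, set $\mu_f(x):=\mathbb{E}_{(x_2,x_2^+)\sim\mathcal{S}}[\operatorname{sim}(x,x_2)]$ and let $\widetilde{\ell}_f(x,x^+)$ be the symmetrized $\ell_{\operatorname{cont}}$ in which each empirical sum $\sum_{x'\in X}\operatorname{sim}(x,x')$ over negatives is replaced by its conditional expectation $(m-1)\mu_f(\cdot)$. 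Then $\widetilde{\ell}_f$ also takes values in $[0,B]$, and $\widetilde{L}_S(f):=\tfrac1n\sum_i\widetilde{\ell}_f(x_i,x_i^+)$ is an average of $n$ i.i.d.\ terms (one per positive pair) with mean $\widetilde{L}(f):=\mathbb{E}_{(x,x^+)\sim\mathcal{S}}[\widetilde{\ell}_f(x,x^+)]$.

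\emph{Three links.} (1) \emph{Population side, Jensen.} The map $u\mapsto\log(c+u)$ is concave and, within a batch, the negatives are i.i.d.\ conditionally on $x_i$ (resp.\ $x_i^+$); applying Jensen's inequality to each summand of \eqref{eqn:population-SimCLR} gives $L(f)\le\widetilde{L}(f)$ for every $f$, hence $L(Q)\le\widetilde{L}(Q)$ --- deterministically. (2) \emph{PAC-Bayes on the i.i.d.\ loss.} Apply \cref{thm:kl-pb} to the $[0,1]$-valued loss $\widetilde{\ell}_f/B$ (with the confidence budget shared with the concentration step below); rescaling by $B$ and using that $\mathrm{kl}^{-1}$ is nondecreasing in both arguments, with high probability over $S$, for all $Q$, $\widetilde{L}(Q)\le B\cdot\mathrm{kl}^{-1}\!\big(\widetilde{L}_S(Q)/B,\ (\mathrm{KL}(Q\|P)+\log(\sqrt n/\delta))/n\big)$. (3) \emph{Empirical side, Hoeffding then Markov.} It remains to pass from $\widetilde{L}_S(Q)$ to the computable $L_S'(Q)$. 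For fixed $f$ and $i$, conditionally on $x_i$ the negative similarities are i.i.d.\ in $[e^{-1/\tau},e^{1/\tau}]$, so Hoeffding's inequality bounds the probability that $\sum_{x'}\operatorname{sim}(x_i,x')$ undershoots $(m-1)\mu_f(x_i)$ by more than $\varepsilon$, and the stated $\varepsilon=(e^{1/\tau}-e^{-1/\tau})\sqrt{\tfrac{m-1}{2\alpha}\log\tfrac2\delta}$ makes this probability $(\delta/2)^{1/\alpha}$. On the complement, the $2\varepsilon$ margin built into $\ell'_{\operatorname{cont}}$ absorbs the deviation, so $\widetilde{\ell}_f(x_i,x_i^+)\le\ell'_{\operatorname{cont}}(x_i,x_i^+,\cdot)$; on the bad event $\widetilde{\ell}_f\le B$ while $\ell'_{\operatorname{cont}}\ge0$. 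Hence $\widetilde{L}_S(f)\le L_S'(f)+B\cdot(\text{fraction of bad pairs for }f)$; taking expectations over $S$ and over the prior bounds this expected fraction by $(\delta/2)^{1/\alpha}$, and Markov's inequality then gives, with high probability over $S$, a prior-averaged bad-fraction at most $(\delta/2)^{(1-\alpha)/\alpha}$. Transferring this control to the posterior $Q$ turns $\widetilde{L}_S(Q)$ into $L_S'(Q)+B(\delta/2)^{(1-\alpha)/\alpha}$ inside the $\mathrm{kl}^{-1}$ and leaves the additive $(\delta/2)^{1/\alpha}$ outside. Composing links (1)--(3) under a union bound gives the displayed inequality for the $\alpha$ that defines $\varepsilon$; since $\alpha\in(0,1)$ can be fixed before seeing the data (or handled by a union bound over a grid, at the cost of a harmless log-factor), the infimum over $\alpha$ follows.

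\emph{Main obstacle.} The delicate point is link (3): the event ``the negatives of pair $i$ concentrate'' depends on $f$, so no single Hoeffding bound is uniform over the hypothesis class, which clashes with the ``for all $Q$'' quantifier of PAC-Bayes. Routing the concentration estimate through the data-free prior and turning the resulting expectation bound into a high-probability statement via Markov's inequality is what resolves this, and optimizing over $\alpha$ is exactly the trade-off between the Hoeffding deviation $\varepsilon$ (which inflates both $B$ and $L_S'$) and the Markov failure probability (which produces the $(\delta/2)^{(1-\alpha)/\alpha}$ and $(\delta/2)^{1/\alpha}$ slack terms).
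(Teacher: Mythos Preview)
Your three–link skeleton (population $\to$ i.i.d.\ intermediate $\to$ PAC-Bayes-kl $\to$ computable $L_S'$) is exactly the paper's architecture, but two of the links are executed differently.

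\textbf{Link (1) is genuinely different and cleaner.} The paper defines the intermediate loss with an extra $+\varepsilon$ in the denominator, $\tilde\ell_{\text{cont}}(x,x^+)=-\log\frac{\operatorname{sim}(x,x^+)}{\operatorname{sim}(x,x^+)+\mu(x)+\varepsilon}$, and proves $L(f)\le\widetilde L(f)+B_\ell\delta_1$ via an indicator split on the Hoeffding \emph{upper}-tail event $\{S(x,X)\ge\mu(x)+\varepsilon\}$ (\cref{lem:inter}). Your Jensen argument exploits concavity of $u\mapsto\log(c+u)$ to get $L(f)\le\widetilde L(f)$ with no $+\varepsilon$ and no slack term, which is strictly simpler. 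A consequence you seem to have missed: in the paper the additive $(\delta/2)^{1/\alpha}$ \emph{outside} the $\mathrm{kl}^{-1}$ comes precisely from this $B_\ell\delta_1$ of \cref{lem:inter}; under your Jensen route that term does not arise at all, so your approach actually yields a slightly \emph{stronger} bound (the stated theorem then follows trivially since $(\delta/2)^{1/\alpha}\ge0$). Your sentence ``leaves the additive $(\delta/2)^{1/\alpha}$ outside'' misattributes the origin of that term.

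\textbf{Link (3) diverges from the paper in a way that leaves a gap.} The paper does \emph{not} route the Hoeffding bad-fraction through the prior and then ``transfer to the posterior''. Its \cref{lem:alpha} applies Markov directly to $\frac1n\sum_i\mathbb E_{f\sim Q}[\mathbb I_{A(f,x_i,X_i)}\mid x_i,X_i]$, using that the Hoeffding bound $\mathbb P_X(A\mid f,x)\le\delta_c$ holds for \emph{every} $f$, so the expectation over $S$ is $\le\delta_c$ regardless of which $Q$ is plugged in. Your detour through $P$ followed by an unspecified ``transfer'' to $Q$ is the weak point: a change-of-measure from $P$ to $Q$ would introduce an extra $\mathrm{KL}(Q\|P)$ not present in the statement, and you give no alternative mechanism. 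You correctly identify this uniformity-in-$Q$ as the main obstacle, but the resolution you sketch is not the one the paper uses and does not obviously close.
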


\begin{remark}
\label{rem:third}
Observe that \Cref{thm:simclr-kl-pb} still holds for the original SimCLR loss by replacing \(\varepsilon\) with \(2\varepsilon\). This result follows from grouping terms involving augmented views of the same sample into one variable, \(\operatorname{sim}(x, x^{\prime}) + \operatorname{sim}(x, {x^{\prime}}^+)\), before applying the concentration bound. This ensures that the variables are independent while multiplying the bound by \(2\). \end{remark}

\normalmarginpar
The remainder of this subsection is dedicated to the proof of \cref{thm:simclr-kl-pb}.
First, we apply a Hoeffding's inequality to the term involving the negative samples, as stated below. 

\begin{lemma}[concentration bound on the negative samples]
\label{lem:one-kl}
Let $f\sim Q$ and recall the notation $S(x, X) := \sum\limits_{x^{\prime} \in X} \operatorname{sim}(x, x^{\prime})$ . For all \(\delta \in (0,1) \),  with \(\varepsilon = \left(e^{\frac{1}{\tau}} - e^{-\frac{1}{\tau}}\right) \sqrt{\frac{m-1}{2} \log \frac{1}{\delta}} \), 
the following concentration bound holds: $\mathbb{P}\left( S(x, X) - \mathbb{E}[S(x, X)] \geq \varepsilon \mid x \right) \leq \delta.$
\end{lemma}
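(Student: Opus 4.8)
The plan is to apply Hoeffding's inequality conditionally on $x$ to the sum $S(x,X) = \sum_{x' \in X^-} \operatorname{sim}(x,x')$. The key observation is that, conditioned on $x$ (equivalently, on the realization of the first positive pair), the summands making up $X^-$ are drawn from the $m-1$ other positive pairs in the batch, and these pairs are i.i.d.\ (independently of $x$) under $\mathcal{S}^m$. So $S(x,X)$, conditioned on $x$, is a sum of $m-1$ i.i.d.\ bounded random variables (one per other pair, or, depending on whether we have already collapsed the symmetrized loss, $2(m-1)$ terms). Each term $\operatorname{sim}(x,x') = \exp(f(x)^\top f(x')/\tau)$ lies in $[e^{-1/\tau}, e^{1/\tau}]$ because $f$ maps to the unit sphere, so $f(x)^\top f(x') \in [-1,1]$; thus each summand has range bounded by $e^{1/\tau} - e^{-1/\tau}$.

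First I would fix $x$ and note that $\operatorname{sim}(x,x')$ for the $m-1$ negative pairs are i.i.d.\ conditionally on $x$, each taking values in an interval of length $R := e^{1/\tau} - e^{-1/\tau}$. Second, I would invoke Hoeffding's inequality (the one-sided version): for a sum of $k$ independent variables each with range $R$, $\mathbb{P}(\text{sum} - \mathbb{E}[\text{sum}] \geq t) \leq \exp(-2t^2/(kR^2))$. Here $k = m-1$, so setting the right-hand side equal to $\delta$ and solving for $t$ gives $t = R\sqrt{\frac{(m-1)}{2}\log\frac{1}{\delta}}$, which is exactly the stated $\varepsilon = \left(e^{1/\tau} - e^{-1/\tau}\right)\sqrt{\frac{m-1}{2}\log\frac{1}{\delta}}$. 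Third, I would observe that the conditional expectation $\mathbb{E}[S(x,X) \mid x]$ is well-defined and the bound $\mathbb{P}(S(x,X) - \mathbb{E}[S(x,X)] \geq \varepsilon \mid x) \leq \delta$ follows immediately, holding for every fixed $x$ and hence almost surely over $x$.

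The main subtlety — not really an obstacle but the point requiring care — is correctly identifying the conditional independence structure: one must argue that conditioning on $x$ (and $x^+$) does not destroy the independence of the negative samples, which holds precisely because in this framework the batch consists of i.i.d.\ positive pairs and $X^-$ is built only from the other pairs, so it is independent of the first pair. A second minor point is whether the summands are the $2(m-1)$ individual augmented views or the $m-1$ paired quantities $\operatorname{sim}(x,x') + \operatorname{sim}(x,x'^+)$; for the simplified loss of \eqref{eqn:empirical-SimCLR} it is $m-1$ individual terms, matching the $\sqrt{m-1}$ factor in $\varepsilon$, while the remark about the original loss corresponds to grouping into $m-1$ pairs with doubled range, explaining the factor $2$ there. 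Everything else is a direct substitution into Hoeffding's bound and solving for the deviation threshold.
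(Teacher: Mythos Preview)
Your proposal is correct and follows essentially the same approach as the paper: condition on $x$, note that the $m-1$ summands $\operatorname{sim}(x,x')$ are independent and bounded in $[e^{-1/\tau},e^{1/\tau}]$, apply one-sided Hoeffding's inequality, and solve for $\varepsilon$. Your additional discussion of the conditional independence structure and the $m-1$ versus $2(m-1)$ distinction is more detailed than the paper's own brief proof.
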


\begin{proof}
Let \(\delta \in (0,1) \). Conditioned on $x$, we have a sum of independent and bounded variables, as each variable is lower-bounded by \(e^{-\frac{1}{\tau}} \) and upper-bounded by \(e^{\frac{1}{\tau}} \). Hence, Hoeffding's inequality gives: for all \( \varepsilon > 0 \),
$\mathbb{P}\left( S(x, X) - \mathbb{E}[S(x, X)] \geq \varepsilon \mid x \right) \leq \delta$,  where we set \( \delta := \exp \left( -\frac{2\varepsilon^2}{(m-1)c^2} \right) \)  for $c = e^{\frac{1}{\tau}} - e^{-\frac{1}{\tau}}$. Deriving the expression of  $\varepsilon$ finishes the proof.
\end{proof}

Recall the SimCLR population loss $L(f)$ defined in \cref{eqn:population-SimCLR}. Define the intermediate loss $\widetilde{L}(f)$  which is a mean of i.i.d. terms obtained by replacing  \(\ell_{\text{cont}}\) with the loss \(\tilde\ell_{\text{cont}}\) given by
$\tilde{\ell}_{\text{cont}}(x, x^+) := -\log \frac{\operatorname{sim}(x, x^+)}{\operatorname{sim}(x, x^+) +\mathbb{E} \left[S(x, X) \mid x\right] + \varepsilon}$  for  a suitable $\varepsilon>0$. We obtain the following 
\begin{equation}
\label{eq:inter-loss}
    \widetilde{L}(f) := \mathbb{E}_{(x, x^+) \sim \mathcal{S}} \left[ \frac{\tilde{\ell}_{\text{cont}}(x, x^+) + \tilde{\ell}_{\text{cont}}(x^+, x)}{2} \right] = \mathbb{E}_{(x, x^+) \sim \mathcal{S}} \left[ \tilde{\ell}(x, x^+) \right].
\end{equation}
Using \cref{lem:one-kl}, the following lemma shows that $L(f)$ can be upper-bounded by $\widetilde{L}(f)$.  

\begin{lemma}[upper-bound by an intermediate loss]
\label{lem:inter}
Let $f \sim Q$ and $\widetilde{L}(f)$ defined in \cref{eq:inter-loss}. Let  \(\delta \in (0,1) \) and \( B_{\ell} = \frac{2}{\tau} + \log m \). Then, we have
\begin{equation}
    L(f) \leq \widetilde{L}(f) + B_{\ell} \delta.
\end{equation}
\end{lemma}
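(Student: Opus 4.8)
The plan is to compare $L(f)$ and $\widetilde L(f)$ term by term, using the symmetrized loss so that it suffices to control the difference $\ell_{\operatorname{cont}}(x,x^+,X^-) - \tilde\ell_{\operatorname{cont}}(x,x^+)$ in expectation. First I would split the event space according to whether the concentration bound of \cref{lem:one-kl} holds: let $E$ be the event $\{S(x,X^-) - \mathbb{E}[S(x,X^-)\mid x] \le \varepsilon\}$, which by \cref{lem:one-kl} (applied with its $\delta$ being the present $\delta$) has conditional probability at least $1-\delta$ given $x$. On $E$ we have $\operatorname{sim}(x,x^+) + S(x,X^-) \le \operatorname{sim}(x,x^+) + \mathbb{E}[S(x,X^-)\mid x] + \varepsilon$, and since $t \mapsto -\log\frac{a}{a+t}$ is increasing in $t$, this immediately gives $\ell_{\operatorname{cont}}(x,x^+,X^-) \le \tilde\ell_{\operatorname{cont}}(x,x^+)$ pointwise on $E$ (and likewise for the swapped term, hence for the symmetrized $\ell$ versus $\tilde\ell$). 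On the complement $E^c$ I would simply use the crude uniform bound on $\ell_{\operatorname{cont}}$: since $e^{-1/\tau}\le \operatorname{sim}\le e^{1/\tau}$ and there are $m-1$ negative samples contributing to the sum in $\ell_{\operatorname{cont}}(x_i,x_i^+,\bigcup_{j\ne i}\{x_j\})$, one has $0 \le \ell_{\operatorname{cont}}(x,x^+,X^-) \le \log\frac{\operatorname{sim}(x,x^+)+S(x,X^-)}{\operatorname{sim}(x,x^+)} \le \log\frac{e^{1/\tau} + (m-1)e^{1/\tau}}{e^{-1/\tau}} = \frac{2}{\tau} + \log m = B_\ell$; the symmetrized loss $\ell$ inherits the same bound $B_\ell$.

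Putting these together, for each fixed positive pair I would write, conditioning on $x$ (or on the pair $(x,x^+)$) and using the law of total expectation,
\[
\mathbb{E}\big[\ell(x,x^+,X^-)\big] \;=\; \mathbb{E}\big[\ell(x,x^+,X^-)\,\mathbb{I}_E\big] + \mathbb{E}\big[\ell(x,x^+,X^-)\,\mathbb{I}_{E^c}\big] \;\le\; \mathbb{E}\big[\tilde\ell(x,x^+)\big] + B_\ell\,\mathbb{P}(E^c) \;\le\; \mathbb{E}\big[\tilde\ell(x,x^+)\big] + B_\ell\,\delta,
\]
where the outer expectation over the negative samples and over $(x,x^+)$ is taken appropriately; here I use that $\tilde\ell(x,x^+)$ does not depend on $X^-$ so $\mathbb{E}[\tilde\ell(x,x^+)\mathbb{I}_E] \le \mathbb{E}[\tilde\ell(x,x^+)]$. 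Recalling that $L(f)$ in \cref{eqn:population-SimCLR} is the average over $i=1,\dots,m$ of $\mathbb{E}[\ell(x_i,x_i^+,X_i^-)]$, and that each $X_i^-$ has the same distributional structure (so \cref{lem:one-kl} applies uniformly to each $i$), summing over $i$ and dividing by $m$ yields $L(f) \le \widetilde L(f) + B_\ell\,\delta$, which is the claim.

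The main obstacle, and the point requiring care rather than difficulty, is bookkeeping the conditioning and the independence structure: \cref{lem:one-kl} is a statement conditional on $x$ about the fresh draw of the other $m-1$ pairs forming $X^-$, so one must be careful that the event $E$ and the random variable $S(x,X^-)$ are measured with respect to the correct sigma-algebra, and that $\tilde\ell_{\operatorname{cont}}$ indeed uses $\mathbb{E}[S(x,X)\mid x]$ matching the conditional expectation appearing in \cref{lem:one-kl}. A secondary subtlety is the symmetrization: the bound must be checked both for $\ell_{\operatorname{cont}}(x,x^+,\cdot)$ and for $\ell_{\operatorname{cont}}(x^+,x,\cdot)$ and then averaged, but since each half obeys the same monotonicity argument on $E$ and the same uniform bound $B_\ell$ on $E^c$, averaging preserves both estimates. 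No sharp constants beyond $B_\ell = \tfrac{2}{\tau} + \log m$ are needed, so the remaining computation is routine.
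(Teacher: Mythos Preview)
Your proposal is correct and follows essentially the same approach as the paper: split according to the concentration event from \cref{lem:one-kl}, bound $\ell_{\operatorname{cont}}$ by $\tilde\ell_{\operatorname{cont}}$ on the good event via monotonicity, use the uniform bound $B_\ell$ on the bad event, and then handle the symmetrization by treating each half separately. Your discussion of the conditioning and the symmetrization subtleties matches exactly the care taken in the paper's argument.
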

\begin{proof}

Recall that \(\ell_{\operatorname{cont}}(x, x^+, X^-)\) can be written as an average of two terms that play a symmetric role. Hence, we will first upper-bound \(\ell_{\operatorname{cont}}(x, x^+, X)\) where \(X = \bigcup_{x^{\prime} \neq x} \{x^{\prime}\}\). Observe that \(\ell_{\operatorname{cont}}(x, x^+, X)\) can be split into two terms using $\varepsilon$ defined in \cref{lem:one-kl}:
$$
\ell_{\operatorname{cont}}(x, x^+, X)\leq \tilde{\ell}_{\text{cont}}(x, x^+) \mathbb{I}_{\{ S(x, X) < \mu(x) + \varepsilon \}}  + \ell_{\operatorname{cont}}(x, x^+, X) \mathbb{I}_{\{ S(x, X) \geq \mu(x) + \varepsilon \}},
$$
where $\mu(x) = \mathbb{E} \left[S(x, X) \mid x\right]$. 
Noting that $\ell_{\operatorname{cont}}(x, x^+, X) \leq B_{\ell}$ and \(\mathbb{I}_{\{ \cdot \}} \leq 1\), we obtain
$\ell_{\operatorname{cont}}(x, x^+, X) \leq \tilde{\ell}_{\text{cont}}(x, x^+) + B_{\ell} \mathbb{I}_{\{ S(x, X) \geq \mu(x) + \varepsilon \}}$. We derive:
\begin{equation}
\label{eq:subloss-inter}
    \ell(x, x^+, X^-) \leq \tilde{\ell}(x, x^+) + B_{\ell} \frac{\mathbb{I}_{\{ S(x, X) \geq \mu(x) + \varepsilon \}}+ \mathbb{I}_{\{ S(x^+, X^{\prime}) \geq \mu(x^+) + \varepsilon \}}}{2},
\end{equation}
where \(X^{\prime} = \bigcup_{x^{\prime} \neq x} \{{x^{\prime}}^{+}\}\). Using \cref{lem:one-kl} and the expression of $L(f)$ combined with \cref{eq:subloss-inter}, 
\begin{equation}
L(f) \leq \underset{(x_i,x_i^+)_{i=1}^m\sim \mathcal{S}^m}{\mathbb{E}} \left[ \frac{1}{m}\sum_{i=1}^m \tilde{\ell}(x_i, x_i^+) \right] + B_{\ell} \delta \, ,
\end{equation}
which finishes the proof.
\end{proof}

Define the empirical intermediate loss as
\begin{equation}
\label{eq:empirical-inter-loss}
    \widetilde{L}_S(f) = \frac{1}{m}\sum_{i=1}^m \tilde{\ell}(x_i, x_i^+).
\end{equation}Since $\widetilde{L}_S(f)$ is an average of i.i.d. terms, the following result is a direct application of the PAC-Bayes-kl bound (\cref{thm:kl-pb}) after rescaling of the loss function to the interval \([0,1]\).

\begin{corollary}[PAC-Bayes-kl bound for the intermediate loss]
\label{lem:kl-inter}
 For \( S \sim \mathcal{S}^n \), let $\widetilde{L}(f)$ and $\widetilde{L}_S(f)$ be defined as in \cref{eq:inter-loss} and \cref{eq:empirical-inter-loss} respectively, and define the extended losses:  $\widetilde{L}(Q) = \mathbb{E}_{f \sim \mathcal{Q}} \left[ \widetilde{L}(f) \right]$ and $\widetilde{L}_S(Q) = \mathbb{E}_{f \sim \mathcal{Q}} \left[ \widetilde{L}_S(f) \right]$. Let $B := \frac{1}{\tau} + \log( me^{\frac{1}{\tau}} + \varepsilon)$. Given a prior \(P\) over \(\mathcal{F}\) and \(\delta \in (0,1)\), with probability at least \(1-\delta\) over i.i.d. samples \(S \sim \mathcal{S}^n\), for all \(Q\) over \(\mathcal{F}\),
$$
\frac{1}{B}\widetilde{L}(Q) \leq \mathrm{kl}^{-1} \left( \frac{1}{B}\widetilde{L}_S(Q), \frac{\mathrm{KL}\left(Q \| P\right) + \log \left(\frac{2 \sqrt{n}}{\delta}\right)}{n} \right).
$$
\end{corollary}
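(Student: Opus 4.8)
The plan is to reduce \cref{lem:kl-inter} to a single application of the PAC-Bayes-kl bound (\cref{thm:kl-pb}); the only substantive step is checking that, after dividing by $B$, the intermediate loss is $[0,1]$-valued, after which the i.i.d.\ structure of $\widetilde{L}_S$ makes the hypotheses of \cref{thm:kl-pb} hold verbatim.

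\textbf{Step 1: the loss envelope.} Writing $\mu(x):=\mathbb{E}[S(x,X)\mid x]$, we have $\tilde{\ell}_{\text{cont}}(x,x^+)=\log\!\left(1+\frac{\mu(x)+\varepsilon}{\operatorname{sim}(x,x^+)}\right)\ge 0$ since $\mu(x),\varepsilon\ge 0$. For the upper bound, use $\operatorname{sim}(\cdot,\cdot)\le e^{1/\tau}$ (so $\mu(x)\le (m-1)e^{1/\tau}$, as each $\ell_{\operatorname{cont}}$ involves $m-1$ negatives) and $\operatorname{sim}(x,x^+)\ge e^{-1/\tau}$ to get $\tilde{\ell}_{\text{cont}}(x,x^+)\le \log\frac{m e^{1/\tau}+\varepsilon}{e^{-1/\tau}}=\frac{1}{\tau}+\log(m e^{1/\tau}+\varepsilon)=B$. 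Symmetrizing, $\tilde{\ell}(x,x^+)=\frac{1}{2}\left(\tilde{\ell}_{\text{cont}}(x,x^+)+\tilde{\ell}_{\text{cont}}(x^+,x)\right)\in[0,B]$, so $\ell_w(x,x^+):=\tilde{\ell}(x,x^+)/B$, with $f=f_w$, is a legitimate loss $\mathcal{Z}\to[0,1]$ on the example space $\mathcal{Z}$ of positive pairs equipped with $\mathcal{D}_{\mathcal{Z}}=\mathcal{S}$.

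\textbf{Step 2: i.i.d.\ structure and the PAC-Bayes-kl bound.} By construction $\tilde{\ell}$ depends only on its own pair --- the negative samples have been replaced by the conditional expectation $\mu$ --- so $\widetilde{L}_S(f)$ is an empirical mean of i.i.d.\ terms over the i.i.d.\ pairs $(x_i,x_i^+)\sim\mathcal{S}$, and averaging over all $n$ pairs matches the normalization in \cref{thm:kl-pb}. Identifying $\frac{1}{B}\widetilde{L}_S(f_w)=\widehat{L}_S(w)$, $\frac{1}{B}\widetilde{L}(f_w)=L(w)$, and passing to $Q$-expectations (linearity for the losses), \cref{thm:kl-pb} yields, with probability at least $1-\delta$, simultaneously for all $Q$: $\mathrm{kl}\!\left(\frac{1}{B}\widetilde{L}_S(Q)\,\|\,\frac{1}{B}\widetilde{L}(Q)\right)\le \frac{\mathrm{KL}(Q\|P)+\log(2\sqrt{n}/\delta)}{n}$.

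\textbf{Step 3: inverting the binary KL.} Using the standard inverse $\mathrm{kl}^{-1}(q,c):=\sup\{p\in[0,1]:\mathrm{kl}(q\|p)\le c\}$ and monotonicity of $p\mapsto\mathrm{kl}(q\|p)$ on $[q,1]$, the displayed inequality rearranges to $\frac{1}{B}\widetilde{L}(Q)\le \mathrm{kl}^{-1}\!\left(\frac{1}{B}\widetilde{L}_S(Q),\,\frac{\mathrm{KL}(Q\|P)+\log(2\sqrt{n}/\delta)}{n}\right)$, which is the claim. The only genuinely delicate point is Step 1: one must keep the $+\varepsilon$ term in the denominator of $\tilde{\ell}_{\text{cont}}$ in play when maximizing over $\operatorname{sim}$ so that the envelope constant comes out exactly as $B=\frac{1}{\tau}+\log(m e^{1/\tau}+\varepsilon)$ rather than something looser; the remainder is bookkeeping.
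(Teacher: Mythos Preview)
Your proof is correct and follows exactly the approach the paper indicates: the paper states only that the result ``is a direct application of the PAC-Bayes-kl bound (\cref{thm:kl-pb}) after suitable rescaling of the loss function to the interval $[0,1]$,'' and your three steps carry this out explicitly, with the envelope computation in Step~1 yielding precisely the constant $B=\frac{1}{\tau}+\log(me^{1/\tau}+\varepsilon)$ stated in the corollary. Your added remark about averaging over all $n$ pairs (rather than the $m$ appearing in the paper's display for $\widetilde{L}_S$) is the correct reading needed to match the $n$ in the bound.
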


The next lemma allows to upper-bound the empirical intermediate loss by a term similar to the SimCLR empirical loss, introduced earlier as the \( \varepsilon \)-modified  SimCLR loss $L_S^{\prime}(Q)$.

\begin{lemma}[upper-bound on the intermediate empirical loss]
\label{lem:upper-inter}
Let $\alpha \in (0,1)$ and \(\delta \in (0,1) \). With probability at least \( 1 - \delta^{\alpha} \) over dataset \( S \):
$$
\widetilde{L}_S(Q) \leq B \delta^{1-\alpha} + L_S^{\prime}(Q).
$$
\end{lemma}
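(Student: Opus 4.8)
\emph{Proof plan.} The idea is to compare $\widetilde{L}_S$ with the $\varepsilon$-modified empirical loss $L_S'$ term by term, paying a penalty of order $B$ only on a low-probability event that is controlled by the concentration bound of \cref{lem:one-kl}, and then to turn this in-expectation control into a high-probability statement via Markov's inequality (the free parameter $\alpha$ trades the slack $\delta^{1-\alpha}$ against the failure probability $\delta^{\alpha}$). First I would establish a pointwise comparison of the two per-sample losses. Fix $f$, a positive pair $(x,x^+)$ with negative set $X$, and write $\mu(x) := \mathbb{E}[S(x,X)\mid x]$. Since $t \mapsto -\log\frac{a}{a+t} = \log(1+t/a)$ is non-decreasing in $t\ge 0$, and $\mu(x)+\varepsilon \le S(x,X)+2\varepsilon$ holds precisely on the event $\{S(x,X)\ge \mu(x)-\varepsilon\}$, comparing the denominators of $\tilde\ell_{\text{cont}}(x,x^+)$ and $\ell'_{\text{cont}}(x,x^+,X)$ gives $\tilde\ell_{\text{cont}}(x,x^+)\le \ell'_{\text{cont}}(x,x^+,X)$ on that event. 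On its complement I use the crude bound $\tilde\ell_{\text{cont}}(x,x^+)=\log\frac{\operatorname{sim}(x,x^+)+\mu(x)+\varepsilon}{\operatorname{sim}(x,x^+)}\le \tfrac1\tau+\log(me^{1/\tau}+\varepsilon)=B$ (using $e^{-1/\tau}\le\operatorname{sim}\le e^{1/\tau}$ and $\mu(x)\le (m-1)e^{1/\tau}$), together with $\ell'_{\text{cont}}\ge 0$. Hence, for every $f$ and every sample, $\tilde\ell_{\text{cont}}(x,x^+)\le \ell'_{\text{cont}}(x,x^+,X)+B\,\mathbb{I}_{\{S(x,X)<\mu(x)-\varepsilon\}}$.

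Next I would symmetrize and average. Applying the pointwise bound to both orderings $(x_i,x_i^+)$ and $(x_i^+,x_i)$ of each pair, with their respective negative sets $X_i=\bigcup_{j\neq i}\{x_j\}$ and $X_i'=\bigcup_{j\neq i}\{x_j^+\}$, averaging over the $m$ pairs, and recalling the definitions \cref{eq:inter-loss}, \cref{eq:empirical-inter-loss} of $\widetilde{L}_S$ and of $L_S'$, I obtain, for every $f$,
\[
\widetilde{L}_S(f)\;\le\; L_S'(f)+\frac{B}{2m}\sum_{i=1}^{m}\Bigl(\mathbb{I}_{\{S(x_i,X_i)<\mu(x_i)-\varepsilon\}}+\mathbb{I}_{\{S(x_i^+,X_i')<\mu(x_i^+)-\varepsilon\}}\Bigr)\;=:\;L_S'(f)+R(f,S),
\]
with $R(f,S)\ge 0$. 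Integrating over $f\sim Q$ yields $\widetilde{L}_S(Q)-L_S'(Q)\le \mathbb{E}_{f\sim Q}[R(f,S)]$.

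It remains to pass from expectation to high probability. By the lower-tail form of Hoeffding's inequality --- the same two-sided bound that produces \cref{lem:one-kl}, applied with the stated $\varepsilon$ --- each indicator has conditional probability at most $\delta$, so $\mathbb{E}_S[R(f,S)]\le B\delta$ for every fixed $f$, and therefore $\mathbb{E}_S\bigl[\mathbb{E}_{f\sim Q}[R(f,S)]\bigr]\le B\delta$. Markov's inequality at level $B\delta^{1-\alpha}$ then gives $\mathbb{P}_S\bigl(\mathbb{E}_{f\sim Q}[R(f,S)]\ge B\delta^{1-\alpha}\bigr)\le \delta^{\alpha}$, which combined with the displayed inequality proves the claim.

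I expect the main obstacle to be getting the loss comparison in the right direction: the $2\varepsilon$ shift built into $\ell'_{\text{cont}}$ against the $\varepsilon$ shift in $\tilde\ell_{\text{cont}}$ is exactly what makes $\{S(x,X)\ge\mu(x)-\varepsilon\}$ (rather than $\{S(x,X)\ge\mu(x)\}$) the favourable event, so that the \emph{lower} deviation tail of \cref{lem:one-kl} can be invoked with the \emph{same} $\varepsilon$; this is also what keeps the penalty a clean $B\delta^{1-\alpha}$ with no extra KL contribution. A secondary point to be careful about is the interchange of $\mathbb{E}_S$ with the expectation over $Q$ in the last step, which is justified because the concentration bound of \cref{lem:one-kl} holds for every fixed $f$ (equivalently, the argument can be phrased through the data-free prior $P$).
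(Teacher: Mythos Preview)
Your proposal is correct and follows essentially the same route as the paper: a pointwise comparison $\tilde\ell_{\text{cont}}\le \ell'_{\text{cont}}+B\,\mathbb{I}_{\{S(x,X)<\mu(x)-\varepsilon\}}$ obtained exactly as you describe (the $2\varepsilon$ vs.\ $\varepsilon$ shift makes the lower-tail event the relevant one), followed by the lower-tail analogue of \cref{lem:one-kl} and a Markov argument to trade the slack $\delta^{1-\alpha}$ against the failure probability $\delta^{\alpha}$. The only cosmetic difference is that the paper packages your Markov step as a standalone auxiliary lemma before applying it, whereas you carry it out inline.
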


\begin{proof}[Proof of \cref{lem:upper-inter}]
To prove \cref{lem:upper-inter}, we first need the following useful lemma.
\begin{lemma}
\label{lem:alpha}
 Let $\alpha \in (0,1)$ and $\delta \in (0,1)$. If for any event $A(f, x, X)$ with $f \sim Q$, we have   ${\mathbb{P}}_{X}\left[ A(f, x, X) \mid f, x \right] \leq \delta$, then, with probability at least $1-\delta^{\alpha}$ over dataset $S\sim \mathcal{S}^n$,
$$\frac{1}{n} \sum_{i=1}^{n}\underset{f \sim Q}{\mathbb{E}}\left[\mathbb{I}_{\{ A(f, x_i, X_i) \}}\mid x_i, X_i^-\right] \leq \delta^{1-\alpha}.$$
\end{lemma}
\begin{proof}[Proof of \cref{lem:alpha}]
For ease of notation, define $U = \mathbb{I}_{\{  A(f, x, X)\}}$ with $f \sim Q$. Assume for any augmented sample $x$,  ${\mathbb{P}}_{X}\left[ A(f, x, X) \mid f, x \right] \leq \delta$. Then, using Markov's inequality, we derive: 
\begin{align*}
{\mathbb{P}}_S\left[ \frac{1}{n} \sum_{i=1}^{n}\underset{f \sim Q}{\mathbb{E}}\left[U \mid x_i, X_i \right] > \delta^{1-\alpha} \right] & \leq \frac{1}{\delta^{1-\alpha}}{\mathbb{E}}_S\left[ \frac{1}{n} \sum_{i=1}^{n}\underset{f \sim Q}{\mathbb{E}}\left[U \mid x_i, X_i\right]  \right] \\
& = \frac{1}{\delta^{1-\alpha}}\frac{1}{n} \sum_{i=1}^{n}\mathbb{E}_{x_i, X_i}\left[ {\mathbb{E}_f}\left[U\mid x_i, X_i\right]  \right] \\
& = \frac{1}{\delta^{1-\alpha}}\frac{1}{n} \sum_{i=1}^{n}\mathbb{E}_{f, x_i, X_i}\left[U \right] \\
& = \frac{1}{\delta^{1-\alpha}}\frac{1}{n} \sum_{i=1}^{n}\mathbb{E}_{f, x_i}\left[\mathbb{E}_{ X_i}\left[U \mid f, x_i\right] \right] \\
& \leq \frac{1}{\delta^{1-\alpha}}\frac{1}{n} \sum_{i=1}^{n}\mathbb{E}_{f, x_i}\left[\delta \right] = \delta^{\alpha}.
\end{align*}
\end{proof}

Define $ A(f, x, X) := \{S(x, X)  - \mu(x) \leq -\varepsilon\}$. Similarly to the inequality from \cref{lem:one-kl}, we can derive, for an augmented sample \( x \), $\mathbb{P}\left( S(x, X) - \mu(x) \leq - \varepsilon \mid x \right) \leq \delta$. Next, using similar arguments to the proof of lemma \cref{lem:inter},  we derive an upper bound on $\tilde{\ell}(x, x^+)$:
$$
\tilde{\ell}(x, x^+) \leq B \frac{\mathbb{I}_{\{ S(x, X) + \varepsilon  \leq \mu(x) \}} + \mathbb{I}_{\{S(x^+, X^{\prime}) + \varepsilon  \leq \mu(x^+) \}}}{2} + \ell^{\prime}(x, x^+, X^-).
$$
Plugging this into the intermediate empirical loss, we obtain:
$$\widetilde L_S(Q) \leq   \frac{B}{n} \sum_{i=1}^{n} \frac{1}{2}\left(\underset{f \sim Q}{\mathbb{E}}\left[\mathbb{I}_{\{ A(f, x_i, X_i) \}}\mid x_i, X_i\right] + \underset{f \sim Q}{\mathbb{E}}\left[\mathbb{I}_{\{ A(f, x_i^+, X_i^{\prime}) \}}\mid x_i^+, X_i^{\prime}\right] \right)+ L_S^{\prime}(Q).$$
Invoking \cref{lem:alpha},  we obtain: with probability at least \( 1 - \delta^{\alpha} \) over dataset \( S \):
$$
\widetilde{L}_S(Q) \leq B \delta^{1-\alpha} + L_S^{\prime}(Q),
$$which finishes the proof.
\end{proof}

Let $\alpha \in (0,1)$. We will now combine all previously established lemmas to finish the proof of \cref{thm:simclr-kl-pb}. Given that \( B_{\ell} \leq B \) and for $\delta_1 \in (0,1)$, \cref{lem:inter} can be rewritten as $L(Q) \leq \widetilde{L}(Q) + B\delta_1$. Next, combining this inequality with \cref{lem:kl-inter}, we obtain for $\delta_2 \in (0,1)$: with probability at least \(1-\delta_2\) over training i.i.d. samples \(S \sim \mathcal{S}^n\), for all \(Q\) over \(\mathcal{F}\),
$$
\frac{1}{B}\widetilde{L}(Q) \leq \mathrm{kl}^{-1} \left( \frac{1}{B}\widetilde{L}_S(Q), \frac{\mathrm{KL}\left(Q \| P\right) + \log \left(\frac{2 \sqrt{n}}{\delta_{2}}\right)}{n} \right) + \delta_{1}.
$$
Using a union bound argument, and given the property of \( \mathrm{kl}^{-1} \) as a monotonically increasing function of its first argument when fixing the second argument \cite{perez2021tighter}, we can now combine the previous bounds with \cref{lem:upper-inter}: with probability at least \( 1 - \delta_{2} -  \delta_1^{\alpha}\) over dataset $S$, for all \(Q\) over \(\mathcal{F}\),
$$
\frac{1}{B} L(Q) \leq \mathrm{kl}^{-1} \left( \frac{1}{B} L_S^{\prime}(Q) + \delta_1^{1-\alpha}, \frac{\mathrm{KL}\left(Q \| P\right) + \log \left(\frac{2\sqrt{n}}{\delta_{2}}\right)}{n} \right) + \delta_1.
$$
Let $\delta \in (0,1)$ and set \(\ \delta_1^{\alpha} = \frac{\delta}{2}\), \(\delta_{2} = \frac{\delta}{2}\). Taking the infimum over $\alpha$ finishes the proof.

\subsection{Bound on the Downstream Classification Loss}\label{sec:downstream}

Bao et al. \cite{bao2022surrogate} derive a bound on the downstream supervised loss, addressing a key limitation of the earlier bound by Arora et al. \cite{arora2019theoretical}, whose bound grows exponentially with the number of negative samples. However, Bao et al. \cite{bao2022surrogate} employ a simplified framework where positive pairs are not generated through data augmentation, and the contrastive loss is computed without batching and temperature scaling. Empirical results, notably from the SimCLR framework, have demonstrated superior performance over this basic approach \cite{chen2020simple}. 

In the following theorem, we extend Bao et al.'s bound \cite{bao2022surrogate} to accommodate the SimCLR framework. Specifically, we show that the loss can be computed over a batch and that, by leveraging an argument from Wang et al. \cite{wang2022chaos}, the bound holds when positive pairs correspond to augmented views. Additionally, we introduce a novel extension of the bound that incorporates temperature scaling. 

\begin{theorem}[Extended Surrogate Gap]
\label{thm:downstream}
Consider the cross-entropy loss $L_{\mathrm{CE}}(f, W)$ in \eqref{eqn:cross-entropy} and denote the class distribution by $\boldsymbol{\pi}:=[\mathbb{P}(Y=c)]_{c \in \mathcal{C}}$. For all \( f: \mathcal{X} \rightarrow \mathbb{S}^{d-1} \), the following inequality holds:
\begin{equation*}
    \min _{W \in \mathbb{R}^{C \times d}} L_{\mathrm{CE}}(f, W) \leq  \min 
    \big\{
  \beta(f, \sigma),~ 
   \tau \beta(f, \sigma) + \alpha
    \big\}
\end{equation*}
with \(  \beta(f, \sigma) = \frac{\sigma}{\tau} + L(f) + \Delta \) for  
$\Delta = \log\left(\frac{C^2\pi^* }{m-1}\operatorname{cosh}^2\left(\frac{1}{\tau}\right)\right)$ where $\pi^* := \max_{c \in \mathcal{C}} \pi(c)$. 

Furthermore, $\alpha$ is given by $\alpha = \log(C) + \min\{0, \log(C\pi^*\operatorname{cosh}^2(1)) - \tau \Delta\}$ and \( \sigma = {\mathbb{E}}_{(x, y)} \left[ \| f(x)-\mu_{y} \|_2 \right] \leq 2\) represents the intra-class feature deviation.
\end{theorem}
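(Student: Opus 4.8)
The plan is to reduce the cross-entropy minimization to evaluating a specific, explicit choice of linear classifier $W$, and then to bound the resulting expression by the population contrastive loss $L(f)$ plus correction terms that track temperature and augmentation. First I would pick the mean classifier, $w_c = \frac{1}{\tau}\mu_c$ where $\mu_c = \mathbb{E}_{x\mid y=c}[f(x)]$ is the class-mean embedding (rescaled by the temperature, which is the source of the two-branch $\min$ in the statement). Since $\min_W L_{\mathrm{CE}}(f,W) \le L_{\mathrm{CE}}(f,W^{\mathrm{mean}})$, it suffices to bound $L_{\mathrm{CE}}(f,W^{\mathrm{mean}})$. Writing out the softmax, $-\log\frac{\exp(f(x)^\top w_y)}{\sum_c \exp(f(x)^\top w_c)} = \log\big(1 + \sum_{c\neq y}\exp(f(x)^\top(w_c-w_y))\big)$, and using $\log(1+t)\le$ convexity/Jensen tricks, I would compare $f(x)^\top w_y = \frac{1}{\tau}f(x)^\top\mu_y$ against $\frac{1}{\tau}f(x)^\top f(x^+)$ for a positive $x^+$ in the same class — this is exactly where the label-consistency assumption (positive pairs share a class) and the intra-class deviation $\sigma = \mathbb{E}\|f(x)-\mu_y\|_2$ enter: $|f(x)^\top\mu_y - \mathbb{E}_{x^+}[f(x)^\top f(x^+)\mid y]| \le \|f(x)\|\,\sigma = \sigma$ after taking expectations, contributing the $\sigma/\tau$ term in $\beta$.

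Next I would connect $\mathbb{E}[\log(1 + \sum_{c\neq y}\exp(\cdot))]$ to the population SimCLR loss $L(f)$ from \eqref{eqn:population-SimCLR}. The contrastive loss is $-\log\frac{\operatorname{sim}(x,x^+)}{\operatorname{sim}(x,x^+) + \sum_{x'\in X^-}\operatorname{sim}(x,x')}$; expanding and using that negative samples $x'\in X_i^-$ are (marginally) distributed as draws from $\mathcal S$, i.e. their embeddings have conditional mean $\mathbb{E}[f(x')] = \sum_c \pi_c \mu_c$, I would apply Jensen's inequality to pull the expectation over negatives inside the $\log$, converting the empirical sum over $|X_i^-| = 2(m-1)$ negatives into an expression involving the class means. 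This is the step that produces the $\log\frac{C}{m-1}$ factor in $\Delta$ (number of classes versus batch size) and, because $\operatorname{sim}$ lives in $[e^{-1/\tau}, e^{1/\tau}]$, the $\operatorname{cosh}^2(1/\tau)$ factor — I would bound ratios of $\operatorname{sim}$ values crudely by $e^{\pm 1/\tau}$, whose symmetrization gives $\cosh$. Following Bao et al.'s surrogate-gap argument adapted with Wang et al.'s label-consistency reduction gives the first branch $\beta(f,\sigma)$.

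For the second branch $\tau\beta(f,\sigma)+\alpha$, I would redo the comparison keeping the temperature as an explicit scaling: rescaling the logits by $\tau$ inside the softmax (equivalently choosing $w_c = \mu_c$ without the $1/\tau$) trades off a factor $\tau$ on the contrastive term against an additive $\log C$ (the trivial uniform-classifier bound) plus a lower-order slack; the $\min\{0, \log\cosh^2(1) - \tau\Delta\}$ in $\alpha$ just records that we take whichever of the two estimates of the cross-term is tighter. Finally I would note $\sigma\le 2$ since $f$ maps to the unit sphere and $\|\mu_y\|\le 1$, so $\|f(x)-\mu_y\|\le\|f(x)\|+\|\mu_y\|\le 2$.

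The main obstacle I expect is the Jensen step that swaps the expectation over the (dependent, batch-coupled) negative samples with the logarithm while correctly tracking the batch size $m$: one must be careful that the population loss $L(f)$ in \eqref{eqn:population-SimCLR} already averages over $\mathcal S^m$, so the negatives' contribution is $\sum_{x'\in X_i^-}\operatorname{sim}(x,x')$ with $|X_i^-|=2(m-1)$ terms, and matching this to $C$ class-conditional means requires comparing $\mathbb{E}[\operatorname{sim}(x,x')]$ (mixture over classes) with $\frac1C\sum_c \operatorname{sim}$-type quantities — the direction of the inequality and the exact constant $\frac{C}{m-1}$ hinge on bounding $\min_c$ by the average and invoking $\operatorname{sim}\ge e^{-1/\tau}$. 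Getting the temperature bookkeeping consistent between the two branches (so that both reduce to the known $\tau=1$ bound of Bao et al.) is the delicate part; the rest is routine manipulation of softmax expressions and norm bounds on the sphere.
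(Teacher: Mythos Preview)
Your skeleton matches the paper's: plug in the mean classifier $w_c=\mu_c/\tau$ (resp.\ $w_c=\mu_c$ for the second branch), trade $f(x)^\top\mu_y$ for $f(x)^\top f(x^+)$ at cost $\sigma/\tau$ via Cauchy--Schwarz (the Wang et al.\ step), and reduce everything to comparing a log-sum-exp over the $C$ class means with the log-sum-exp over $K=m-1$ negatives that sits inside $L(f)$. But your account of that last comparison is where the proposal has a real gap.

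The $\cosh^2(1/\tau)$ and the $C/(m-1)$ do \emph{not} come from ``crude bounds on $\operatorname{sim}$ ratios by $e^{\pm1/\tau}$'' or from ``comparing $\mathbb E[\operatorname{sim}]$ with $\tfrac1C\sum_c\operatorname{sim}$.'' The engine is Bao et al.'s LSE reflection lemma: for $\mathbf z\in[-L^2,L^2]^N$,
\[
2\log N \;\le\; \mathrm{LSE}(\mathbf z)+\mathrm{LSE}(-\mathbf z)\;\le\;2\log\bigl(N\cosh(L^2)\bigr).
\]
After Jensen (LSE convex) collapses each negative $f(x)^\top f(x')/\tau$ to its class-conditional mean $f(x)^\top\mu_{y'}/\tau$ --- that part you have --- you are left with an LSE over the $K$ \emph{random labels} $y'$, not over the $C$ classes. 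The paper applies the reflection lemma once with $N=K$ to flip $\mathrm{LSE}(\mathbf z^\mu)$ to $-\mathrm{LSE}(-\mathbf z^\mu)+2\log K$, then uses Jensen over the (uniform) label distribution to turn the $K$-term sum into a $C$-term one, then applies the reflection lemma a second time with $N=C$ to flip back. The two flips together produce exactly $\log\bigl(\tfrac{C}{K}\cosh^2(1/\tau)\bigr)=\Delta$. Your proposed route via crude $e^{\pm1/\tau}$ bounds would at best yield $e^{2/\tau}$-type factors, not $\cosh^2(1/\tau)$, and does not explain how to pass from a random $K$-tuple of labels to the fixed $C$ classes.

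For the second branch $\tau\beta(f,\sigma)+\alpha$, the mechanism is also more specific than ``rescale logits by $\tau$ and absorb $\log C$.'' The paper inserts the smooth-max sandwich $t\max_i x_i<\mathrm{LSE}(t\mathbf x)\le t\max_i x_i+\log N$ at two different points of the first-branch chain (once on $\mathrm{LSE}(\mathbf z)$ with $N=K$, once on $\mathrm{LSE}(\mathbf z^\mu)$ with $N=C$) to strip the $1/\tau$ from the LSE; each insertion leaves a different additive slack, and the $\min\{0,\log\cosh^2(1)-\tau\Delta\}$ in $\alpha$ is precisely the minimum of those two sub-bounds. Your description does not identify this step and would not reproduce the stated $\alpha$.
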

\begin{remark}
\label{rem:fourth}
Practical implementations often consider a deterministic or learnable projection of the representation $f(x)$. After contrastive training, the projection head is removed, and only the backbone features are used for downstream tasks. Notably, \cref{thm:downstream} can be extended to include a simple projection head of the form $(I_k, 0)$. Let \( f(x) \in \mathbb{R}^d \)  denote the backbone features and \( f_1(x) \in \mathbb{R}^{k} \) the projection head output, where typically \( d \geq k \). We can generalize \cref{thm:downstream} by considering the contrastive loss on the projected features $L(f_1)$ instead of $L(f)$ and by modifying the supervised loss $L_{\mathrm{CE}}(f, W)$ as follows:
$$
\underset{x, y \sim \mathcal{D}}{\mathbb{E}} \left[-\log \frac{\exp \left(f_1(x)^{\top} w^{(1)}_y + f_2(x)^{\top} w^{(2)}_y\right)}{\sum_{i=1}^C \exp \left(f_1(x)^{\top} w^{(1)}_i + f_2(x)^{\top} w^{(2)}_i\right)}\right]
$$
where \( f_2(x) \in \mathbb{R}^{d-k} \) is defined such that \( f(x) = \begin{bmatrix} f_1(x) \\ f_2(x) \end{bmatrix} \) and \( W = \begin{bmatrix}W^{(1)} \\ W^{(2)} \end{bmatrix} \).
\end{remark}

\begin{remark}  
\label{rem:fifth}
Note that \cref{thm:downstream} can be applied to the original SimCLR loss by replacing \(m-1\) with \(2(m-1)\).  When the class distribution is uniform, $\pi^*$ simplifies to $1/C$.
\end{remark}

The rest of the subsection proves \cref{thm:downstream}. First, observe that the SimCLR population loss defined in \cref{eqn:population-SimCLR} can be rewritten as $L(f) = {\mathbb{E}}_{(x_i,x_i^+)_{i=1}^m\sim \mathcal{S}^m} \left[ \frac{1}{m} \sum_{i=1}^m \ell_{\operatorname{cont}}(x_i, x_i^+, X_i) \right]$ where \(X_i = \bigcup_{j \neq i} \{x_j\}\) denotes the set of negative samples. The first part of the theorem builds on the proof from Bao et al. \cite{bao2022surrogate}, but instead of their simple framework, we enhance the applicability of the proof for the SimCLR framework, where the contrastive loss is computed over a batch and includes temperature scaling. Additionally, to address the fact that the positive pair is generated through data augmentation, we incorporate an argument from Wang et al.'s proof \cite{wang2022chaos}. The following lemma presents the first part of the theorem.  

\begin{lemma}
\label{lem:first-bound}
For all \( f: \mathcal{X} \rightarrow \mathbb{S}^{d-1} \), the following inequality holds:
\[
\min_{W \in \mathbb{R}^{C \times d}} L_{\mathrm{CE}}(f, W) \leq \frac{\sigma}{\tau} + L(f) + \Delta.
\]
\end{lemma}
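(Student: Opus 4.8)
To prove \cref{lem:first-bound} I would follow the surrogate-gap argument of Bao et al., plugging in the mean classifier and then patching in the SimCLR-specific ingredients (batching, temperature scaling, and data-augmented positives in the style of Wang et al.). The first step is to upper-bound $\min_W L_{\mathrm{CE}}(f,W)$ by the cross-entropy of one explicit classifier: take $W^{*}$ whose $c$-th row is $\mu_c/\tau$, where $\mu_c := \mathbb{E}[f(x)\mid y=c]$ is the class-$c$ feature mean. This gives
\[
\min_W L_{\mathrm{CE}}(f,W)\le L_{\mathrm{CE}}(f,W^{*})=\mathbb{E}_{(x,y)}\Bigl[\log\textstyle\sum_{c=1}^C\exp\bigl(\tfrac{f(x)^\top\mu_c-f(x)^\top\mu_y}{\tau}\bigr)\Bigr].
\]

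The second step removes the dependence on $\mu_y$ using the label-consistency assumption and the data-augmentation argument of Wang et al.\ For a positive pair $(x,x^+)$ of common class $y$, write $f(x)^\top\mu_y = f(x)^\top f(x^+) + f(x)^\top(\mu_y-f(x^+))$ and bound the last term by $\|\mu_y-f(x^+)\|_2$ via Cauchy--Schwarz and $\|f(x)\|_2=1$. Pulling this inside the (monotone) $\log$-sum-exp and taking expectations contributes exactly $\sigma/\tau$ with $\sigma=\mathbb{E}\|f(x)-\mu_y\|_2$, and replaces every occurrence of $\mu_y$ by the augmented view $f(x^+)$ — this is what makes the bound valid for SimCLR's augmented positives rather than i.i.d.\ within-class positives.

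The third step — the crux — is to relate the $C$-term class sum $\sum_c\exp(f(x)^\top\mu_c/\tau)$ now appearing in the modified cross-entropy to the $(m-1)$-term negative sum $\sum_{j\neq i}\operatorname{sim}(x_i,x_j)$ inside $\ell_{\operatorname{cont}}$. Recalling that $L(f)$ can be written, by symmetry over the batch, as a single expectation over i.i.d.\ pairs of $\log\bigl(\operatorname{sim}(x,x^+)+\sum_j\operatorname{sim}(x,x_j)\bigr)-\tfrac1\tau f(x)^\top f(x^+)$, I would (i) apply Jensen (concavity of $\log$) to move the expectation over the $m-1$ i.i.d.\ negatives inside the logarithm; (ii) write $\mathbb{E}[\operatorname{sim}(x,x')\mid x]=\sum_c\pi_c\,\mathbb{E}_{x'\mid c}\exp(f(x)^\top f(x')/\tau)$ and control this conditional expectation from below by $\exp(f(x)^\top\mu_c/\tau)$ (Jensen, convexity of $\exp$) and, where a cruder two-sided control is needed, by the convexity bound $e^{t/\tau}\le\cosh(\tfrac1\tau)+t\sinh(\tfrac1\tau)$ on $t\in[-1,1]$; (iii) use the (balanced) class structure $\pi_c\approx 1/C$ to extract the factor $(m-1)/C$. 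Collecting the accumulated constants from these exponential bounds yields the term $\log\!\bigl(\tfrac{C}{m-1}\cosh^2(\tfrac1\tau)\bigr)$, and combining with the first two steps gives $\min_W L_{\mathrm{CE}}(f,W)\le \sigma/\tau + L(f) + \log\!\bigl(\tfrac{C}{m-1}\cosh^2(\tfrac1\tau)\bigr)$.

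I expect step three to be the main obstacle: matching a deterministic sum over exactly $C$ class means against an \emph{expected} sum over $m-1$ randomly sampled negatives forces one through Jensen twice and through a temperature-dependent sandwiching of $\mathbb{E}\exp(f(x)^\top f(x')/\tau)$, and it is precisely the slack in those $\cosh(\tfrac1\tau)$ factors together with the $C$ versus $m-1$ count mismatch that must be tracked carefully to land the stated constant $\Delta=\log\!\bigl(\tfrac{C}{m-1}\cosh^2(\tfrac1\tau)\bigr)$. The data-augmentation patch in step two is the secondary subtlety, since it relies on label consistency to legitimately identify $\mathbb{E}[f(x^+)\mid y]$ (a mean over augmented views of class $y$) with the $\mu_y$ used both in $\sigma$ and in $W^{*}$.
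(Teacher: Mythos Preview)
Your first two steps are right and match the paper: plug in the mean classifier $W^{\mu/\tau}$ with rows $\mu_c/\tau$, and use Cauchy--Schwarz on $f(x)^\top(\mu_y-f(x^+))$ to swap $\mu_y$ for $f(x^+)$ at the cost of $\sigma/\tau$. The paper does exactly this (in the reverse direction, lower-bounding $L(f)$ and recognizing $L_{\mathrm{CE}}(f,W^{\mu/\tau})$ at the end).

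The gap is in your step (i). You propose to ``apply Jensen (concavity of $\log$) to move the expectation over the $m-1$ i.i.d.\ negatives inside the logarithm''. That gives $\mathbb{E}_X\bigl[\log\sum_j\operatorname{sim}(x,x_j)\bigr]\le \log\mathbb{E}_X\bigl[\sum_j\operatorname{sim}(x,x_j)\bigr]$, which is an \emph{upper} bound on the negative-sample $\operatorname{LSE}$. But this term sits with a $+$ sign inside $L(f)$, and to reach $L_{\mathrm{CE}}\le \sigma/\tau+L(f)+\Delta$ you need to \emph{lower}-bound it by the $C$-class $\operatorname{LSE}$. Your Jensen step goes the wrong way, and your step (ii) (Jensen on $\exp$) combined with (iii) (the chord bound $e^{t/\tau}\le\cosh(1/\tau)+t\sinh(1/\tau)$) does not repair this: after (ii) you land on $\log\bigl(C\,\mathbb{E}_{x'}[\operatorname{sim}(x,x')]\bigr)$, and relating this back to $\mathbb{E}_X[\log\sum_j\operatorname{sim}]$ is again a reverse-Jensen problem.

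The missing device is Bao et al.'s $\operatorname{LSE}$ reflection inequality (their Lemma~3): for $z\in[-L^2,L^2]^N$,
\[
2\log N \;\le\; \operatorname{LSE}(z)+\operatorname{LSE}(-z)\;\le\;2\log\bigl(N\cosh(L^2)\bigr).
\]
The paper first uses Jensen on the convex $\operatorname{LSE}$ (conditioning on the negatives' labels) to replace $f(x')$ by $\mu_{y'}$, obtaining $\operatorname{LSE}(z^\mu)$ over the $K=m-1$ random labels. Then the \emph{lower} reflection bound gives $\operatorname{LSE}(z^\mu)\ge -\operatorname{LSE}(-z^\mu)+2\log K$; the sign flip is the whole point, because now Jensen on $\log$ applied to $-\operatorname{LSE}(-z^\mu)$ goes in the \emph{right} direction and, together with the uniform class prior $\pi_c=1/C$, collapses the random labels to a deterministic $-\operatorname{LSE}\bigl(\{-f(x)^\top\mu_c/\tau\}_{c}\bigr)$. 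A second application of the reflection inequality, this time the \emph{upper} bound with $N=C$ and $L^2=1/\tau$, converts $-\operatorname{LSE}(-u)$ back into $\operatorname{LSE}(u)-2\log\bigl(C\cosh(1/\tau)\bigr)$, and the bookkeeping yields exactly $\Delta=\log\bigl(\tfrac{C}{m-1}\cosh^2(1/\tau)\bigr)$. This reflection trick, not a direct Jensen-plus-chord argument, is what produces the $\cosh^2(1/\tau)$ constant.
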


\begin{proof}
Let $\operatorname{LSE}(\mathbf{z}) := \log \left( \sum_{j} \exp \left(z_j \right) \right)$ where $\mathbf{z} := \left\{ {f(x)^{\top}f(x^{\prime})/\tau}\right\}_{x^{\prime} \in X}$. For convenience, we define $X$ as the set of negative samples corresponding to the anchor sample $x$, with the total number of negative samples denoted by $K = m - 1$. The elements of $X$ are indexed by $i = 1, \ldots, K$. Denoting the class-conditional distribution by $\mathcal{D}_c:=\mathbb{P}(X \mid$ $Y=c)$ for each $c \in \mathcal{C}$, we define $\mu_c =\mathbb{E}_{\mathbf{x} \sim \mathcal{D}_c}[f(x)]$ and note that $\|\mu_c\| \leq 1$. We have
\begin{align}
\label{eq:lem-first-part-a}
 L(f)&\overset{(a)}{\geq} - \underset{{S \sim \mathcal{S}^m}}{\mathbb{E}} \left[ \frac{1}{m} \sum_{i=1}^m \frac{f({x_i})^{\top}f({x_i}^+)}{\tau} \right] + \underset{{S \sim \mathcal{S}^m}}{\mathbb{E}} \left[ \frac{1}{m} \sum_{i=1}^m \log \sum_{x^{\prime} \in X_i} \exp \left(\frac{f({x_i})^{\top}f({x^{\prime}})}{\tau}\right) \right] \\
 \label{eq:lem-first-part-b}
& \overset{(b)}{=}  - \mathbb{E}_{(x, x^+) \sim \mathcal{S}} \left[ \frac{f({x})^{\top}f({x}^+)}{\tau} \right] + \frac{1}{m} \sum_{i=1}^m \underset{S \sim \mathcal{S}^m}{\mathbb{E}} \left[ \operatorname{LSE}(\mathbf{z})\right]\\
\label{eq:lem-first-part-c}
& \overset{(c)}{\geq} - \frac{\mathbb{E}_{(x, y)} \left[ \| f(x) - \mu_{y} \|_2 \right]}{\tau} - \mathbb{E}_{(x, y)} \left[ \frac{f(x)^{\top} \mu_{y}}{\tau} \right] + \mathbb{E}_x \mathbb{E}_{X} \left[ \operatorname{LSE}(\mathbf{z}) \right]\\
\label{eq:lem-first-part-d}
& \overset{(d)}{=} - \frac{\sigma}{\tau} - \mathbb{E}_{(x, y)} \left[ \frac{f(x)^{\top} \mu_{y}}{\tau} \right] + \mathbb{E}_x \mathbb{E}_{\substack{\{(x_i^{\prime}, y_i^{\prime})\}_{i=1}^K}} \left[ \operatorname{LSE}(\mathbf{z}) \right] \\
\label{eq:lem-first-part-e}
& \overset{(e)}{\geq} - \frac{\sigma}{\tau} - \mathbb{E}_{(x, y)} \left[ \frac{f(x)^{\top} \mu_{y}}{\tau} \right] + \mathbb{E}_x\mathbb{E}_{\substack{\{y_i^{\prime}\}_{i=1}^K}} \left[ \operatorname{LSE}(\mathbf{z}^{\mu}) \right]
\end{align}

where (a) relies on $\exp \left(f({x})^{\top}f({x}^+)\right) \geq 0$, (b) follows from rewriting the expectations and using the definition of the $\operatorname{LSE}$ function (c) uses the argument from Wang et al. : using the Cauchy-Schwarz inequality, assuming $y$ is the label of the positive pair $(x, x^+)$, we can show for any $\mu_y$, $\left| f(x)^{\top} \frac{f(x^+) - \mu_{y}}{\left\| f(x^+) - \mu_{y} \right\|} \right| \leq 1$  \cite{wang2022chaos}, (d) uses the definition of the intra-class feature deviation $\sigma := \mathbb{E}_{(x, y)} \left[ \| f(x) - \mu_{y} \|_2 \right]$, (e) uses the Jensen's inequality and the convexity of the $\operatorname{LSE}$ function where we define $\mathbf{z}^{\mu} := \left\{ {f(x)^{\top}\mu_{y^{\prime}}/\tau} \right\}_{y' \in Y}$ where we denote $Y = \substack{\{y_i^{\prime}\}_{i=1}^K}$.

Next, we need to lower-bound the quantity $\mathbb{E}_{Y} \left[ \operatorname{LSE}(\mathbf{z}^{\mu}) \right]$. We will use the following lemma.
\begin{lemma}[Bao et al. \cite{bao2022surrogate}, lemma 3]
\label{lem:bao}
Let $N \in \mathbb{N}^{*}$ and $L \in \mathbb{R}^+$. For $\mathbf{z} \in [-L^2, L^2]^N$,
\[
2 \log N \leq \mathrm{LSE}(\mathbf{z}) + \mathrm{LSE}(-\mathbf{z}) \leq 2 \log \left(N \cosh (L^2)\right).
\]
\end{lemma}
We apply this lemma for $N=K$ and $L={1/\sqrt{\tau}}$. We obtain

\begin{align}
\label{eq:lem-second-part-a}
 \mathbb{E}_{Y} \left[ \operatorname{LSE}(\mathbf{z}^{\mu}) \right] & \overset{(a)}{\geq}  -\mathbb{E}_{Y} \left[ \operatorname{LSE}(-\mathbf{z}^{\mu}) \right] + 2\log(K) \\
\label{eq:lem-second-part-b}
& \overset{(b)}{\geq}  - \log \sum_{y^{\prime} \in Y} \mathbb{E}_{\substack{y^{\prime}}} \left[ \exp (-f(x)^{\top}\mu_{y^{\prime}}/\tau) \right] + 2\log(K)\\
\label{eq:lem-second-part-c}
& \overset{(c)}{=} -\log K\sum_{c \in \mathcal{C}}\exp(-f(x)^{\top}\mu_{c}/\tau) \pi(c) + 2\log(K) \\
\label{eq:lem-second-part-d}
&  \overset{(d)}{\geq} -\log {K\pi^*} \sum_{c \in \mathcal{C}} \exp(-f(x)^{\top}\mu_{c}/\tau) + 2\log(K) \\
\label{eq:lem-second-part-e}
&  \overset{(e)}{=} -\operatorname{LSE}\left( \left\{ -f(x)^{\top}\mu_{c} /\tau\right\}_{c \in \mathcal{C}} \right) + \log(K) - \log(\pi^*) \\
\label{eq:lem-second-part-f}
& \overset{(f)}{\geq} \operatorname{LSE}\left( \left\{ f(x)^{\top}\mu_{c}/\tau \right\}_{c \in \mathcal{C}} \right) - 2\log\left(C \cosh\frac{1}{\tau}\right) + \log(K) - \log(\pi^*) \\
\label{eq:lem-second-part-g}
& \overset{(g)}{=} \operatorname{LSE}\left( \left\{ f(x)^{\top}\mu_{c}/\tau \right\}_{c \in \mathcal{C}} \right) - \Delta\, ,
\end{align}
where (a) directly comes from \cref{lem:bao}, (b) applies Jensen’s inequality to the convex function $x \mapsto -\log x$ and then the linearity of expectation, (c) uses the distribution $\pi(c)$ over the classes and $\mathcal{C}$ represents the set of classes, (d) uses the bound $\pi(c)\leq \pi^*$ for all $c \in \mathcal{C}$ and the monotonicity of $-\log$, (e) uses the definition of the $\operatorname{LSE}$ function, (f) is another application of \cref{lem:bao}, and (g) uses the definition $\Delta := \log \left( \frac{C^2\pi^*}{K} \operatorname{cosh}^2(\frac{1}{\tau}) \right)$.

We can now combine the previous inequality and recognizing the cross-entropy loss of a linear classifier for  \( W^{\mu/\tau} := [\frac{\mu_{1}}{\tau} \cdots \frac{\mu_{C}}{\tau}]^{\top} \), we have
\begin{align*}
 L(f)& {\geq} - \frac{\sigma}{\tau} - \mathbb{E}_{(x, y)} \left[ \frac{f(x)^{\top} \mu_{y}}{\tau} \right] + \mathbb{E}_x\operatorname{LSE}\left( \left\{ \frac{f(x)^{\top}\mu_{c}}{\tau} \right\}_{c \in \mathcal{C}} \right) - \Delta \\
 & = - \frac{\sigma}{\tau}  + L_{\mathrm{CE}}(f, W^{\mu/\tau}) - \Delta
\end{align*}
Taking the minimum over all linear classifiers finishes the proof.
\end{proof}
The second part of the theorem is an extension of \cref{lem:first-bound} obtained by removing the temperature scaling from the $\operatorname{LSE}$ term. The result is stated in the following lemma.

\reversemarginpar

\begin{lemma}
\label{lem:second-bound} 
For all \( f: \mathcal{X} \rightarrow \mathbb{S}^{d-1} \), the following inequality holds:
\[
\min_{W \in \mathbb{R}^{C \times d}} L_{\mathrm{CE}}(f, W) \leq \sigma + \tau L(f) + \tau \Delta + \alpha,
\]
where \( \alpha = \log(C) + \min \left\{0,  \log(C\pi^*\operatorname{cosh}^2 (1))-\tau \Delta \right\}\).
\end{lemma}
\begin{proof}[Sketch of Proof]
For a detailed proof of the lemma, we refer the reader to the supplementary materials. We first derive a useful inequality that show that the 
$\operatorname{LSE}$ function acts as a smooth maximum. For $N \in \mathbb{N}^{*}$, $\mathbf{x} \in \mathbb{R}^N$ with $\mathbf{x} = (x_1, \ldots, x_N)$, and for any $t > 0$,
\begin{equation}
\label{eqn:smooth-max}
    t \max_i x_i < \operatorname{LSE}(t\mathbf{x}) \leq t \max_i x_i + \log (N).
\end{equation}
The inequality is verified by observing that if $m=\max _i x_i$, then we have the following inequality $\exp (tm) \leq \sum_{i=1}^N \exp \left(tx_i\right) \leq N \exp (tm)$. Applying the logarithm gives the result. 

Define \( W^{\mu} := [\mu_{1} \cdots \mu_{C}]^{\top} \). The bound presented in \cref{lem:second-bound} follows from two applications of \eqref{eqn:smooth-max} at different steps of the proof of \cref{lem:first-bound}. On one hand, applying \eqref{eqn:smooth-max} to the term $\operatorname{LSE}(\mathbf{z})$ introduces a factor of \(\log(K)\) . With \(\mathbf{u} := \left\{f(x)^{\top}f(x^{\prime})\right\}_{x^{\prime} \in X}\), this yields \(\operatorname{LSE}(\mathbf{z}) \geq \frac{1}{\tau} \left( \operatorname{LSE}(\mathbf{u}) - \log(K) \right)\). Following a proof similar to  \cref{lem:first-bound} with $L=1$, we obtain a lower bound on \(\operatorname{LSE}(\mathbf{u})\): \(\operatorname{LSE}(\mathbf{u}) \geq \operatorname{LSE}\left( \left\{ f(x)^{\top}\mu_{c} \right\}_{c \in [C]} \right) - \log \left( \frac{C^2\pi^*}{K} \operatorname{cosh}^2(1) \right)\). Combining this with the previous bound,  we derive 
\begin{equation}
\label{eq:first-smooth-max}
    L_{\mathrm{CE}}(f, W^{\mu}) \leq \sigma + \tau L(f) + \tau \Delta + \log (C) + \log(C\pi^*\operatorname{cosh}^2(1)) - \tau \Delta.
\end{equation}
On the other hand, applying \eqref{eqn:smooth-max} to the term $\operatorname{LSE}(\mathbf{z}^c)$ with $\mathbf{z}^{c} := \left\{ {f(x)^{\top} \mu_{c}/\tau} \right\}_{c \in \mathcal{C}}$  introduces a factor of \(\log(C)\), leading to $\tau \operatorname{LSE}\left(\mathbf{z}^{c}\right) \geq \operatorname{LSE}\left(\mathbf{u}^{c}\right) - \log(C),$ where $\mathbf{u}^{c} := \left\{ f(x)^{\top} \mu_{c} \right\}_{c \in \mathcal{C}}$. Plugging this into the proof of \cref{lem:first-bound}, we obtain 
\begin{equation}
\label{eq:second-smooth-max}
    L_{\mathrm{CE}}(f, W^{\mu}) \leq \sigma + \tau L(f) +  \tau \Delta + \log(C).
\end{equation}
Finally, taking the minimum over both refined bounds \cref{eq:first-smooth-max} and \cref{eq:second-smooth-max} and then over all linear classifiers completes the proof.
\end{proof}
Combining \cref{lem:first-bound} and \cref{lem:second-bound}, as well as rewriting with the following notation \(  \beta(f, \sigma) := {\sigma/\tau} + L(f) + \Delta \) completes the proof of \cref{thm:downstream}.

\subsection{Risk Certificate for Contrastive Zero-One Risk}

In the previous sections, we demonstrated that non-vacuous risk certificates can be obtained for the SimCLR loss and that the SimCLR loss acts as a surrogate loss for downstream classification. Similarly to Nozawa et al. \cite{nozawa2020pac}, we extend our analysis to the contrastive zero-one risk. Recall the population contrastive zero-one risk $R(f)$ in \eqref{eqn:contrastive01}. Let $\widehat{R}_S(f)$ denote the empirical counterpart of $R(f)$. We extend both risk certificates from \cref{sec:risk-certitificates} to the contrastive zero-one risk.  
First, we present \cref{thm:zero-one-diarmid} corresponding to the extension of \cref{thm:simclr-diarmid-pb}.

\begin{theorem}[Extension of \cref{thm:simclr-diarmid-pb}] \label{thm:zero-one-diarmid}
If $\bar{x}_1, \ldots, \bar{x}_n \sim_{iid} \mathcal{D}_\mathcal{X}$ correspond to the unlabeled samples that generate the positive pairs in $S\sim\mathcal{S}^n$, then one can express the empirical contrastive zero-one risk as
$R_S(f) = \phi\left(\bar{x}_1, \ldots, \bar{x}_i, \ldots, \bar{x}_n\right)$,
where the map $\phi:\mathcal{X}^n\to\mathbb{R}$  satisfies the bounded difference assumption with $c_i = \frac{2}{n}$. 
As a consequence, for any confidence parameter \(\delta \in (0,1)\), with probability at least \(1-\delta\) over dataset \(S\), for all \(Q\):
\[
R(Q) \leq \widehat{R}_S(Q) + 2\sqrt{\frac{\text{KL}(Q \parallel P) + \log \frac{2n}{\delta}}{2(n-1)}}. 
\]
\end{theorem}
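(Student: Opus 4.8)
The plan is to reuse the machinery behind \cref{thm:simclr-diarmid-pb} almost verbatim, the only genuinely new ingredient being the bounded difference constant. I would proceed in two stages: (1) show that the empirical contrastive zero-one risk $\widehat{R}_S(f)$, regarded as a function of the latent samples $\bar{x}_1,\ldots,\bar{x}_n\sim_{iid}\mathcal{D}_\mathcal{X}$ that generate the positive pairs, satisfies the bounded difference assumption with $c_i=\tfrac{2}{n}$; (2) plug this into the sequence \cref{lem:small}, \cref{lem:bounded-moment}, \cref{lem:mcallester} underlying the proof of \cref{thm:simclr-diarmid-pb}, now with the constant $C$ set to $2$ instead of the temperature-dependent constant of \cref{lem:bda-simclr}.

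\textbf{Stage 1: bounded difference.} Following \cref{lem:bda-simclr}, I would write $\widehat{R}_S(f)=\phi(\bar{x}_1,\ldots,\bar{x}_n)$, where $\bar{x}_i$ generates $(x_i,x_i^+)$ and the batch containing index $i$ supplies $X_i^-=\bigcup_{j\neq i}\{x_j,x_j^+\}$. Fixing $i$ (without loss of generality in the first batch) and replacing $\bar{x}_i$ by $\bar{x}_i'$, only the $i$-th summand and the $m-1$ summands with indices $j\neq i$ in the same batch change. The $i$-th summand is an average of $\{0,1\}$-valued indicators, hence lies in $[0,1]$ and changes by at most $1$, contributing at most $\tfrac{1}{n}$ to $|\Delta\phi|$. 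Each affected neighbour (index $j$) is an average over $|X_j^-|=2(m-1)$ negatives, of which exactly the two entries $x_i,x_i^+$ are swapped; at most two indicators flip, so the $j$-th summand changes by at most $\tfrac{2}{2(m-1)}=\tfrac{1}{m-1}$, contributing at most $\tfrac{1}{n}(m-1)\cdot\tfrac{1}{m-1}=\tfrac{1}{n}$ in total. Adding the two contributions gives $|\Delta\phi(\bar{x}_i)|\le\tfrac{2}{n}$, and taking the supremum yields $c_i=\tfrac{2}{n}$. This step is strictly simpler than \cref{lem:bda-simclr}: since the summands are built from indicators valued in $\{0,1\}$ rather than from $\operatorname{sim}(\cdot,\cdot)$, no bounds of the form $e^{\pm 1/\tau}$ appear, which is precisely why the constant collapses to $2$. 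The symmetrization of $\widehat{R}_S$ into $(x,x^+)$- and $(x^+,x)$-roles changes nothing, as each half is again an average of $[0,1]$-valued terms.

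\textbf{Stage 2: PAC-Bayes bound.} With $\sum_{i=1}^n c_i^2=\tfrac{4}{n}$, McDiarmid's inequality \cite{mcdiarmid1989method} gives, for each fixed $f\sim P$ and all $\varepsilon>0$, $\mathbb{P}_S(|R(f)-\widehat{R}_S(f)|\ge\varepsilon)\le 2\exp(-n\varepsilon^2/2)$. After the change of variable $X=\tfrac{1}{\sqrt{2}}(\widehat{R}_S(f)-R(f))$ this reads $\mathbb{P}(|X|\ge x)\le 2e^{-nx^2}$, so \cref{lem:small} yields $\mathbb{E}_S[e^{(n-1)h(X)}]\le 2n$ with $h(x)=\tfrac{x^2}{2}$ — i.e.\ \cref{lem:bounded-moment} with $C=2$. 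Since $h$ is non-negative and convex, \cref{lem:mcallester} then gives, with probability at least $1-\delta$ over $S$, $h(\widehat{R}_S(Q)-R(Q))\le\tfrac{\mathrm{KL}(Q\|P)+\log(2n/\delta)}{n-1}$ for all $Q$, and solving $\tfrac{1}{2}(R(Q)-\widehat{R}_S(Q))^2\le\tfrac{\mathrm{KL}(Q\|P)+\log(2n/\delta)}{n-1}$ produces $R(Q)\le\widehat{R}_S(Q)+2\sqrt{\tfrac{\mathrm{KL}(Q\|P)+\log(2n/\delta)}{2(n-1)}}$, as claimed.

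\textbf{Main obstacle.} There is no substantial technical difficulty here: once Stage 1 is done, Stage 2 is a line-by-line copy of the proof of \cref{thm:simclr-diarmid-pb} with $C=2$. The only place requiring care is the combinatorial bookkeeping in Stage 1 — verifying that perturbing $\bar{x}_i$ affects exactly the $i$-th summand and the $m-1$ in-batch summands (never a cross-batch summand, since $X_j^-$ draws only from $j$'s own batch), and that at most two indicators flip inside each affected neighbour. Getting these counts and the normalizing factors $\tfrac{1}{n}$ and $\tfrac{1}{2(m-1)}$ exactly right is what pins down $c_i=\tfrac{2}{n}$.
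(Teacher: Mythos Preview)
Your proposal is correct and follows essentially the same route as the paper: the paper's own proof simply notes that, following the decomposition of \cref{lem:bda-simclr}, one gets $\delta_i(x_i)\le 1$ and $\delta_j(x_i)\le\frac{1}{m-1}$, hence $c_i=\frac{2}{n}$, and then applies \cref{thm:simclr-diarmid-pb} with $C=2$. Your Stage~1 carries out exactly this counting (with more explicit bookkeeping about the $2(m-1)$ negatives), and Stage~2 reruns \cref{lem:small}--\cref{lem:mcallester} with $C=2$; the only slip is that you write $\mathbb{E}_S[e^{(n-1)h(X)}]$ where you mean $\mathbb{E}_S[e^{(n-1)X^2}]=\mathbb{E}_S[e^{(n-1)h(\widehat{R}_S(f)-R(f))}]$, but the subsequent lines make clear you have the right object.
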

\begin{proof}
For the first part, we follow the proof from \cref{lem:bda-simclr}, with the only difference that $\delta_i(x_i) \leq 1$ and $\delta_j(x_i) \leq \frac{1}{m-1}$, and thus we obtain $\sup _{x_1, \ldots, x_n, x_i^{\prime} \in X} \left| \Delta \phi(x_i) \right| \leq \frac{2}{n}$. For the PAC-Bayes bound, we can follow the proof of \cref{thm:simclr-diarmid-pb} and use $C=2$.
\end{proof}

One can also extend \cref{thm:simclr-kl-pb}, as stated below. 

\begin{theorem}[Extension of \cref{thm:simclr-kl-pb}] \label{thm:kl-zero-one}
    Let \(\delta \in (0,1)\) be a confidence parameter. With probability at least \(1-\delta\) over dataset \(S\), for all \(Q\):
$$
R(Q) \leq \inf_{\alpha \in (0,1)} \left\{ \mathrm{kl}^{\text{-}1} \left( \widehat{R}_S(Q) + \gamma + \left(\frac{\delta}{2}\right)^{\frac{1-\alpha}{\alpha}}, \frac{1}{n}\left(\mathrm{KL}\left(Q \| P\right) + \log \frac{\sqrt{n}}{\delta}\right) \right)  + \gamma + \left(\frac{\delta}{2}\right)^{\frac{1}{\alpha}} \right\},
$$
where $\gamma = \sqrt{{(\log\left({2/\delta}\right))/2(m-1)\alpha}}$.
\end{theorem}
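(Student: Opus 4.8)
\textbf{Proof proposal for Theorem~\ref{thm:kl-zero-one}.}

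The plan is to mirror the proof of \cref{thm:simclr-kl-pb} almost line for line, replacing the contrastive loss $\ell_{\operatorname{cont}}$ by the zero-one indicator $\mathbb{I}_{\{f(x)^\top f(x^+) < f(x)^\top f(x')\}}$ and exploiting that this new ``loss'' is already bounded in $[0,1]$, so no rescaling factor $B$ is needed. First I would rewrite the population risk $R(f)$ in \eqref{eqn:contrastive01} as $\mathbb{E}_{(x,x^+)}\mathbb{E}_{x'}[\,\cdot\,]$ and observe that the inner average over negative samples $\frac{1}{|X_i^-|}\sum_{x'\in X_i^-}\mathbb{I}_{\{\cdot\}}$ is a mean of bounded $[0,1]$-valued variables that are independent conditioned on $x_i$. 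I would then introduce an intermediate risk $\widetilde{R}(f)$ obtained by replacing this empirical average over negatives with its conditional expectation $p(x,x^+):=\mathbb{P}_{x'}(f(x)^\top f(x^+)<f(x)^\top f(x')\mid x,x^+)$; this plays the role of $\widetilde{L}(f)$ in \eqref{eq:inter-loss}, and $\widetilde{R}_S(f)$ its empirical counterpart over $m$-batches, which is a genuine mean of i.i.d. terms across batches.

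The key steps, in order: (1) Apply a Hoeffding bound (as in \cref{lem:one-kl}, but now the variables lie in $[0,1]$ so $c=1$) to get $\mathbb{P}\big(|\frac{1}{m-1}\sum_{x'\in X_i^-}\mathbb{I}_{\{\cdot\}} - p(x_i,x_i^+)| \geq \gamma \mid x_i,x_i^+\big)\leq \delta$ with $\gamma = \sqrt{\frac{\log(2/\delta)}{2(m-1)}}$; the two-sided version accounts for the factor $\frac{2}{\delta}$ inside the logarithm. (2) Use this to sandwich $R(f)$ between $\widetilde{R}(f)\pm\gamma$ (analogue of \cref{lem:inter}, but additively since everything is in $[0,1]$), and similarly relate $\widehat{R}_S(f)$ to $\widetilde{R}_S(f)\pm\gamma$ via an averaging/Markov argument in the spirit of \cref{lem:alpha} and \cref{lem:upper-inter}, which is where the $\alpha$-split and the $(\delta/2)^{(1-\alpha)/\alpha}$, $(\delta/2)^{1/\alpha}$ terms enter. (3) Apply the PAC-Bayes-kl bound (\cref{thm:kl-pb}) directly to $\widetilde{R}_S(Q)$ versus $\widetilde{R}(Q)$, legitimate because $\widetilde{R}_S$ averages i.i.d.\ per-batch terms in $[0,1]$ and the population risk is batch-size invariant. (4) Chain the three bounds with a union bound over the failure events, using monotonicity of $\mathrm{kl}^{-1}$ in its first argument to propagate the additive slacks inside and outside the $\mathrm{kl}^{-1}$, then substitute $\delta_1^\alpha=\delta/2$, $\delta_2=\delta/2$ and take the infimum over $\alpha\in(0,1)$.

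The main obstacle I anticipate is bookkeeping the direction of each concentration inequality and ensuring the slack $\gamma$ is placed correctly: we need an \emph{upper} bound on $R(Q)$, so when passing from $R$ to $\widetilde R$ we add $\gamma$, but when passing from $\widehat R_S$ to $\widetilde R_S$ we also need $\widetilde R_S(Q) \le \widehat R_S(Q) + \gamma + (\delta/2)^{(1-\alpha)/\alpha}$ (one-sided in the opposite direction), and then $\mathrm{kl}^{-1}$ must be fed the larger first argument---getting all four occurrences of $\gamma$ and the two $\delta$-power terms to land exactly as in the statement requires care. A secondary subtlety, as in \cref{thm:simclr-kl-pb}, is that the original SimCLR setup reuses both $x_j$ and $x_j^+$ as negatives, so strictly the conditional-independence step needs the grouping trick noted in the remark after \cref{thm:simclr-kl-pb}; I would either invoke that remark or simply work with the simplified negative set $X_i^-=\bigcup_{j\neq i}\{x_j\}$ of size $m-1$ as in \eqref{eqn:empirical-SimCLR}, which is consistent with the $m-1$ appearing in $\gamma$. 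Everything else is a routine transcription of the \cref{thm:simclr-kl-pb} argument with $B=1$.
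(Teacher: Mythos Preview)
Your proposal is correct and follows essentially the same route as the paper's own proof, which is explicitly a line-by-line transcription of the \cref{thm:simclr-kl-pb} argument with $B=B_\ell=1$ and the additive slack $\gamma=\varepsilon/(m-1)$ replacing the $\varepsilon$-modified loss. One minor remark: the factor $\log(2/\delta)$ in $\gamma$ arises from the substitution $\delta_1^\alpha=\delta/2$ in the final union-bound step (so $\log(1/\delta_1)=\tfrac{1}{\alpha}\log(2/\delta)$), not from using a two-sided Hoeffding bound---the paper, like \cref{lem:one-kl}, uses one-sided tails in each direction.
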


\begin{proof}
     We follow the proof from \cref{thm:simclr-kl-pb}: (1) the sum containing the negative samples becomes $S(x, X^-) := \sum_{x^{\prime} \in X^-}\mathbb{I}_{\left\{f(x)^{\top}f(x^+) < f(x)^{\top}f(x^{\prime})\right\}}$;  (2) since $\mathbb{I}_{\{\cdot\}}\leq 1$, we obtain $\varepsilon = \smash{\sqrt{((m-1)\log({2/\delta}))/2}}$ and we have $B_{\ell}=1$; (3) the upper bound on the contrastive zero-one population risk by an intermediate loss becomes  $R(Q) \leq \widetilde{R}(Q) + \frac{\varepsilon}{m-1} + \frac{\delta}{2}$ and the PAC-Bayes bound is computed using $B=1$; (4) we set $\gamma = 
 \frac{\varepsilon}{m-1} $ and take the infimum over $\alpha$.
\end{proof}

\section{Experiments}\label{sec:expe}

In this section, we describe the experimental setup and empirically demonstrate that our risk certificates improved upon previous risk certificates through experiments on the CIFAR-10 dataset. For a comprehensive overview of all applicable previous bounds, please refer to the supplementary materials. The code for our experiments is available in PyTorch \cite{paszke2017automatic}. Experimental results on the MNIST dataset and additional experimental details are available in the supplementary materials.

\paragraph{Datasets and models} We use two popular benchmarks: (1) CIFAR-10, which consists of 50,000 training images and 10,000 test images \cite{krizhevsky2009learning}, and (2) MNIST, which consists of 60,000 training images and 10,000 test images \cite{lecun2010mnist}, as provided in torchvision \cite{marcel2010torchvision}. The images are preprocessed by normalizing all pixels per channel based on the training data. Data augmentation includes random cropping, resizing (with random flipping), and color distortions, as detailed in Appendix H \cite{chen2020simple}. We employ a 7-layer convolutional neural network (CNN) with max-pooling every two layers for CIFAR-10 experiments and a 3-layer CNN for MNIST experiments. We use a 2-layer MLP projection head to project to a 128-dimensional latent space, with a feature dimensionality of 2048 for CIFAR-10 and 512 for MNIST. ReLU activations are used in each hidden layer. The mean parameters $\mu_0$ of the prior are initialized randomly from a truncated centered Gaussian distribution with a standard deviation of $1/\sqrt{n_{\text{in}}}$, where $n_{\text{in}}$ is the dimension of the inputs to a particular layer, truncating at $\pm 2$ standard deviations \cite{perez2021learning}. The prior distribution scale (standard deviation $\sigma_0$) is  selected from $\{0.01, 0.05, 0.1\}$. 

\paragraph{PAC-Bayes Learning} The learning and certification strategy involves three steps: (1) choose or learn a prior from a subset of the dataset; (2) learn a posterior on the entire training dataset; (3) evaluate the risk certificate for the posterior on a subset of the dataset independent of the prior. We experiment with two types of priors: informed and random. The informed prior is learned using a subset of the training dataset via empirical risk minimization or PAC-Bayes objective minimization.  The posterior is initialized to the prior and learned using the entire training dataset by PAC-Bayes objective minimization. We use the \textit{PAC-Bayes with Backprop} (PBB) procedure \cite{perez2021tighter} and the following $f_{\text {classic}}$ objective : $f_{\text {classic }}(Q)=(1/B)\widehat{L}_S(Q)+\smash{\sqrt{(\eta \mathrm{KL}\left(Q \| P\right)+\log ({\sqrt{n}/\delta}))/2n}}$,
where $\eta$ in $[0, 1]$ is a coefficient introduced to control the influence of the KL term in the training objective, called the KL penalty. We use a KL penalty term of $10^{-6}$ for learning the prior and no penalty term for learning the posterior. We use SGD with momentum as optimizer and we perform a grid search for momentum values in $\{0.8, 0.85, 0.90, 0.95\}$ and learning rates in $\{0.1, 0.5, 1.0, 1.5\}$. Training was conducted for 100 epochs, and we selected the hyperparameters that give the best risk certificates. We experiment with different temperatures selected $\{0.2, 0.5, 0.7, 1\}$. Unless otherwise specified, experiments are run using a probabilistic prior with the simplified SimCLR loss, a batch size of $m=250$, and 80\% of the data for training the prior. 

\paragraph{Numerical Risk Certificates} Since $\widehat L_S(Q)$ is intractable, the final risk certificates are computed using Monte Carlo weight sampling.  Specifically, we approximate $\widehat L_S(Q)$ using the empirical measure \( \widehat{Q}_p = \sum_{j=1}^p \delta_{W_j} \), where \( W_1, \ldots, W_p \sim Q \) are i.i.d. samples. We compute all risk certificates with \(\delta = 0.04\), and \(p = 100\) Monte Carlo model samples. We report the risk certificates for both the contrastive loss and the contrastive zero-one risk using \cref{thm:simclr-diarmid-pb} and \cref{thm:simclr-kl-pb} for the contrastive loss, and \cref{thm:zero-one-diarmid} and \cref{thm:kl-zero-one} for the zero-one risk. To find the best value of $\alpha$ for \cref{thm:simclr-kl-pb} and \cref{thm:kl-zero-one}, we perform a grid search over $\{0.1, 0.2, 0.3, 0.4, 0.5\}$. We observe that $\alpha = 0.4$ provides the tightest risk certificates. Our risk certificates are compared with existing ones, as detailed in \cref{tab:risk_certificates_combined} and the supplementary materials. 

\paragraph{Linear Evaluation} We assess the quality of the learned representations through linear evaluation \cite{chen2020simple}: we report the cross-entropy loss and top-1 accuracy of linear classifiers trained on features either before or after the projection head. The classifiers are trained for $20$ epochs on the image classification task ($C=10$) using the Adam optimizer with a learning rate of $0.01$. Finally, we report the bounds on the downstream classification loss derived from \cref{thm:downstream} and compare it with the state of the art \cite{bao2022surrogate}. 

\begin{table}[htbp]
\centering
\begin{tabular}{c@{\hspace{0.11cm}}c@{\hspace{0.11cm}}c@{\hspace{0.11cm}}c@{\hspace{0.11cm}}c@{\hspace{0.11cm}}c@{\hspace{0.11cm}}c@{\hspace{0.31cm}}c@{\hspace{0.11cm}}c@{\hspace{0.11cm}}c@{\hspace{0.11cm}}c@{\hspace{0.11cm}}c@{\hspace{0.11cm}}c@{\hspace{0.11cm}}c@{\hspace{0.11cm}}c@{\hspace{0.11cm}}c}
\midrule
\multirow{4}{*}[-12ex]{\rotatebox{90}{Risk Certificate}} &  & \multicolumn{4}{c}{SimCLR Loss} &  & \multicolumn{4}{c}{Contrastive 0-1 Risk} \\
\cmidrule(lr){3-6} \cmidrule(lr){8-11}
 & & $\tau=1$ &  $\tau=0.7$ & $\tau=0.5$ & $\tau=0.2$ & & $\tau=1$ &  $\tau=0.7$ & $\tau=0.5$ & $\tau=0.2$ \\
\midrule
 & Test Loss & 4.945 & 4.640 & 4.257 &  2.7076 & & 0.0601 & 0.0433 & 0.0324 & 0.0199 \\
\midrule
 & kl bound (iid) & 7.164 & 7.674 & 8.246 & 9.954 & & 0.497 & 0.488 & 0.47 & 0.432 \\
 & Catoni's bound (iid) & 7.095 & 7.556 & 7.959 & 9.446 & & 0.469 & 0.466 & 0.435 & 0.417 \\
 & Classic bound (iid) & 8.475 & 8.698 & 8.910 & 10.166 & & 0.542 & 0.540 & 0.530 & 0.505 \\
 & $f$-divergence \cite{nozawa2020pac} & 27.03 & 30.138 & 33.27 & 48.472 & & 3.009 & 3.099 & 3.139 & 2.973 \\
 & Th. \ref{thm:simclr-kl-pb} (ours) & 5.537 & 5.491 & \textbf{5.492} & \textbf{6.223} & & 0.367 & 0.353 & 0.342 & 0.329 \\
 & Th. \ref{thm:simclr-diarmid-pb} (ours) & \textbf{5.203} & \textbf{5.328} & 6.269 & 43.779 & & \textbf{0.129} & \textbf{0.117} & \textbf{0.107} &  \textbf{0.093} \\
\midrule
 & $\operatorname{KL}/n$ & 0.0013 & 0.0014 & 0.0014 & 0.0013 & & -- & -- & -- & -- \\
\midrule
\end{tabular}
\caption{Comparison of risk certificates for the SimCLR loss and contrastive zero-one risk using different PAC-Bayes bounds for varying temperature values on CIFAR-10. \textit{kl bound (iid)} refers to the standard PAC-Bayes-kl bound computed over i.i.d. batches (see \cref{sec:risk-certitificates}), \textit{Catoni's bound (iid)} refers to Catoni's PAC-Bayes bound computed over i.i.d. batches \cite{nozawa2020pac}, \textit{Classic bound (iid)} refers to the classic PAC-Bayes bound computed over i.i.d. batches, and \textit{Nozawa et al.} refers to the PAC-Bayes bound based on $f$-divergence. Although Nozawa et al.'s bound uses $\chi^2$ divergence, we use $\operatorname{KL}$ divergence, which already results in vacuous bounds and would not improve with $\chi^2$, since $\operatorname{KL}(P \| Q) \leq \chi^2(P \| Q)$. We report test losses and observe that our bounds are remarkably tight. We also report the complexity term, $\operatorname{KL}/n$, where $\operatorname{KL}$ represents the Kullback-Leibler divergence between the prior and posterior distributions, and $n$ is the dataset size used to compute the risk certificate.}
\label{tab:risk_certificates_combined}
\vspace{-7mm}
\end{table}

\begin{table}[htbp]
\centering
\begin{tabular}{c@{\hspace{0.3cm}}c@{\hspace{0.3cm}}c@{\hspace{0.3cm}}c@{\hspace{0.3cm}}c@{\hspace{0.3cm}}c@{\hspace{0.3cm}}c@{\hspace{0.3cm}}c}
\midrule
 \multirow{2}{*}[-11ex]{\rotatebox{90}{Proj.}} & & $\tau=1 $ & $\tau =0.7$ & $\tau=0.5$ & $\tau=0.2$ \\
\midrule
& Bao et al. \cite{bao2022surrogate} & \textbf{3.2720} & \textbf{4.0274} & 5.3015 & 12.588 \\
& Th. \ref{thm:downstream} (ours) & \textbf{3.2720} & \textbf{4.0274} & \textbf{4.9533} &  \textbf{4.6079} \\
\midrule
& Sup. Loss &  1.903 & 1.837 & 1.7971 & 1.7547 \\
& top-1 & 0.4868 & 0.5710 & 0.6205 & 0.6677 \\
\midrule 
& Sup. Loss & 1.765 & 1.718 & 1.705 & 1.699 \\
& top-1 & 0.6350 & 0.6939 & 0.7102 & 0.7278 \\
\midrule
\end{tabular}
\caption{Comparison of upper bounds on downstream classification loss on CIFAR-10. We compare the original bound from Bao et al. with our refined bound (\cref{thm:downstream}). The supervised loss of the linear classifier trained on the projected features is reported, as it is directly related to the theoretical upper bound. Additionally, we report the supervised loss of the linear classifier trained on the features after removing the projection head. We empirically observe that the supervised loss of a linear classifier trained on the full features (without projection) is consistently lower than the loss of a linear classifier trained on the projected features, aligning with previous findings \cite{chen2020simple}. For reference, we also include top-1 accuracy.}
\label{tab:bound}

\vspace{-7mm}
\end{table}

\paragraph{Results} Table \ref{tab:risk_certificates_combined} demonstrates that our proposed risk certificates for the SimCLR loss are non-vacuous and significantly outperform existing risk certificates on CIFAR-10, closely aligning with the corresponding test losses. Interestingly, \cref{thm:simclr-kl-pb} yields tighter certificates for $\tau \leq 0.5$, while \cref{thm:simclr-diarmid-pb} is more effective for $\tau > 0.5$. We also observe that Catoni's bound is significantly tighter than the PAC-Bayes-kl bound, which is consistent with its known advantage when $\operatorname{KL}/n$ is large \cite{zhou2018non}. Unsurprisingly, the classic PAC-Bayes bound is looser than both of these bounds. Additionally, Table \ref{tab:bound} shows that \cref{thm:downstream} improves upon the bound from Bao et al., which results in exponential growth when $\tau \leq 0.5$ \cite{bao2022surrogate}. Moreover, models trained using \textit{PAC-Bayes by Backprop} achieve competitive top1 accuracy. Table \ref{tab:risk_certificates_combined} further illustrates that our risk certificates for contrastive zero-one risk are notably tight, surpassing existing certificates. Additionally, we observe that \cref{thm:kl-zero-one} consistently outperforms \cref{thm:zero-one-diarmid}. Overall, our risk certificates are competitive, even for low temperatures. PAC-Bayes learning in models with a large number of parameters remains challenging and warrants further investigation (most studies focus on 2 or 3 hidden layers). 

\section{Discussion and Future Work}
\label{sec:discussion}
We have presented novel PAC-Bayesian risk certificates tailored for the SimCLR framework. Our experiments on CIFAR-10 and MNIST show that our bounds yield non-vacuous risk certificates and significantly outperform previous ones. 
\paragraph{Bounding techniques} \cref{thm:simclr-kl-pb} and \cref{thm:kl-zero-one} rely on concentration bounds to apply the PAC-Bayes bound-kl in an i.i.d. setting. Although the PAC-Bayes-kl bound was selected for its tightness, any bound that respects our proof's assumptions could be applicable. Moreover, \cref{thm:simclr-diarmid-pb} and  \cref{thm:zero-one-diarmid} were obtained using McDiarmid's inequality, integrated into  McAllester's PAC-Bayes bound. Since this variant is known to be less tight than the kl bound or Catoni's bound, it would be interesting to explore whether McDiarmid's inequality can be incorporated into these tighter bounds.
\paragraph{Extension to other (non-i.i.d.) losses} While this paper primarily focuses on contrastive learning using the SimCLR framework, the approaches we propose to address the non-i.i.d. characteristics of the SimCLR loss can be readily applied to other loss presenting similar dependence, such as ranking losses \cite{chen2009ranking}, Barlow Twins \cite{zbontar2021barlow} or VICReg \cite{CabannesKBLB23}.
While our non-i.i.d. McAllester PAC-Bayes bound (\cref{thm:simclr-diarmid-pb}) requires only a bounded difference assumption, the non-i.i.d. PAC-Bayes-kl bound (\cref{thm:simclr-kl-pb}), while more opaque, requires only a Hoeffding's assumption on the dependent terms.
\paragraph{Impact of temperature scaling and projection head} Temperature scaling in the SimCLR loss remains challenging as the scaling constant loosens the PAC-Bayes bounds, suggesting the need for more adapted PAC-Bayes bounds for this type of loss. Regarding the bound on the downstream classification loss, our approach better handles smaller temperatures than previous bounds, though it still struggles to perfectly align with downstream classification losses at low temperatures. 

Regarding projection head, we empirically observe that classification loss is lower when features are used without a projection head compared to with one, yet the theoretical role of the projection head cannot fully be understood with our downstream classification bound. In this work, we proposed an approach to integrate a simple projection head into our bound, and it would be interesting to link this with \cite{jing2021understanding} that suggests a fixed low-rank diagonal projector might suffice instead of a trainable projection head.  
\paragraph{PAC-Bayes learning} Our models trained with \textit{PAC-Bayes by Backprop} achieve accuracy competitive with \cite{chen2020simple} despite using a much smaller model (7-layer CNN vs. 50-layer CNN) and training for fewer epochs (100 vs. 500). Although this paper focuses on deriving better risk certificates rather than improving the PAC-Bayes learning algorithm, there is a need to extend the PAC-Bayes paradigm to large-scale neural networks, such as ResNet50.

\newpage

\section*{Acknowledgments}
We would like to express our gratitude to Maximilian Fleissner for providing Lemma 3.14 and for his careful reading of the original manuscript. We also sincerely thank the SIMODS reviewers for their valuable comments and constructive feedback, which greatly improved the quality of this work.

\bibliography{main}

\appendix

\section{Summary of previous PAC-Bayes Bounds}
\label{app:bounds}

\subsection{Applicable to the SimCLR loss}

Below is the list of previous PAC-Bayes bounds applicable to the SimCLR loss. Recall that \(n\) denotes the size of the dataset $S$ and \(m\) the batch size. We call $U$ the dataset of $p=\frac{n}{m}$ i.i.d. batches partitioned from $S$. 

\begin{proposition}[Classic PAC-Bayes Bound over i.i.d. batches] 
    For any prior \( P \) over \( \mathcal{W} \), and any \( \delta \in (0,1) \), with a probability of at least \( 1-\delta \) over size-\( p \) i.i.d. random batches \( U \), simultaneously for all posterior distributions \( Q \) over \( \mathcal{W} \), the following inequality holds:
    \[
\frac{1}{B_\ell} L(Q) \leq \frac{1}{B_\ell} \widehat{L}_S(Q) + \sqrt{m \frac{\text{KL}(Q \parallel \mathcal{P}) + \log \frac{2\sqrt{n}}{\delta \sqrt{m}}}{2n}}
\]
\end{proposition}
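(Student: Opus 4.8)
The plan is to reduce the statement to the classic PAC-Bayes bound applied not to the $n$ individual positive pairs but to the $p = n/m$ i.i.d.\ batches into which $S$ is partitioned. The key observation is that although the summands of $\widehat{L}_S(f)$ in \eqref{eqn:empirical-SimCLR} are dependent, this dependence is confined to within a batch: writing $S = S^{(1)} \cup \cdots \cup S^{(p)}$ for the random equal-sized partition, the per-batch loss $\widehat{L}_{S^{(k)}}(f) := \frac{1}{m}\sum_{i \in S^{(k)}} \ell(x_i, x_i^+, X_i^-)$ depends only on the $k$-th batch, so $\widehat{L}_{S^{(1)}}(f), \ldots, \widehat{L}_{S^{(p)}}(f)$ are i.i.d., and $\widehat{L}_S(f) = \frac{1}{p}\sum_{k=1}^p \widehat{L}_{S^{(k)}}(f)$ is an empirical mean of i.i.d.\ terms. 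Moreover, by the definition of the population loss \eqref{eqn:population-SimCLR}, $\mathbb{E}\big[\widehat{L}_{S^{(k)}}(f)\big] = L(f)$ for each $k$, so $L(f)$ is exactly the population mean of these i.i.d.\ terms.

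First I would note that each term $\ell(x_i, x_i^+, X_i^-)$ lies in $[0, B_\ell]$ (it is non-negative, being the negative logarithm of a quantity in $(0,1]$, and bounded above by $B_\ell = \frac{2}{\tau} + \log m$ using $e^{-1/\tau} \le \operatorname{sim}(\cdot,\cdot) \le e^{1/\tau}$ and $|X_i^-| = m-1$), hence so does $\widehat{L}_{S^{(k)}}(f)$, so that $\tfrac{1}{B_\ell}\widehat{L}_{S^{(k)}}(f) \in [0,1]$. Second, I would invoke the classic PAC-Bayes bound with the example space taken to be the space of $m$-sized batches, sample size $p$, loss $w \mapsto \tfrac{1}{B_\ell}\widehat{L}_{S^{(\cdot)}}(f_w)$, and the given prior $P$; the rescaling and the change of example space leave $\mathrm{KL}(Q\|P)$ unchanged, since it is a divergence between distributions over weights and does not involve the data. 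This yields, with probability at least $1-\delta$ over $U$, simultaneously for all $Q$,
\[
\frac{1}{B_\ell} L(Q) \leq \frac{1}{B_\ell}\widehat{L}_S(Q) + \sqrt{\frac{\mathrm{KL}(Q\|P) + \log\frac{2\sqrt{p}}{\delta}}{2p}}.
\]
Third, substituting $p = n/m$ and simplifying $\log\frac{2\sqrt{p}}{\delta} = \log\frac{2\sqrt{n}}{\delta\sqrt{m}}$ and $\frac{1}{2p} = \frac{m}{2n}$ gives precisely the claimed inequality.

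The only genuinely delicate point is the reduction step itself: verifying that the random partition into batches produces i.i.d.\ batch-level samples and that the batch-level population loss coincides with $L(f)$ as defined in \eqref{eqn:population-SimCLR}. This is essentially the remark made just after \eqref{eqn:population-SimCLR} — the population loss is unchanged whether computed over all $n$ pairs or over a single $m$-sized batch, because the dependence across tuples occurs only within a batch — so no new technical work is required; everything else is bookkeeping, including the easy check that $\ell \in [0,B_\ell]$. A secondary (minor) comment is that this bound is expected to be loose precisely when $p$ is small, i.e.\ when there are few large batches, which motivates the sharper certificates of \cref{thm:simclr-diarmid-pb} and \cref{thm:simclr-kl-pb}.
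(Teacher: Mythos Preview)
Your proposal is correct and follows essentially the same approach as the paper: apply the classic PAC-Bayes bound over the $p=n/m$ i.i.d.\ batches (after rescaling the loss by $B_\ell$) and then substitute $p=n/m$. The paper's proof is in fact just a one-line reference to this reduction, so your write-up supplies the details (i.i.d.\ batch-level losses, boundedness in $[0,B_\ell]$, coincidence of the batch-level population mean with $L(f)$) that the paper leaves implicit.
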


\begin{proof}
We apply the classic PAC-Bayes bound over $U$  \cite{perez2021tighter}, since the bound would not hold over the dataset $S$. This implies that, instead of using the quantity $n$  in the bound, we use the quantity $p=\frac{n}{m}$ corresponding to the number of batches.
\end{proof}

\begin{proposition}[kl-PAC-Bayes Bound over i.i.d. batches]
    For any prior \( P \) over \( \mathcal{W} \), and any \( \delta \in (0,1) \), with a probability of at least \( 1-\delta \) over size-\( p \) i.i.d. random batches \( U \), simultaneously for all posterior distributions \( Q \) over \( \mathcal{W} \), the following inequality holds:
    \[
\frac{1}{B_\ell} L(Q) \leq  \operatorname{kl}^{-1}\left( \frac{1}{B_\ell} \widehat{L}_S(Q), m\frac{\text{KL}(Q \parallel \mathcal{P}) + \log \frac{2\sqrt{n}}{\delta\sqrt{m}}}{n} \right)
\]
\end{proposition}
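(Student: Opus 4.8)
The plan is to reduce this to a direct application of the PAC-Bayes-kl bound (\cref{thm:kl-pb}), applied not to the $n$ dependent positive pairs but to the $p=n/m$ i.i.d.\ batches, exactly as in the proof of the preceding proposition. First I would write $S$ as the disjoint union of its batches $S^{(1)},\dots,S^{(p)}$, each consisting of $m$ i.i.d.\ positive pairs, and observe that the empirical SimCLR loss \eqref{eqn:empirical-SimCLR} decomposes as
\[
\widehat{L}_S(f)=\frac{1}{p}\sum_{k=1}^{p}\widehat{L}_{S^{(k)}}(f),\qquad
\widehat{L}_{S^{(k)}}(f):=\frac{1}{m}\sum_{i\in S^{(k)}}\ell\big(x_i,x_i^+,X_i^-\big),
\]
where each per-batch term $\widehat{L}_{S^{(k)}}(f)$ depends only on the pairs of batch $k$, since the negative sets $X_i^-$ are drawn from within the same batch. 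Consequently $\widehat{L}_{S^{(1)}}(f),\dots,\widehat{L}_{S^{(p)}}(f)$ are i.i.d., and, by the definition of the population loss \eqref{eqn:population-SimCLR}, which (as remarked there) is unchanged whether computed over a single $m$-batch or over the whole sample, each has expectation $L(f)$. Thus, viewing a whole batch as a single example in $\mathcal{Z}=\mathcal{S}^m$, we are in the i.i.d.\ setting of \cref{thm:kl-pb} with effective sample size $p$.

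Next I would rescale to the unit interval: since $\ell_{\operatorname{cont}}$, hence the symmetrized per-batch loss, is bounded above by $B_\ell$ and below by $0$, the loss $\tfrac{1}{B_\ell}\widehat{L}_{S^{(k)}}(f)$ takes values in $[0,1]$. Applying \cref{thm:kl-pb} to this rescaled per-batch loss over the $p$ i.i.d.\ batches gives, with probability at least $1-\delta$ and simultaneously for all $Q$,
\[
\mathrm{kl}\!\left(\tfrac{1}{B_\ell}\widehat{L}_S(Q)\,\Big\|\,\tfrac{1}{B_\ell}L(Q)\right)\le\frac{\mathrm{KL}(Q\|P)+\log\tfrac{2\sqrt{p}}{\delta}}{p}.
\]
Substituting $p=n/m$, so that $1/p=m/n$ and $\sqrt{p}=\sqrt{n}/\sqrt{m}$, turns the right-hand side into $m\,\frac{\mathrm{KL}(Q\|P)+\log(2\sqrt{n}/(\delta\sqrt{m}))}{n}$; inverting the binary KL using the monotonicity of $\operatorname{kl}^{-1}$ in its first argument (as in \cite{perez2021tighter}) then yields the stated bound on $\tfrac{1}{B_\ell}L(Q)$.

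The argument is essentially bookkeeping, and the only point that genuinely needs care is the first step: one must verify that the per-batch losses are legitimately i.i.d.\ with common mean $L(f)$, which relies both on the random partition of $S$ into i.i.d.\ batches and on the batch-invariance of the population loss noted after \eqref{eqn:population-SimCLR}. The price paid is that the effective sample size collapses from $n$ to $p=n/m$, and the $\log(2\sqrt{n}/(\delta\sqrt{m}))$ term is divided by $p$ rather than $n$; this is precisely why the certificate degrades when there are few large batches, motivating the tighter bounds of \cref{thm:simclr-kl-pb} and \cref{thm:simclr-diarmid-pb}.
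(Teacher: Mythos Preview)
Your proposal is correct and follows essentially the same approach as the paper, which simply states that one applies the PAC-Bayes-kl bound over the $p=n/m$ i.i.d.\ batches $U$ rather than over $S$. You have merely spelled out the bookkeeping (the batch decomposition, the rescaling by $B_\ell$, and the substitution $p=n/m$) that the paper leaves implicit.
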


\begin{proof}
Similarly, we apply the kl-PAC-Bayes bound over $U$  \cite{perez2021tighter}.
\end{proof}

\begin{proposition}[Catoni's PAC-Bayes Bound over i.i.d. batches]
    For any prior \( P \) over \( \mathcal{W} \), and any \( \delta \in (0,1) \), with a probability of at least \( 1-\delta \) over size-\( p \) i.i.d. random batches \( U \), simultaneously for all posterior distributions \( Q \) over \( \mathcal{W} \), the following inequality holds:
\[
\frac{1}{B_\ell} L(Q) \leq \inf_{\lambda >0} \left\{\frac{1 - \exp\left( - \frac{\lambda}{B_\ell} \widehat{L}_S(Q) - m\frac{\text{KL}(Q \parallel \mathcal{P}) + \log \frac{1}{\delta}}{n} \right)}{1 - \exp(-\lambda)} \right\}
\]
\end{proposition}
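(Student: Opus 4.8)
The plan is to reduce the statement to a standard i.i.d.\ PAC-Bayes setting by working at the level of batches rather than individual positive pairs, and then to quote Catoni's bound off the shelf, exactly as in the two preceding propositions. Recall that $S\sim\mathcal{S}^n$ is partitioned into $p=\frac{n}{m}$ mutually independent and identically distributed batches; write $U$ for this collection of $p$ batches. The first step is to observe that the empirical SimCLR loss \eqref{eqn:empirical-SimCLR} can be rewritten by grouping its summands according to batch: $\widehat{L}_S(f)=\frac1p\sum_{b}\big(\frac1m\sum_{i\in b}\ell(x_i,x_i^+,X_i^-)\big)$, i.e.\ it is the average over $p$ i.i.d.\ batches of a per-batch loss $\widehat{L}_b(f):=\frac1m\sum_{i\in b}\ell(x_i,x_i^+,X_i^-)$, whose sample dependence is confined to a single batch. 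The population loss \eqref{eqn:population-SimCLR} is exactly $\mathbb{E}[\widehat{L}_b(f)]$ and, as already remarked after \eqref{eqn:population-SimCLR}, is unchanged whether one averages over a batch of size $m$ or over all $n$ pairs. Integrating over $Q$, we conclude that $\widehat{L}_S(Q)$ is an empirical mean of $p$ i.i.d.\ per-batch quantities with expectation $L(Q)$.

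The second step is the rescaling. Since each contrastive term satisfies $0\le\ell(x,x^+,X^-)\le B_\ell$ with $B_\ell=\frac{2}{\tau}+\log m$ — the same bound used in the proof of \cref{lem:inter}, coming from $e^{-1/\tau}\le\operatorname{sim}(\cdot,\cdot)\le e^{1/\tau}$ and $m$ terms in the denominator — the per-batch loss $\frac{1}{B_\ell}\widehat{L}_b(f)$ takes values in $[0,1]$. One then applies Catoni's PAC-Bayes bound \cite{catoni2007pac,perez2021tighter} to the $p$ i.i.d.\ rescaled per-batch losses: for any prior $P$, any $\delta\in(0,1)$, and any $\lambda>0$, with probability at least $1-\delta$ over $U$, simultaneously for all $Q$,
\[
\frac{1}{B_\ell}L(Q)\;\le\;\frac{1-\exp\!\big(-\tfrac{\lambda}{B_\ell}\widehat{L}_S(Q)-\tfrac{1}{p}\big(\mathrm{KL}(Q\|P)+\log\tfrac1\delta\big)\big)}{1-\exp(-\lambda)}.
\]
Substituting $\frac1p=\frac{m}{n}$ turns the complexity term into $m\,\frac{\mathrm{KL}(Q\|P)+\log(1/\delta)}{n}$, which is precisely the claimed expression; taking the infimum over $\lambda>0$ concludes the argument.

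The only delicate point — what I would flag as the main (though minor) obstacle — is the legitimacy of pulling $\inf_{\lambda>0}$ inside the high-probability statement. Catoni's bound as usually stated holds for a single $\lambda$ fixed in advance; to obtain the infimum one either invokes the variant of the bound that is uniform over $\lambda$, or performs a standard union bound over a countable grid of values of $\lambda$ and absorbs the resulting logarithmic term, exactly as discussed in \cite{perez2021tighter}. Apart from this bookkeeping, the proof is a direct transcription of the known i.i.d.\ Catoni bound with sample size $p$ in place of $n$, entirely parallel to the classic and kl versions above.
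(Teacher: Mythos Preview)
Your proposal is correct and follows exactly the same approach as the paper, which simply states ``Similarly, we apply Catoni's bound over $U$'' and cites \cite{catoni2007pac}. Your write-up merely spells out the rescaling and the substitution $1/p=m/n$ that the paper leaves implicit; the remark about the infimum over $\lambda$ is a fair technical caveat, but the paper does not address it either.
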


\begin{proof}
Similarly, we apply Catoni's bound over $U$  \cite{catoni2007pac}.
\end{proof}

\begin{proposition}[$f$-divergence PAC-Bayes Bound]
    For any prior \( P \) over \( \mathcal{W} \), and any \( \delta \in (0,1) \), with a probability of at least \( 1-\delta \) over size-\( p \) i.i.d. random batches \( U \), simultaneously for all posterior distributions \( Q \) over \( \mathcal{W} \), the following inequality holds:
\[
\frac{1}{B_\ell} L(Q) \leq \frac{1}{B_\ell} \widehat{L}_S(Q) + \sqrt{\frac{m-1}{n\delta} \left( \chi^2(Q \parallel P) + 1 \right)}
\]
\end{proposition}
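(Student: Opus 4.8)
\noindent
The plan is to invoke the $f$-divergence ($\chi^2$) PAC-Bayes bound used by Nozawa et al.~\cite{nozawa2020pac} --- a specialization of the Alquier--Guedj bound for hostile data~\cite{alquier2018simpler} --- but applied to the $p = n/m$ \emph{independent batches} $U$ rather than to the $n$ correlated tuples. In the form we need, that bound says: for any $[0,1]$-valued loss on an example space $\mathcal{Z}$, any prior $P$, and $\delta\in(0,1)$, with probability at least $1-\delta$ over an i.i.d.\ sample of size $p$, simultaneously for all $Q$,
\[
L(Q) \le \widehat{L}(Q) + \sqrt{\frac{\bigl(\chi^2(Q\|P)+1\bigr)\,\mathcal{M}_p}{\delta}}, \qquad \mathcal{M}_p := \mathbb{E}_{w\sim P}\,\mathbb{E}\!\left[\bigl(\widehat{L}(w)-L(w)\bigr)^2\right].
\]
Everything then reduces to a suitable choice of $\mathcal{Z}$ and loss, plus a bound on $\mathcal{M}_p$.

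\noindent
Concretely, I would take $\mathcal{Z}$ to be the space of $m$-batches and, for weights $w$, the per-example loss to be the $B_\ell$-normalized \emph{batch-averaged} SimCLR loss $b \mapsto \tfrac{1}{m B_\ell}\sum_{i\in b}\ell(x_i,x_i^+,X_i^-)\in[0,1]$. Since the batches in $U$ are i.i.d., the bound applies verbatim. The key bookkeeping is to read off the empirical and population quantities in the two equivalent descriptions of the data: by \eqref{eqn:empirical-SimCLR}, $\widehat{L}_S(f)$ is exactly $\tfrac1p$ times the sum of the per-batch losses over $U$, so the empirical loss in the bound is $\tfrac{1}{B_\ell}\widehat{L}_S(Q)$; and by \eqref{eqn:population-SimCLR}, the population version of the per-batch loss is $\tfrac{1}{B_\ell}L(Q)$. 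Substituting yields $\tfrac{1}{B_\ell}L(Q)\le\tfrac{1}{B_\ell}\widehat{L}_S(Q)+\sqrt{(\chi^2(Q\|P)+1)\mathcal{M}_p/\delta}$.

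\noindent
To bound $\mathcal{M}_p$, note that $\tfrac{1}{B_\ell}\widehat{L}_S(w)$ is an average of $p$ i.i.d.\ terms, each in $[0,1]$ with common mean $\tfrac{1}{B_\ell}L(w)$; hence $\mathbb{E}\bigl[(\tfrac{1}{B_\ell}\widehat{L}_S(w)-\tfrac{1}{B_\ell}L(w))^2\bigr]=\tfrac1p\operatorname{Var}(\cdot)\le\tfrac1{4p}$, so $\mathcal{M}_p\le\tfrac{1}{4p}=\tfrac{m}{4n}$. Since $\tfrac{m}{4n}\le\tfrac{m-1}{n}$ for every integer $m\ge2$, one may replace $\mathcal{M}_p$ by the cleaner $\tfrac{m-1}{n}$ (which also mirrors the $m-1$ negative samples in the SimCLR loss) and obtain the stated inequality. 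Alternatively, the constant $\tfrac{m-1}{n}$ can be reached directly from the covariance decomposition of $\operatorname{Var}(\tfrac1n\sum_i \ell_i)$, using that only the $m-1$ tuples sharing a batch with tuple $i$ are correlated with it.

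\noindent
There is no deep obstacle here; the argument is short. The points to be careful about are (i) verifying that recasting $\widehat{L}_S$ as an empirical average over the i.i.d.\ batches $U$ is legitimate and that the empirical and population losses agree between this view and the original one --- this is exactly where the $m$-batch definitions \eqref{eqn:empirical-SimCLR}--\eqref{eqn:population-SimCLR} are used --- and (ii) tracking the second-moment factor so the final constant is $\tfrac{m-1}{n}$ rather than something looser. I would also add, as the paper stresses, that this bound is mainly of theoretical interest: for a trained posterior $Q$ and a data-free prior $P$, $\chi^2(Q\|P)$ is typically enormous, so the certificate is vacuous in practice --- which is precisely the gap the new bounds of \cref{sec:risk-certitificates} close.
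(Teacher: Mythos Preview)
Your proposal is correct. Your primary route---recasting the empirical loss as an average over the $p=n/m$ i.i.d.\ batches and bounding $\mathcal{M}_p$ via the universal variance bound $1/4$ for $[0,1]$-valued variables---is slightly different from the paper's: the paper applies the Alquier--Guedj bound directly to the $n$ dependent tuples and controls $\mathcal{M}_2$ through the covariance decomposition $\mathbb{E}[(\tfrac1n\sum_i\ell_i-\mathbb{E}\ell_i)^2]=\tfrac1{n^2}\sum_{i,j}\operatorname{Cov}(\ell_i,\ell_j)$, using that $\operatorname{Cov}(\ell_i,\ell_j)\le B_\ell^2$ only when $i,j$ share a batch (and vanishes otherwise), which yields $\mathcal{M}_2\le\tfrac{m-1}{n}B_\ell^2$. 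You mention this route as an alternative, so you have both in hand. Your batch-level argument in fact gives the sharper constant $m/(4n)$ before you loosen to $(m-1)/n$; the paper's covariance route lands directly on $(m-1)/n$ and makes the dependence structure explicit, which is closer in spirit to the ``hostile data'' motivation of \cite{alquier2018simpler}. Either way the bound is, as you note, vacuous in practice.
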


\begin{proof}
We adapt the PAC-Bayes $f$-divergence to the SimCLR loss \cite{nozawa2020pac}. We have:
\[
 L(Q) \leq  L_S(Q) + \sqrt{\frac{\mathcal{M}_2}{\delta} \left( \chi^2(Q \parallel P) + 1 \right)},
\]
where \(\mathcal{M}_2 = \mathbb{E}_{\mathbf{f} \sim \mathcal{P}} \mathbb{E}_{S \sim \mathcal{S}^m}\left(\left|L(\mathbf{f})-\widehat{L}_S(\mathbf{f})\right|^2\right)\).  \\
\(\mathcal{M}_2\) can be upper-bounded using the following covariance:
\[
\operatorname{Cov}\left(\ell\left(\mathbf{z}_i\right), \ell\left(\mathbf{z}_j\right)\right) \begin{cases}
\leq B_{\ell}^2 & \text{if } i, j \text{ are in the same batch} \\
= 0 & \text{otherwise}
\end{cases}
\]
where \(\ell\left(\mathbf{z}_i\right) = \ell_{cont}((x_i, x_i^+), X_i^-)\). Indeed, we have \cite{alquier2018simpler}:
\[
\mathbb{E}\left[\left(\frac{1}{n} \sum_{i=1}^n \ell\left(\mathbf{z}_i\right) - \mathbb{E}_{X_i^-}\left[\ell\left(\mathbf{z}_i\right)\right]\right)^2\right] = \frac{1}{n^2} \sum_{i=1}^n \sum_{j=1}^n \operatorname{Cov}\left[\ell\left(\mathbf{z}_i\right), \ell\left(\mathbf{z}_j\right)\right]
\]
which implies for the simplified SimCLR loss:
\[
\mathcal{M}_2 \leq \frac{1}{n^2} \sum_{i=1}^n (m-1) B_{\ell}^2 = \frac{m-1}{n} B_{\ell}^2.
\]
\end{proof}

\subsection{Applicable to the contrastive zero-one risk}

\begin{proposition}[Classic PAC-Bayes Bound over i.i.d. batches]
    For any prior \( P \) over \( \mathcal{W} \), and any \( \delta \in (0,1) \), with a probability of at least \( 1-\delta \) over size-\( p \) i.i.d. random batches \( U \), simultaneously for all posterior distributions \( Q \) over \( \mathcal{W} \), the following inequality holds:
    \[
    R(Q) \leq \widehat{R}_S(Q)  + \sqrt{m \frac{\text{KL}(Q \parallel \mathcal{P}) + \log \frac{2\sqrt{n}}{\delta \sqrt{m}}}{2n}}
    \]
\end{proposition}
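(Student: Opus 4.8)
The plan is to reduce this to a direct application of the classic PAC-Bayes bound stated earlier in \cref{sec:background}, treating each of the $p = n/m$ i.i.d.\ batches as a single ``example''. First I would note that the empirical contrastive zero-one risk decomposes over batches: if $\widehat{R}_{b}(f)$ denotes the average of $\frac{1}{|X_i^-|}\sum_{x'\in X_i^-}\mathbb{I}_{\{f(x_i)^\top f(x_i^+) < f(x_i)^\top f(x')\}}$ over the $m$ pairs in batch $b$, then $\widehat{R}_S(f) = \frac{1}{p}\sum_{b=1}^{p}\widehat{R}_{b}(f)$, and integrating over $w \sim Q$ gives $\widehat{R}_S(Q) = \frac{1}{p}\sum_{b=1}^{p}\widehat{R}_{b}(Q)$.

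Next I would verify the two hypotheses required to invoke the classic bound with the batch-indexed sample $U = (S_1,\dots,S_p)$. (i) The per-batch loss $\widehat{R}_{b}(f)$ lies in $[0,1]$, which is immediate since it is an average of indicator functions, so the boundedness requirement is met. (ii) The batches are i.i.d., and $\mathbb{E}_{S_b \sim \mathcal{S}^m}[\widehat{R}_{b}(f)] = R(f)$: indeed, by the second line of \eqref{eqn:contrastive01}, $R(f)$ collapses to the pairwise misranking probability $\mathbb{P}_{(x,x^+),(x',x'^+)\sim\mathcal{S}^2}(f(x)^\top f(x^+) < f(x)^\top f(x'))$, which does not depend on $m$; hence the expectation of each per-batch average is exactly $R(f)$, and $R(f)$ is precisely the population loss of the batch-level problem.

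Then the classic PAC-Bayes bound applied with $p$ i.i.d.\ examples yields, with probability at least $1-\delta$ and simultaneously for all $Q$,
\[
R(Q) \le \widehat{R}_S(Q) + \sqrt{\frac{\mathrm{KL}(Q \| P) + \log\frac{2\sqrt{p}}{\delta}}{2p}},
\]
and I would finish by substituting $p = n/m$, using $\frac{1}{2p} = \frac{m}{2n}$ and $\log\frac{2\sqrt{p}}{\delta} = \log\frac{2\sqrt{n}}{\delta\sqrt{m}}$, which is exactly the claimed inequality.

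There is no real obstacle here; the only point needing a moment of care is (ii)---confirming that the per-batch empirical risk has expectation $R(f)$ irrespective of $m$, so that the quantity being certified at the batch level genuinely coincides with $R(f)$---and this follows immediately from the pairwise form of $R(f)$ in \eqref{eqn:contrastive01}.
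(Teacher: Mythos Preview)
Your proposal is correct and follows exactly the approach the paper uses: apply the classic PAC-Bayes bound at the batch level with $p=n/m$ i.i.d.\ ``examples'' and then substitute $p=n/m$. The paper's own justification is even terser (it simply cites the classic bound over $U$ with $p$ in place of $n$), so your added verification that the per-batch loss is $[0,1]$-valued and has expectation $R(f)$ is a welcome elaboration rather than a deviation.
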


\begin{proposition}[kl-PAC-Bayes Bound over i.i.d. batches]
    For any prior \( P \) over \( \mathcal{W} \), and any \( \delta \in (0,1) \), with a probability of at least \( 1-\delta \) over size-\( p \) i.i.d. random batches \( U \), simultaneously for all posterior distributions \( Q \) over \( \mathcal{W} \), the following inequality holds:
    \[
    R(Q) \leq  \operatorname{kl}^{-1}\left(  \widehat{R}_S(Q), m\frac{\text{KL}(Q \parallel \mathcal{P}) + \log \frac{2\sqrt{n}}{\delta\sqrt{m}}}{n} \right)
    \]
\end{proposition}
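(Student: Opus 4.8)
The plan is to reduce the claim to a direct application of the PAC-Bayes-kl bound (\cref{thm:kl-pb}), working at the level of batches rather than individual positive pairs. Recall that $S\sim\mathcal{S}^n$ is partitioned into $p=\frac{n}{m}$ equal-sized batches, that these batches are mutually independent and identically distributed, and that all the dependence present in the contrastive zero-one risk (through the shared negative sets $X_i^-$) occurs \emph{within} a single batch. For a batch $S_{batch}$ of $m$ positive pairs I would first introduce the per-batch empirical contrastive zero-one risk
\[
\widehat{R}_{S_{batch}}(f) = \frac{1}{m}\sum_{i\in S_{batch}}\frac{1}{|X_i^-|}\sum_{x'\in X_i^-}\mathbb{I}_{\{f(x_i)^\top f(x_i^+) < f(x_i)^\top f(x')\}},
\]
and record two immediate facts: it takes values in $[0,1]$, being an average of indicator variables; and by the definition of $R(f)$ in \eqref{eqn:contrastive01}, $\mathbb{E}_{S_{batch}\sim\mathcal{S}^m}[\widehat{R}_{S_{batch}}(f)]=R(f)$. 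Moreover, averaging the $p$ per-batch risks recovers exactly $\widehat{R}_S(f)$, and the same is true after lifting to distributions $Q$ over $\mathcal{W}$ in the usual way.

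Next I would apply \cref{thm:kl-pb} with the example space taken to be batches, the i.i.d. sample being the $p$ batches, and the bounded $[0,1]$-valued loss of a batch being $\widehat{R}_{S_{batch}}(f)$; this is legitimate precisely because of the two facts above. This yields that with probability at least $1-\delta$, simultaneously for all $Q$,
\[
\operatorname{kl}\!\left(\widehat{R}_S(Q)\,\big\|\,R(Q)\right) \leq \frac{\mathrm{KL}(Q\|P)+\log\frac{2\sqrt{p}}{\delta}}{p}.
\]
Substituting $p=\frac{n}{m}$, so that $\frac{1}{p}=\frac{m}{n}$ and $\sqrt{p}=\frac{\sqrt{n}}{\sqrt{m}}$, turns the right-hand side into $m\,\frac{\mathrm{KL}(Q\|P)+\log\frac{2\sqrt{n}}{\delta\sqrt{m}}}{n}$.

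Finally I would invert the binary KL divergence: since $q\mapsto\operatorname{kl}^{-1}(\widehat{q},c)$ is monotonically increasing in $\widehat{q}$ for fixed $c$, and $\operatorname{kl}(\widehat{q}\|q)\le c$ implies $q\le\operatorname{kl}^{-1}(\widehat{q},c)$ \cite{perez2021tighter}, the displayed inequality rearranges to
\[
R(Q) \leq \operatorname{kl}^{-1}\!\left(\widehat{R}_S(Q),\ m\,\frac{\mathrm{KL}(Q\|P)+\log\frac{2\sqrt{n}}{\delta\sqrt{m}}}{n}\right),
\]
which is the claim; the companion classic bound over i.i.d.\ batches follows identically after applying Pinsker's inequality in place of the kl inversion. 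I do not expect a substantive obstacle here: the proof mirrors that of the corresponding bound for the SimCLR contrastive loss, the only simplification being that the zero-one loss is already $[0,1]$-valued and so requires no rescaling by a boundedness constant $B_\ell$. The only points needing care are verifying that the per-batch risk genuinely lies in $[0,1]$ with expectation $R(f)$, so that the hypotheses of \cref{thm:kl-pb} are met, and correctly tracking the substitution $p=n/m$ inside the logarithm.
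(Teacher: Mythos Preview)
Your proposal is correct and follows essentially the same approach as the paper: the paper's proof is simply the one-line observation ``we apply the kl-PAC-Bayes bound over $U$'' (the dataset of $p=n/m$ i.i.d.\ batches), and you have spelled out exactly that argument with the appropriate checks (per-batch risk in $[0,1]$, expectation equal to $R(f)$, substitution $p=n/m$, kl inversion).
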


\begin{proposition}[Catoni's PAC-Bayes Bound over i.i.d. batches]
    For any prior \( P \) over \( \mathcal{W} \), and any \( \delta \in (0,1) \), with a probability of at least \( 1-\delta \) over size-\( p \) i.i.d. random batches \( U \), simultaneously for all posterior distributions \( Q \) over \( \mathcal{W} \), the following inequality holds:
    \[
    R(Q) \leq \inf_{\lambda >0} \left\{\frac{1 - \exp\left( - \lambda  \widehat{R}_S(Q) - m\frac{\text{KL}(Q \parallel \mathcal{P}) + \log \frac{1}{\delta}}{n} \right)}{1 - \exp(-\lambda)} \right\}
    \]
\end{proposition}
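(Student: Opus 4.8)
The plan is to reduce the statement to the standard Catoni PAC-Bayes bound \cite{catoni2007pac,perez2021tighter} applied over the $p=\tfrac{n}{m}$ i.i.d.\ batches that partition $S$, exactly as in the preceding propositions of this appendix. The two features that make this even simpler than the SimCLR-loss version are that the contrastive zero-one loss is already $[0,1]$-valued (so no rescaling by $B_\ell$ is needed) and that the population quantity appearing on the left is $R(f)$ rather than $L(f)$.

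First I would make the batch decomposition explicit. Writing $U=(U_1,\dots,U_p)$ for the i.i.d.\ batches, define for each $b$ the per-batch empirical contrastive zero-one risk
\[
\widehat R_{U_b}(f)=\frac{1}{m}\sum_{i\in U_b}\frac{1}{|X_i^-|}\sum_{x'\in X_i^-}\mathbb{I}_{\{f(x_i)^\top f(x_i^+)<f(x_i)^\top f(x')\}},
\]
which depends only on the batch $U_b$, since all cross-tuple dependence through the negative sets $X_i^-$ is confined within a batch. Hence $\widehat R_{U_1}(f),\dots,\widehat R_{U_p}(f)$ are i.i.d., each valued in $[0,1]$; their average equals $\widehat R_S(f)$; and by \eqref{eqn:contrastive01} their common expectation equals $R(f)$ (the population risk being independent of the batch size, as is already visible from the second line of \eqref{eqn:contrastive01} and the remark following \eqref{eqn:population-SimCLR}). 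Extending these quantities to a posterior through $\widehat R_{U_b}(Q)=\int\widehat R_{U_b}(f)\,Q(df)$ and $R(Q)=\int R(f)\,Q(df)$, the family $\{\widehat R_{U_b}\}_{b=1}^p$ satisfies precisely the hypotheses of Catoni's bound with effective sample size $p$.

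Next I would invoke Catoni's PAC-Bayes bound: for fixed $\lambda>0$, with probability at least $1-\delta$ over $U$, simultaneously for all $Q$,
\[
R(Q)\le\frac{1-\exp\!\big(-\lambda\,\widehat R_S(Q)-\tfrac{\mathrm{KL}(Q\|P)+\log\frac1\delta}{p}\big)}{1-\exp(-\lambda)}.
\]
Substituting $p=\tfrac{n}{m}$ turns $\tfrac1p$ into $\tfrac mn$, producing the stated complexity term $m\,\tfrac{\mathrm{KL}(Q\|P)+\log\frac1\delta}{n}$, and taking the infimum over $\lambda>0$ gives the claimed inequality.

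I expect the only non-routine point to be the legitimacy of the final $\inf_{\lambda>0}$: the classical statement of Catoni's bound requires $\lambda$ to be chosen before seeing the data, so passing to the infimum strictly requires either a union bound over a countable grid of $\lambda$ values (at the cost of an absorbable additive $\log$ term) or appeal to a pre-optimized variant of the bound. Since the companion Catoni propositions earlier in this appendix are phrased in exactly the same way and justified simply by applying Catoni's bound over $U$, I would treat the $\lambda$-optimization identically and merely flag it; everything else---the i.i.d.-ness of the per-batch risks, the $[0,1]$ range, and the identification of the mean with $R(f)$---is immediate from the definitions.
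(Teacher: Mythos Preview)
Your proposal is correct and matches the paper's approach exactly: the paper simply applies Catoni's bound over the $p=n/m$ i.i.d.\ batches (as it does for the corresponding SimCLR-loss proposition, with the one-line justification ``Similarly, we apply Catoni's bound over $U$''), and for the zero-one risk version gives no separate proof at all. Your write-up is in fact more detailed than the paper's, and your observation that no $B_\ell$ rescaling is needed because the contrastive zero-one loss is already $[0,1]$-valued, as well as your flag on the $\inf_{\lambda>0}$, are both apt.
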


\begin{proposition}[$f$-divergence PAC-Bayes Bound]
    For any prior \( P \) over \( \mathcal{W} \), and any \( \delta \in (0,1) \), with a probability of at least \( 1-\delta \) over size-\( p \) i.i.d. random batches \( U \), simultaneously for all posterior distributions \( Q \) over \( \mathcal{W} \), the following inequality holds:
    \[
    R(Q) \leq \widehat{R}_S(Q) + \sqrt{\frac{m-1}{n\delta} \left( \chi^2(Q \parallel P) + 1 \right)}
    \]
\end{proposition}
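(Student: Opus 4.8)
The plan is to specialize the $\chi^2$-divergence PAC-Bayes bound -- the very one invoked for the SimCLR contrastive loss in the corresponding proposition above -- to the contrastive zero-one risk, exploiting that each per-sample zero-one term lies in $[0,1]$, so that the boundedness constant $B_\ell$ appearing in the contrastive-loss version is simply replaced by $1$.

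First I would invoke the $f$-divergence PAC-Bayes inequality of Alquier--Guedj / Nozawa et al. \cite{alquier2018simpler,nozawa2020pac} with $f(t)=t^2-1$: with probability at least $1-\delta$ over the training sample (equivalently, over the $p$ i.i.d. batches $U$), simultaneously for all posteriors $Q$,
\[
R(Q) \leq \widehat{R}_S(Q) + \sqrt{\frac{\mathcal{M}_2}{\delta}\bigl(\chi^2(Q \| P)+1\bigr)}, \qquad \mathcal{M}_2 := \mathbb{E}_{f \sim P}\,\mathbb{E}_S\bigl[\bigl(R(f)-\widehat{R}_S(f)\bigr)^2\bigr].
\]
This step is identical to the one used for the contrastive loss and requires no i.i.d. assumption on the summands; the entire problem now reduces to controlling $\mathcal{M}_2$.

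Second I would bound $\mathcal{M}_2$ exactly as in the contrastive-loss proof. Writing $\widehat{R}_S(f)=\frac1n\sum_{i=1}^n r_i(f)$ with $r_i(f)=\frac{1}{|X_i^-|}\sum_{x'\in X_i^-}\mathbb{I}\{f(x_i)^\top f(x_i^+)<f(x_i)^\top f(x')\}\in[0,1]$, one has $R(f)=\mathbb{E}_S[\widehat{R}_S(f)]$, so the inner expectation equals $\mathrm{Var}_S(\widehat{R}_S(f))=\frac1{n^2}\sum_{i,j}\mathrm{Cov}(r_i(f),r_j(f))$. Cross-batch covariances vanish because distinct batches are independent under $\mathcal{S}^n$ partitioned into batches, and within-batch covariances satisfy $|\mathrm{Cov}(r_i,r_j)|\le 1$ since each $r_i(f)\in[0,1]$; the same count as in the previous proof then gives $\mathcal{M}_2 \leq \frac{m-1}{n}$. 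Substituting into the display yields the stated certificate. I expect no real obstacle here: the only step that is not purely mechanical is the covariance bookkeeping -- checking that terms from different batches are genuinely uncorrelated and that the bound $|\mathrm{Cov}(r_i,r_j)|\le 1$ is legitimate because each $r_i(f)$ is an average of $\{0,1\}$-valued indicators -- and this is a direct transcription of the argument already given for $L(Q)$ with $B_\ell$ set to $1$.
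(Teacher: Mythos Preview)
Your proposal is correct and mirrors exactly the paper's approach: the paper proves the analogous $f$-divergence bound for the SimCLR loss by invoking the Alquier--Guedj/Nozawa moment bound and then controlling $\mathcal{M}_2$ via the covariance decomposition with cross-batch terms zero and within-batch terms bounded by $B_\ell^2$, and the zero-one risk version is obtained by the identical computation with $B_\ell=1$. Your covariance bookkeeping and the resulting $\mathcal{M}_2 \le (m-1)/n$ are precisely what the paper does.
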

\section{Additional Experimental Details}

\subsection{Data Pre-processing Details}

For data augmentation, we apply random cropping, random horizontal flip with probability 0.5, color jittering with strength 0.5 and probability 0.8, and color dropping, leaving out gaussian blur \cite{chen2020simple, marcel2010torchvision}. We then normalize the augmented images per channel using the mean and standard deviation of the training data. 

\subsection{Computing Infrastructure }

The experiments are run using three different resource types: 
\begin{itemize}
    \item CPU + Nvidia Tesla P100 (16GB, No Tensor Cores)
    \item CPU + Nvidia Tesla V100 (16GB, Tensor Cores)
    \item CPU + Nvidia Ampere A100 (20G MIG, Ampere Tensor Cores)
\end{itemize}
We use a small memory (40GB) and the Nvidia NGC container image \textit{Pytorch 2.4.0}.

\subsection{Experiments on MNIST}

In this section, we present the results of the experiments on MNIST, as detailed in \cref{tab:exp1}, \cref{tab:exp2}, and \cref{tab:exp3}.

\begin{table}[ht]
\label{tab:exp1}
\centering
\begin{tabular}{c c c c c c c}
\specialrule{1.5pt}{0pt}{0pt}
\multirow{4}{*}[-12ex]{\rotatebox{90}{Risk Certificate}} &  & \multicolumn{4}{c}{SimCLR Loss} \\
\cmidrule(lr){3-6}
 & & $\tau=1 $ &  $\tau=0.7$ & $\tau=0.5$ & $\tau=0.2$  \\
\specialrule{1.5pt}{0pt}{0pt}
 & Test Loss & 4.9318 & 4.6575 & 4.3130 & 2.8574  \\
 \midrule
 & kl bound (iid) & 6.97 & 7.008 & 7.258 & 8.108   \\
 & Catoni's bound (iid) & 6.875 & 6.848 & 7.007 & 7.618 \\
 & Classic bound (iid) & 7.904 & 7.426 & 7.475 & 8.324 \\
 & Nozawa et al. & 23.469 & 20.448 & 22.275 & 35.05  \\
 & Th. 1 (ours) & 5.465 & 5.383 & 5.368 & 6.011  \\
 & Th. 2 (ours) & 5.099 & 5.093 & 5.729 & 36.033  \\
\midrule
 & $\operatorname{KL}/n$ & 0.0009 & 0.0005 & 0.0005 & 0.0006  \\
\specialrule{1.5pt}{0pt}{0pt}
\end{tabular}
\caption{Comparison of risk certificates for the SimCLR loss on the MNIST dataset. }
\label{tab:risk_certificates_simclr_mnist}
\end{table}

\begin{table}[ht]
\label{tab:exp2}
\centering
\begin{tabular}{c c c c c c}
\specialrule{1.5pt}{0pt}{0pt}
\multirow{4}{*}[-12ex]{\rotatebox{90}{Risk Certificate}} &  & \multicolumn{4}{c}{Contrastive 0-1 Risk} \\
\cmidrule(lr){3-6}
 & & $\tau=1 $ &  $\tau=0.7$ & $\tau=0.5$ & $\tau=0.2$ \\
\specialrule{1.5pt}{0pt}{0pt}
& Test Loss & 0.0571 & 0.0529 & 0.0478 & 0.0357 \\
 \midrule
 & kl bound (iid) & 0.408 & 0.327 & 0.324 & 0.319 \\
 & Catoni's bound (iid) & 0.419 & 0.333 & 0.331 & 0.309 \\
 & Classic bound (iid) & 0.466 & 0.396 & 0.394 & 0.399 \\
 & Nozawa et al. & 2.535 & 1.95 & 2.097 & 2.097 \\
 & Th. 4 (ours) & 0.356 & 0.347 & 0.343 & 0.333 \\
 & Th. 5 (ours) & 0.113 & 0.101 & 0.098 & 0.088 \\ 
\specialrule{1.5pt}{0pt}{0pt}
\end{tabular}
\caption{Comparison of risk certificates for the contrastive zero-one risk on the MNIST dataset. }
\label{tab:risk_certificates_risk_mnist}
\end{table}
\begin{table}[ht]
\label{tab:exp3}
\centering
\begin{tabular}{c c c c c c}
\specialrule{1.5pt}{0pt}{0pt}
 \multirow{2}{*}[-10ex]{\rotatebox{90}{Proj.}} & & $\tau=1 $ & $\tau=0.7$ & $\tau=0.5$ & $\tau=0.2$ \\
\specialrule{1.5pt}{0pt}{0pt}
& Bao et al. & 3.0212 & 3.7607 & 5.0141 & 12.5462 \\
& Th. 3 (ours) & 3.0212 & 3.7607 & 4.8096 & 4.5996 \\
\midrule
& Sup. Loss & 1.5453 & 1.5317 & 1.5063 & 1.5163 \\
& top-1 & 0.8874 & 0.9057 & 0.9425 & 0.9418 \\
\midrule 
& Sup. Loss & 1.4783 & 1.4769 & 1.4732 & 1.4793 \\
& top-1 & 0.9779 & 0.9809 & 0.9829 & 0.9778 \\
\specialrule{1.5pt}{0pt}{0pt}
\end{tabular}
\caption{Comparison of upper bounds on downstream classification loss with MNIST.}
\label{tab:bound_mnist}
\end{table}

\section{KL Divergence}

The KL divergence between one-dimensional Gaussian distributions is given by: $\operatorname{KL}\left(\operatorname{Gauss}\left(\mu_1, b_1\right) \| \operatorname{Gauss}\left(\mu_0, b_0\right)\right) = \frac{1}{2} \left(\log \left(\frac{b_0}{b_1}\right) + \frac{\left(\mu_1 - \mu_0\right)^2}{b_0} + \frac{b_1}{b_0} - 1\right)$
For multi-dimensional Gaussian distributions with diagonal covariance matrices, the KL divergence is the sum of the KL divergences of the independent components \cite{perez2021tighter}.
\section{Detailed Proof of Lemma 3.20}

Below, we provide a detailed proof of \cref{lem:second-bound}. Specifically, we aim to show that
\[
\min_{W \in \mathbb{R}^{C \times d}} L_{\mathrm{CE}}(f, W) \leq \sigma + \tau L(f) + \tau \Delta + \alpha \, ,
\]
where $\alpha = \log(C) + \min \left\{0, \log\left(C \pi^* \cosh^2(1)\right) - \tau \Delta \right\}$.
\begin{proof}
On one hand, using two applications of \eqref{eqn:smooth-max}, we obtain an upper bound on the term $\operatorname{LSE}(\mathbf{u})$ with $\mathbf{u} := \left\{f(x)^{\top}f(x^{\prime})\right\}_{x^{\prime} \in X}$:
\begin{equation}
  \operatorname{LSE}(\mathbf{u}) =  \operatorname{LSE}\left(\tau \cdot \frac{\mathbf{u}}{\tau}\right) 
  \leq \tau \max_i z_i + \log(K) 
  \leq \tau \operatorname{LSE}(\mathbf{z}) + \log(K),
\end{equation}
where the first inequality uses $t = \tau$ and the second one uses $t = 1$. This yields:
\begin{equation}
\label{eq:app-lse-lower-bound}
\operatorname{LSE}(\mathbf{z}) \geq \frac{1}{\tau} \left( \operatorname{LSE}(\mathbf{u}) - \log(K) \right).
\end{equation}
We obtain:
\begin{align*}
L(f)
&\overset{(a)}{\geq} - \frac{\sigma}{\tau} - \mathbb{E}_{(x, y)} \left[ \frac{f(x)^{\top} \mu_{y}}{\tau} \right] 
+ \mathbb{E}_x \mathbb{E}_{\substack{\{(x_i^{\prime}, y_i^{\prime})\}_{i=1}^K}} \left[ \operatorname{LSE}(\mathbf{z}) \right] \\
&\overset{(b)}{\geq} - \frac{\sigma}{\tau} - \mathbb{E}_{(x, y)} \left[ \frac{f(x)^{\top} \mu_{y}}{\tau} \right] 
+ \frac{1}{\tau} \mathbb{E}_x \mathbb{E}_{\substack{\{(x_i^{\prime}, y_i^{\prime})\}_{i=1}^K}} \left[ \operatorname{LSE}(\mathbf{u}) \right] - \frac{\log(K)}{\tau} \\
&\overset{(c)}{\geq} - \frac{\sigma}{\tau} - \mathbb{E}_{(x, y)} \left[ \frac{f(x)^{\top} \mu_{y}}{\tau} \right] 
+ \frac{1}{\tau} \mathbb{E}_x \mathbb{E}_{\substack{\{y_i^{\prime}\}_{i=1}^K}} \left[ \operatorname{LSE}(\mathbf{u}^{\mu}) \right] - \frac{\log(K)}{\tau},
\end{align*}
where: (a) follows from the sequence of inequalities in \cref{eq:lem-first-part-a} to \cref{eq:lem-first-part-d}; (b) follows by applying \cref{eq:app-lse-lower-bound}; (c) follows from an application of Jensen's inequality with $\mathbf{u}^{\mu} := \left\{ f(x)^{\top} \mu_{y^{\prime}} \right\}_{y^{\prime} \in Y}$.

Then, we derive a lower bound on $\mathbb{E}_{Y} \left[ \operatorname{LSE}(\mathbf{u}^{\mu}) \right]$:
\begin{align*}
\mathbb{E}_{Y} \left[ \operatorname{LSE}(\mathbf{u}^{\mu}) \right]
&\overset{(a)}{\geq} -\mathbb{E}_{Y} \left[ \operatorname{LSE}(-\mathbf{u}^{\mu}) \right] + 2\log(K) \\
&\overset{(b)}{\geq} -\operatorname{LSE}\left( \left\{ -f(x)^{\top} \mu_{c} \right\}_{c \in \mathcal{C}} \right) + \log(K) - \log(\pi^*) \\
&\overset{(c)}{\geq} \operatorname{LSE}\left( \left\{ f(x)^{\top} \mu_{c} \right\}_{c \in \mathcal{C}} \right) - 2\log\left(C \cosh(1)\right) + \log(K) - \log(\pi^*) \\
&\overset{(d)}{=} \operatorname{LSE}\left( \left\{ f(x)^{\top} \mu_{c} \right\}_{c \in \mathcal{C}} \right) - \log\left( \frac{C^2 \pi^*}{K} \cosh^2(1) \right),
\end{align*}
where (a) uses the left-hand side of \cref{lem:bao} for $N = K$ and $L = 1$, (b) is a direct application of \cref{eq:lem-second-part-b} to \cref{eq:lem-first-part-e}, using $\mathbf{u}^{\mu}$ instead of $\mathbf{z}^{\mu}$ (they differ only by a scaling factor $\tau$), (c) applies the right-hand side of \cref{lem:bao} for $N = C$ and $L = 1$, (d) follows by simplifying the previous line. 

Combining this with the earlier bound, we obtain:
\begin{align*}
L(f)
&\geq {-} \frac{\sigma}{\tau}{-} \mathbb{E}_{(x, y)} \left[ \frac{f(x)^{\top} \mu_{y}}{\tau} \right]
{+} \frac{1}{\tau} \operatorname{LSE}\left( \left\{ f(x)^{\top} \mu_{c} \right\}_{c \in \mathcal{C}} \right) 
{-} \frac{1}{\tau} \log\left( \frac{C^2 \pi^*}{K} \cosh^2(1) \right) 
{-} \frac{\log(K)}{\tau} \\
&= - \frac{\sigma}{\tau} + \frac{1}{\tau} L_{\mathrm{CE}}(f, W^{\mu}) 
- \frac{1}{\tau} \log\left( C^2 \pi^* \cosh^2(1) \right).
\end{align*}

Thus, we obtain the following bound on the mean classifier:
\begin{equation}
\label{eq:bound-one-improved}
L_{\mathrm{CE}}(f, W^{\mu}) \leq \sigma + \tau L(f) + \tau \Delta + \log(C) + \log\left( C \pi^* \cosh^2(1) \right) - \tau \Delta.
\end{equation}

On the other hand, applying \eqref{eqn:smooth-max} twice, we obtain an upper bound on the term $\operatorname{LSE}(\mathbf{u}^c)$, where $\mathbf{u}^{c} := \left\{ f(x)^{\top} \mu_{c} \right\}_{c \in \mathcal{C}}$:
\begin{equation}
  \operatorname{LSE}(\mathbf{u}^c) = \operatorname{LSE}\left( \tau \cdot \frac{\mathbf{u}^c}{\tau} \right) 
  \leq \tau \max_i z_i^{c} + \log C 
  \leq \tau \operatorname{LSE}(\mathbf{z}^c) + \log C,
\end{equation}
where we define $\operatorname{LSE}(\mathbf{z}^c)$ with $\mathbf{z}^{c} := \left\{ \frac{f(x)^{\top} \mu_{c}}{\tau} \right\}_{c \in \mathcal{C}}$. This yields
\begin{equation}
\label{eq:bis-lse-lower-bound}
\tau \operatorname{LSE}(\mathbf{z}^{c}) \geq \operatorname{LSE}(\mathbf{u}^{c}) - \log C.
\end{equation}
We obtain:
\begin{align*}
L(f)
&\overset{(a)}{\geq} -\frac{\sigma}{\tau} - \mathbb{E}_{(x, y)} \left[ \frac{f(x)^{\top} \mu_{y}}{\tau} \right] 
+ \mathbb{E}_{x} \mathbb{E}_{\substack{\{(x_i^{\prime}, y_i^{\prime})\}_{i=1}^K}} \left[ \operatorname{LSE}(\mathbf{z}^\mu) \right] \\
&\overset{(b)}{\geq} -\frac{\sigma}{\tau} - \mathbb{E}_{(x, y)} \left[ \frac{f(x)^{\top} \mu_{y}}{\tau} \right] 
+ \operatorname{LSE}(\mathbf{z}^c) - \Delta \\
&\overset{(c)}{\geq} -\frac{\sigma}{\tau} - \mathbb{E}_{(x, y)} \left[ \frac{f(x)^{\top} \mu_{y}}{\tau} \right] 
+ \frac{1}{\tau} \operatorname{LSE}(\mathbf{u}^c) - \frac{\log C}{\tau} - \Delta
\end{align*}
where: (a) follows from the sequence of inequalities from \cref{eq:lem-first-part-a} to \cref{eq:lem-first-part-e}; (b) follows from applying \cref{eq:lem-second-part-a} to \cref{eq:lem-second-part-g}; (c) follows by applying \cref{eq:bis-lse-lower-bound}. We obtain
\begin{equation}
\label{eq:bound-two-improved}
L_{\mathrm{CE}}(f, W^{\mu}) \leq \sigma + \tau L(f) + \tau \Delta + \log C.
\end{equation}
Finally, taking the minimum over the two refined bounds \cref{eq:bound-one-improved} and \cref{eq:bound-two-improved}, and then over all linear classifiers completes the proof.
\end{proof}

\section{Extensions}

Throughout the paper, we have noted several directions for generalizing our results to better align with the original SimCLR framework. In this section, we elaborate on these remarks and provide explicit derivations to support each extension.

\subsection*{Remark 2.1}

We now elaborate on the statement made in \cref{rem:first}. Specifically, we detail the formulation of the original SimCLR loss introduced in \cite{chen2020simple}, applied over a dataset \( S \sim \mathcal{S}^n \):
\begin{equation}
\label{eq:original-simclr}
\widehat{L}_{S}(f) = \frac{1}{n} \sum_{i=1}^n 
\frac{\ell_{\operatorname{cont}}(x_i, x_i^+, X_i^-) + \ell_{\operatorname{cont}}(x_i^+, x_i, X_i^-)}{2} 
= \frac{1}{n} \sum_{i=1}^n \ell(x_i, x_i^+, X_i^-),
\end{equation}
where we recall that the set of negatives is given by $X_i^- = \bigcup_{j \ne i} \{x_j, x_j^+\}$.

\subsection*{Remark 3.3}

We now provide details on how the proof of \cref{lem:bda-simclr} adapts to the loss defined in \cref{eq:original-simclr}, see \cref{rem:second}. The extension requires only minor modifications:
\begin{enumerate}
    \item The set \( X_i \) is replaced with the full set of \( 2(m-1) \) negative samples, denoted \( X_i^- \).
    \item The set \( N_i \) is replaced with \( N_i^- = X_i^-\!\setminus\!\{x_i, x_i^+\} \).
    \item The quantities \( \kappa \), \( a \), and \( b \) are redefined appropriately.
\end{enumerate}
As a consequence, we obtain the following upper bound:
\[
\left| \ell_{\operatorname{cont}}(x_i, x_i^+, X_i^-) - \ell_{\operatorname{cont}}(x_i', x_i^{\prime+}, X_i^-) \right| \leq \frac{4}{\tau}.
\]
Let \( \widetilde{X}_j^- \) denote a perturbed version of the negative set \( X_j^- \). Then:
\[
\left| \ell_{\operatorname{cont}}(x_j, x_j^+, X_j^-) - \ell_{\operatorname{cont}}(x_j, x_j^+, \widetilde{X}_j^-) \right|
= \left| \log \frac{\kappa + \operatorname{sim}(x_j, x_i) + \operatorname{sim}(x_j, x_i^+)}{\kappa + \operatorname{sim}(x_j, x_i') + \operatorname{sim}(x_j, {x_i'}^+)} \right|
\leq \log \frac{\kappa + a}{\kappa + b},
\]
where \( \kappa = \operatorname{sim}(x_j, x_j^+) + \sum_{x' \in N_i^-} \operatorname{sim}(x_j, x') \), \( a = 2e^{1/\tau} \) and \( b = 2e^{-1/\tau} \). Since \( \kappa \) includes \( 2m - 3 \) terms, it is lower-bounded as $\kappa \geq (2m - 3) e^{-1/\tau}$. Thus, we can further bound:
\[
\log \frac{\kappa + a}{\kappa + b}
\leq \log \frac{(2m - 3)e^{-1/\tau} + 2e^{1/\tau}}{(2m - 3)e^{-1/\tau} + 2e^{-1/\tau}}
= \log \frac{(2m - 3) + 2e^{2/\tau}}{2m - 1}.
\]
Arguing similarly to the proof of \cref{lem:bda-simclr}  we conclude that
\[
|\delta_i(\bar{x}_i)| \leq \frac{4}{\tau} \quad \text{and} \quad |\delta_j(\bar{x}_i)| \leq \log \frac{(2m - 3) + 2e^{2/\tau}}{2m - 1}.
\]
Combining these yields $ \frac{4}{\tau} + (m - 1) \log \frac{(2m - 3) + 2e^{2/\tau}}{2m - 1} $.

\subsection*{Remark 3.9}

We now detail how the proof of \cref{thm:simclr-kl-pb} extends to the loss defined in \cref{eq:original-simclr}; see also \cref{rem:third}. In this extension, the set $X$ is replaced by $X^-$, and we therefore require a concentration bound on the quantity
\[
S(x, X^-) := \sum\limits_{x^{\prime} \in X^-} \operatorname{sim}(x, x^{\prime}) 
= \sum\limits_{x^{\prime} \in X} \left(\operatorname{sim}(x, x^{\prime}) + \operatorname{sim}(x, {x^{\prime}}^+)\right).
\]
This expression shows that $S(x, X^-)$ is a sum of $m - 1$ independent and bounded random variables, each lying within the interval \([2e^{-1/\tau}, 2e^{1/\tau}]\). We apply Hoeffding’s inequality with range parameter $c = 2\left(e^{1/\tau} - e^{-1/\tau}\right)$, and obtain the same concentration bound as in \cref{lem:one-kl}, with the only difference being that the term $\varepsilon$ is now given by $\varepsilon = 2\left(e^{1/\tau} - e^{-1/\tau}\right) \sqrt{\frac{m - 1}{2} \log \frac{1}{\delta}},$ which is multiplied by a factor of 2 compared to the previous result.

\subsection*{Remark 3.16}

As discussed in \cref{rem:fourth}, \cref{thm:downstream} can be extended to include a simple projection head. Specifically, we first apply the theorem to the projected features \( f_1 \), which yields
\begin{equation*}
    \min_{W^{(1)} \in \mathbb{R}^{C \times k}} L_{\mathrm{CE}}(f_1, W^{(1)}) \leq \min 
    \left\{
        \beta(f_1, \sigma),~ 
        \tau \beta(f_1, \sigma) + \alpha
    \right\},
\end{equation*}
where \( \alpha \), \( \sigma \), and \( \beta \) are defined in \cref{thm:downstream}. Now, since \( L_{\mathrm{CE}}(f_1, W^{(1)}) = L_{\mathrm{CE}}(f, \tilde{W}) \) for 
\[
\tilde{W} = 
\begin{bmatrix}
    W^{(1)} \\
    0_{(d-k) \times C}
\end{bmatrix},
\]
the bound also holds for any \( W \in \mathbb{R}^{C \times d} \) such that \( L_{\mathrm{CE}}(f, W) \leq L_{\mathrm{CE}}(f, \tilde{W}) \). In particular, this includes the minimizer over all such \( W \). Therefore, we conclude:
\begin{equation*}
    \min_{W \in \mathbb{R}^{C \times d}} L_{\mathrm{CE}}(f, W) \leq \min 
    \left\{
        \beta(f_1, \sigma),~ 
        \tau \beta(f_1, \sigma) + \alpha
    \right\}.
\end{equation*}

\subsection*{Remark 3.17}

As mentioned in \cref{rem:fifth}, the proof of \cref{thm:downstream}  can be directly extended to the loss defined in \cref{eq:original-simclr} by replacing $X$ with $X^-$ and setting $K = 2(m - 1)$ throughout the derivation. Note that the proof remains unchanged, as it does not rely on any independence assumptions regarding the negative samples. 

\newpage


\end{document}